\title{Streaming Algorithms for Stochastic Multi-armed Bandits}
\author[1]{Arnab Maiti}
\author[2]{Vishakha Patil}
\author[2]{Arindam Khan}
\affil[1]{Department of Computer Science and Engineering, Indian Institute of Technology, Kharagpur, India, \texttt{maitiarnab9@gmail.com}}
\affil[2]{Department of Computer Science and Automation, Indian Institute of Science, Bangalore, India, 
	 \texttt{patilv@iisc.ac.in}, \texttt{arindamkhan@iisc.ac.in}}
\date{}
\newtheorem{thm}{Theorem}
	\newtheorem{lem}[thm]{Lemma}
	\newtheorem{cor}[thm]{Corollary}
	\newtheorem{obs}{Observation}
	\newtheorem{definition}{Definition}
\definecolor{olive}{RGB}{50,190,30}
\newcommand{\MAB}{\textsc{MAB} }
\newcommand{\mab}{\textsc{MAB}}
\newcommand{\KL}{\mathtt{KL}}
\newcommand{\omegsamp}{\mathtt{\Omega}}
\newcommand{\ilog}{\mathtt{ilog}}
\newcommand{\arm}{\mathtt{arm}}
\newcommand{\mean}{\mu}
	\newcommand{\ari}[1]{\textcolor{Red}{#1}}
	 \newcommand{\vis}[1]{\textcolor{OliveGreen}{#1}}
	 \newcommand{\arn}[1]{\textcolor{NavyBlue}{#1}}
	  \def\rem#1{{\marginpar{\raggedright\scriptsize #1}}}
	  \newcommand{\arir}[1]{\rem{\textcolor{Red}{$\bullet$ #1}}}
	  \newcommand{\vish}[1]{\rem{\textcolor{OliveGreen}{$\bullet$ #1}}}
	  \newcommand{\arnr}[1]{\rem{\textcolor{NavyBlue}{$\bullet$ #1}}}
	  \newcommand{\ari}[1]{#1}
	  \newcommand{\vis}[1]{#1}
	  \newcommand{\arn}[1]{#1}
	  \newcommand{\arir}[1]{}
	  \newcommand{\vish}[1]{}
	  \newcommand{\arnr}[1]{}
\newcommand\ceil[1]{\lceil#1\rceil}
\begin{document}
\maketitle

\begin{abstract}
We study the Stochastic Multi-armed Bandit problem under bounded arm-memory. In this setting, the arms arrive in a stream, and the number of arms that can be stored in the memory at any time, is bounded. The decision-maker can only pull arms that are present in the memory.  
We address the problem from the perspective of two standard objectives: 1) regret minimization, and 2) best-arm identification. 

For   {\em regret minimization}, we settle an important open question by showing an almost tight hardness. We show $\Omega(T^{2/3})$ cumulative regret in expectation for arm-memory size of $(n-1)$, where $n$ is the number of arms. 

For {\em best-arm identification}, we study two algorithms.  
First, we present an $O(r)$ arm-memory $r$-round adaptive streaming algorithm to find an $\varepsilon$-best arm. 
In $r$-round adaptive streaming algorithm for  best-arm identification,  the arm pulls in each round are decided based on the observed outcomes in the earlier rounds. The best-arm is the output at the end of $r$ rounds. 
The upper bound on the sample complexity of our algorithm matches with the lower bound for any $r$-round adaptive streaming algorithm. Secondly, we present a heuristic to find the $\varepsilon$-best arm with optimal sample complexity, by storing only one extra arm in the memory.

 \end{abstract}

\section{Introduction}
\label{section:introduction}
The Stochastic Multi-armed Bandits (\mab) problem is a classical framework used to capture decision-making in uncertain environments. Starting with the seminal work of \cite{ROBBINS1952}, a significant body of work has been developed to address theoretical as well as practical aspects of the \MAB problem. See, e.g.  \cite{BUBECK2012} for a textbook treatment of the area. In addition to being theoretically interesting, the \MAB problem also finds many practical applications in multiple areas, including on-line advertising \citep{TRAN2014a}, crowd-sourcing \citep{TRAN2014b}, and clinical trials \citep{CHAKRAVORTY2014}. Hence, the study of \MAB and its variants is of central interest in multiple fields, including online learning and reinforcement learning.

In the \MAB setting, a decision-maker is faced with $n$ choices (called \emph{arms}) and has to sequentially choose one of the $n$ arms (referred to as \emph{pulling} an arm). Based on the pulled arm, the decision-maker gets a reward drawn from a corresponding reward distribution that is unknown to the decision-maker. The \MAB problem has been extensively studied with one of the following two goals: \emph{regret minimization} and \emph{best-arm identification}. 
In the regret minimization literature, several algorithms such as UCB1 \citep{AUER2002}, Thompson Sampling \citep{THOMPSON1933,AGRAWAL12}, and KL-UCB \citep{Garivier2011} have been proposed whose regret bounds are within a constant factor of the optimal regret \citep{LAI1985}. Algorithms for the best-arm identification problem such as the Median-Elimination algorithm by \cite{EVENDAR2002} have optimal (upto constants) sample complexity for this problem. 

Each of the algorithms mentioned above needs to store the reward statistics (e. g. the number of pulls and mean reward of an arm observed so far) of \emph{all} the arms in the memory. In many of the applications of the \MAB problem, the number of arms (set of advertisements, crowd-workers, etc.) could be very large and the algorithm may not be able to simultaneously store all the arms in the memory. Additionally, the arms could arrive online, i.e., the algorithm may not have access to the entire set of arms at the beginning. The streaming model, first formalized in the seminal work of \cite{ALON1996}, has been developed to handle data streams where the data arrives online and an algorithm has access to only a limited amount of memory.  In this work, we study a setting where an algorithm can store statistics from only a fixed number of arms $m < n$, where $m$ is called the space complexity of the algorithm. Now, our revised goal is to study the trade-off between \emph{space complexity} vs. \emph{expected regret} and \emph{space complexity} vs. \emph{sample complexity}, respectively.


We follow a standard model in this setup and  address both regret minimization and best-arm identification under streaming constraints. Here, $n$ arms arrive one by one in a stream, and we have bounded arm-memory of $m$, i.e., at any time step at most $m$ ($<n$) number of arms can be stored in the memory. We call $m$ the space complexity of the algorithm. 
At any time step $t$, the algorithm can only pull an arm that is currently in memory and then, if needed, the algorithm can choose to discard some of the arms that are currently in memory. If, at some time step $t$,  the number of arms in memory is less than $m$, then the algorithm can choose to store upto total $m$ arms in the memory. Note that, if the arm that is read into memory had previously been in memory and was subsequently discarded before being read back into memory, then the algorithm does not have access to any of the previous reward statistics of the arm. In streaming terminology, algorithms that are allowed to read back an arm that was previously discarded from memory are called \emph{multi-pass} algorithms. Otherwise, they are called \emph{single-pass} algorithms.

In one of the earliest works that studies the \MAB problem with bounded arm-memory, \cite{HERSCHKORN1996} study the infinite-armed \MAB problem where only a single arm is stored in the memory at any time step and an arm that is once discarded from the memory cannot be recalled. The work of \cite{Lu2011} studies a different variation of bounded memory in the learning from experts setting, where the constraint is on the number of time steps for which the rewards can be stored by the algorithm  and not on the number of arms. In our work, we focus on the finite-armed \MAB problem. Recently, the \MAB problem where arms arrive in a stream and arm-memory is bounded, has been studied in both regret minimization and best-arm identification frameworks.

The work of \cite{LIAU2018} and \cite{CHAUDHURI2019} studies the \MAB problem with bounded arm-memory to minimize the expected cumulative regret over $T$ time steps. The algorithms in both these works are multi-pass algorithms, i.e., they assume that an arm discarded from the memory can be again read back into the memory later. \cite{LIAU2018} propose an upper-confidence bound based algorithm with $O(1)$ space complexity, which achieves an expected cumulative regret bound of $O\big(\sum_{i\neq i^*} \log(\frac{\Delta_i}{\Delta})\frac{\log T}{\Delta_i}\big)$, which is within $\log(\frac{\Delta_i}{\Delta})$ factor of the UCB-1 regret bound (\cite{AUER2002}). Depending on the instance, the regret of this algorithm can be very high. \cite{CHAUDHURI2019} propose an algorithmic framework, which is given $m$ as input and uses a \MAB algorithm as a black-box. When the \MAB algorithm used is UCB-1, their algorithm achieves expected regret of $O(nm + n^{3/2}m\sqrt{T \log(T/nm)})$.

The recent work of \cite{ASSADI2020} studies the best-arm identification variant of this problem. First, they propose an algorithm for the best-coin identification problem (equivalent to \MAB with Bernoulli reward distributions), which keeps exactly one extra coin in the memory at any time and has optimal sample complexity. They further extend their algorithm to the top-$k$ coins identification problem, which stores $k$ coins in the memory and has optimal sample complexity. Crucially though, both algorithms assume that the gap parameter $\Delta$, which is the difference in the expected rewards of the best and the second-best coins, is known to the algorithm. Throughout our work, we deal with the case when $\Delta$ is not known to the algorithm.

\subsection{Our Contribution}
We study the \MAB problem under {\em bounded arm-memory}, where $n$ arms arrive in a stream and at most $m < n$ arms can be stored in the memory at any time. In this work, we study the trade-off between space complexity vs. expected regret and space complexity vs. sample complexity. \\

\noindent{\bf Regret minimization:}
Our first result settles an open question stated in both (\cite{LIAU2018} and \cite{CHAUDHURI2019}) pertaining to the lower bound on the expected cumulative regret in this model. Using information-theoretic machinery related to $\KL$-divergence, we show that any single-pass algorithm in this model will incur an expected regret of $\Omega\big({T^{2/3}}/{m^{7/3}}\big)$. Interestingly, this result holds for any $m < n$ which shows that even if the algorithm is allowed to store $n-1$ arms in memory at any time, we cannot hope to get a better regret guarantee. This almost matches with the  $\tilde{O}(T^{2/3})$ bound on the expected cumulative regret, obtained by the standard uniform-exploration algorithm. \\ 

\noindent{\bf Best-arm identification:}
We propose an $r$-round ($1\le r \le \log^*n$) adaptive $(\varepsilon,\delta)$-PAC streaming algorithm.
Adaptive algorithms are well-studied in active learning. In $r$-round adaptive streaming algorithm, the arm pulls in each round is decided based on the observed outcomes in the previous rounds, and the best-arm is then output at the end of $r$ rounds. 
Our algorithm stores $O(r)$ arms in memory at any time, and its sample complexity asymptotically matches with the lower bound for any $r$-round adaptive algorithm by \cite{AGARWAL2017}. In particular, when $r = \log^* n$, our algorithm achieves the optimal worst-case sample complexity $O\big(\frac{n}{\varepsilon^2}\log(1/\delta)\big)$ for any best-arm identification algorithm (\cite{EVENDAR2002}) and has space complexity $O(\log^* n)$.

This problem was also studied by \cite{ASSADI2020} and their algorithm was claimed to be an $(\varepsilon,\delta)$-PAC algorithm with optimal sample complexity and $O(\log^* n)$ space complexity. However, we show that due to an oversight in their analysis, the algorithm of \cite{ASSADI2020} is  \emph{not} $(\varepsilon,\delta)$-PAC. In Appendix \ref{subsec:adversarial example}, we construct a family of input instances for which the algorithm will output a non-$\varepsilon$-best arm with probability significantly larger than $\delta$. We note here that for the special case when $r = \log^* n$, our algorithm does provide the guarantees claimed in  \cite{ASSADI2020}.

This leads us to the question of finding  $(\varepsilon,\delta)$-PAC guarantee with optimal sample complexity while using only $O(1)$ arm-memory. Towards this, we propose an algorithm that stores \emph{exactly one} extra arm in the memory. We then show that under the assumption of random-order arrival of arms, our algorithm outputs an $\varepsilon$-best arm with high confidence when the expected rewards of the arms are drawn from some standard distributions. We conclude by experimentally showing that our algorithm performs well on randomly generated input without any assumptions on the arrival order of the arms.

\subsection{Notation}
Let $[k]$ (where $k \in \mathbb{N}$) denote the set $\{1,2,\ldots, k\}$. Let $\log$ denote the binary logarithm. For integers $r\ge0$, and $a\ge1$, $\ilog^{(r)}(\cdot)$ denotes the iterated logarithm of order $r$, i.e., $\ilog^{(r)}(a)=\max\{\log(\ilog^{(r-1)}(a)),1\}$ and $\ilog^{(0)}(a)=a$. Hence, $\ilog^{(\log^* n)}(n)=1$. Let $\mathbb{P}$, $\mathbb{E}$  denote probability and expectation, respectively. 
\section{Model and Problem Definition}
\label{section:model}
An instance of the \MAB problem is defined as the tuple $\langle n, (\mean_i)_{i\in [n]} \rangle$, where $n$ is the number of arms. A pull of $\arm_i$ gives a reward in $[0,1]$ drawn from a distribution with mean $\mean_i \in [0,1]$ that is unknown to the decision-maker beforehand. We study this problem in a \emph{bounded arm-memory} setting where the arms arrive in a stream, and at any time-step, the algorithm can only store a subset of the arms in memory. Any arm that the algorithm wants to pull, either immediately or in the future, has to be present in the memory. An arm that is not present in the memory cannot be pulled.

In the literature, \MAB problems have been  studied with the following objectives: 1) regret minimization, and 2) best-arm identification. Next, we formalize these two notions and their adaptation to our setting.

The regret of a \MAB algorithm can be thought of as the loss suffered by it due to not knowing the reward distributions of the arms beforehand. Let $i^* = \arg\max_{i\in[n]}\mean_i$. Then $\arm_{i^*}$ is the best arm and let $\mean^* = \mean_{i^*}$. The cumulative regret (also called the \emph{pseudo-regret}) of an algorithm over $T$ time-steps is defined as follows:

\begin{definition}[Cumulative Regret]
	Given an instance $\langle n, (\mean_i)_{i\in [n]} \rangle$ of the \MAB problem, the cumulative regret of an algorithm after $T$ rounds is defined as
	$R(T) = \mean^*\cdot T - \sum_{t=1}^T \mean_{i_t}$, 
	where $\arm_{i_t}$ is the arm pulled by the algorithm at time $t \in [T]$, and $\mean^*=\max_{i\in[n]}\mean_i$.
\end{definition}
The expected cumulative regret of an algorithm is defined as $\mathbb{E}[R(T)] = \mean^*\cdot T - \sum_{t=1}^T\mathbb{E}[\mean_{i_t}]$, where the expectation is over the randomness in the algorithm and the distribution of rewards. In the model with bounded arm-memory, the goal is to minimize expected cumulative regret while storing at most $m$ $(< n)$ arms in memory at any time-step. Note that, popular algorithms such as UCB-1 (\cite{AUER2002}) and Thompson sampling (\cite{THOMPSON1933}) store all $n$ arms in memory, i.e., they have space complexity $O(n)$. 

For best-arm identification, the goal of a decision-maker is to output the best arm $\arm_{i^*}$ using the minimum number of arm pulls. In practice, a relaxed goal is to find an arm which is close to the best arm in terms of the expected reward. We formalize this notion below.

\begin{definition}[$\varepsilon$-best arm]
	\label{def:epsilon-optimal arm}
	Given a parameter $\varepsilon \in (0,1)$, $\arm_i$ with mean reward $\mean_i$ is said to be an {\em $\varepsilon$-best arm} if $\mean_i \geq \mean^* - \varepsilon$. Otherwise we call the arm a {\em non-$\varepsilon$-best arm}.
\end{definition}
The reward gap of $\arm_i$ is defined as $\Delta_i = \mean^* - \mean_i$. Without loss of generality, we assume that the best arm is unique, i.e., $\Delta_i > 0$ for all $i \neq i^*$.
\begin{definition}[$(\varepsilon,\delta)$-PAC Algorithm]
	\label{def:PAC algorithm}
	Given an approximation parameter $\varepsilon \in [0,1)$ and a confidence parameter $\delta \in [0,1/2)$, an algorithm $\mathcal{A}$ is said to be an $(\varepsilon,\delta)$-PAC algorithm if it outputs an $\varepsilon$-best arm with probability at least $1 - \delta$.
\end{definition}

Traditionally, the goal in the best-arm identification problem is to design an $(\varepsilon,\delta)$-PAC algorithm that minimizes the total number of arm pulls. Under the  streaming setup, given bounded arm-memory $m$, the goal now is to find an $(\varepsilon,\delta)$-PAC algorithm that minimizes the total number of arm pulls while storing at most $m$ arms in the memory at any time.

Our best-arm identification algorithm in Section \ref{section:round algo} is an $r$-round adaptive streaming algorithm, where in each round $j \in [r]$, only a subset of the arms processed in round $j$ is sent to round $j+1$ and the rest of the arms are discarded from the memory. Additionally, once an arm is discarded, it cannot be pulled in any subsequent rounds, i.e., it is a single-pass algorithm. The set of arms to be sent to round $j+1$ is decided based only on the outcomes in rounds $1$ to $j$. Further, once an arm reaches round $j+1$, the number of times the arm will be sampled in round $j+1$ gets decided before the sampling begins. This number only depends on the round index, i.e., $j+1$ and the outcomes of the pulls of any arm up to the round $j$. All arms in the stream are pulled in round $1$ and the arm output after round $r$ is the best-arm guess of the algorithm. 

The $r$-round adaptive algorithm model was discussed in great detail by \cite{AGARWAL2017}. If $r=1$, then the algorithm is said to be {\em non-adaptive}. The algorithm is said to be {\em fully adaptive} if $r$ is unbounded. If the algorithm is fully adaptive then there is a potential to reduce the sample complexity but the downside of full-adaptivity is that such algorithms are highly sequential. This is because the set of arms to be sampled in a given round can only be determined after we observe the outcomes of pulls of the arms up to the previous round. In contrast, algorithms with only a few rounds of adaptivity enable us to enjoy the benefits of parallelism. 

Some of our results in Section \ref{section: new algo} hold for random-order arrival of arms, which we define next. Let $\langle n, (\mean_i)_{i\in[n]} \rangle$ be an instance of the \MAB problem. Let, $\sigma: [n] \rightarrow [n]$ be a permutation and let $(\arm_{\sigma(i)})_{i\in[n]}$ be the ordering of $(\arm_i)_{i\in[n]}$ under $\sigma$. Define $\mathcal{S}_n = \{\sigma : \sigma \text{~is a permutation of ~}[n]\}$. Under the random-order arrival model, we assume that the arrival order of the arms in the stream is determined by a permutation $\sigma \in \mathcal{S}_n$, which is drawn uniformly at random from the set $\mathcal{S}_n$. The arms arrive in the order in which they appear in the tuple $(\arm_{\sigma(i)})_{i\in[n]}$, i.e., the first arm to arrive in the stream is $\arm_{\sigma(1)}$, followed by $\arm_{\sigma(2)}$, and so on. Random-order arrival is a well-studied model in optimization under uncertainty due to its connectiond with secretary problem and optimal stopping theory \citep{KarlinL15}. The algorithm of \cite{CHAUDHURI2019} also uses an analogous random shuffling of arms.

\section{Regret Minimization}
\label{sec:regretLower}

In this section, we study limitations of bounded arm-memory for regret minimization. 
An adaptation of uniform-exploration algorithm (see \cite{SLIVKINS2019})  achieves expected cumulative regret of $\tilde{O}(T^{2/3})$ with an arm-memory of two. The algorithm  keeps in memory one arm $\arm^*$, called the king, with the best empirical mean $\widehat{\mean}_{\arm^*}$ among the arms seen so far. Whenever a new arm $\arm_i$  arrives,  $\arm_i$ is sampled $(T/n)^{2/3} O(\log T)^{1/3}$ times to obtain its empirical mean $\widehat{\mean}_i$. Then $\widehat{\mean}_i$ is compared with $\widehat{\mean}_{\arm^*}$. If $\widehat{\mean}_{\arm^*} < \widehat{\mean}_i$, then $\arm_i$ becomes the new king, replacing $\arm^*$.  
After the algorithm tries out all the arms, it returns the king as the best-arm and continues to sample it for the rest of the time horizon.

A question left open in (Chaudhuri et al. \citeyear{CHAUDHURI2019}) is to provide a lower bound on the expected cumulative regret of an algorithm with bounded arm-memory. We settle the question by showing that any single-pass algorithm for such a setting incurs at least $\Omega(T^{2/3})$ regret.
Our result is based on $\KL$-divergence, which we define below.
\begin{definition}
Let $\mathtt{\Omega}$ be a finite sample space  and $p, q$  be two probability distributions on $\mathtt{\Omega}$.  $\KL$-divergence is defined as:
\[ \KL(p,q)=\sum_{x \in \mathtt{\Omega}} p(x) \ln(p(x)/q(x)) =\mathbb{E}_p[\ln(p(x)/q(x))].\]
\end{definition}

Now we state some fundamental properties of $\KL$-divergence that will be needed in this section. 

\begin{thm}[\cite{SLIVKINS2019}]
\label{thm:klprop} 
$\KL$-divergence satisfies the following properties:
\begin{itemize}
\item {\em Pinsker's inequality:} For any event $A \subset \omegsamp$, we have $2(p(A)-q(A))^2 \le \KL(p,q)$.
\item {\em Chain rule for product distributions:} Let the sample space be a product $\omegsamp= \omegsamp_1 
\times \omegsamp_2 \times \ldots \times \omegsamp_t$. Let $p, q$ be two distributionas on $\omegsamp$ such that 
$p = p_1 \times p_2 \times \ldots \times p_t$ and $q = q_1 \times q_2 \times \ldots \times q_t$, where $p_j, q_j$ are distributions on $\omegsamp_j$,
for each $j \in [t]$. Then $\KL(p,q)=\sum_{j=1}^t \KL(p_j, q_j)$.
\item {\em Random coins:}
Let $\mathtt{B}_\epsilon$ denote a Bernoulli distribution with mean $(1+\epsilon)/2$.
Then $\KL(\mathtt{B}_\epsilon, \mathtt{B}_0) \le 2 \epsilon^2$ and $\KL(\mathtt{B}_0, \mathtt{B}_\epsilon) \le \epsilon^2$, for all $\epsilon \in (0,1/2)$.
\end{itemize}
\end{thm}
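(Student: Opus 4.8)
The plan is to establish the three properties separately, in increasing order of difficulty. The \emph{chain rule} is a direct computation: writing $x = (x_1,\dots,x_t)\in\omegsamp$, the product structure gives $\ln\frac{p(x)}{q(x)} = \sum_{j=1}^t \ln\frac{p_j(x_j)}{q_j(x_j)}$, and substituting this into the definition of $\KL(p,q)$ and interchanging the finite sums, I would pull out, for each $j$, the factor $\prod_{k\neq j}\big(\sum_{x_k\in\omegsamp_k} p_k(x_k)\big) = 1$, which collapses the whole expression to $\sum_{j=1}^t \KL(p_j,q_j)$. For the \emph{random coins}, $\mathtt{B}_\epsilon$ puts mass $(1+\epsilon)/2$ and $(1-\epsilon)/2$ on the two outcomes, so the likelihood ratio $\mathtt{B}_\epsilon/\mathtt{B}_0$ takes values $1\pm\epsilon$; for $\KL(\mathtt{B}_\epsilon,\mathtt{B}_0)$ I would apply $\ln y \le y-1$ inside the expectation, bounding it by $\sum_x \mathtt{B}_\epsilon(x)^2/\mathtt{B}_0(x) - 1 = \tfrac12\big((1+\epsilon)^2+(1-\epsilon)^2\big) - 1 = \epsilon^2 \le 2\epsilon^2$, and for $\KL(\mathtt{B}_0,\mathtt{B}_\epsilon)$ a direct evaluation gives $-\tfrac12\ln(1-\epsilon^2)$, which is at most $\tfrac12\cdot\tfrac{\epsilon^2}{1-\epsilon^2} \le \epsilon^2$ using $-\ln(1-u)\le\tfrac{u}{1-u}$ and $\epsilon^2<\tfrac14$.

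For \emph{Pinsker's inequality}, the plan is to first reduce to a two-point sample space. Partitioning $\omegsamp$ into $A$ and $\omegsamp\setminus A$ and applying the log-sum inequality block-wise gives $\KL(p,q)\ge \KL(\bar p,\bar q)$, where $\bar p = (p(A),1-p(A))$ and $\bar q = (q(A),1-q(A))$, so it suffices to prove the scalar statement
\[
a\ln\frac{a}{b} + (1-a)\ln\frac{1-a}{1-b} \;\ge\; 2(a-b)^2, \qquad a,b\in[0,1].
\]
Fixing $a$ and letting $h(b)$ be the difference of the two sides, I would check that $h(a)=0$ and that $h'(b) = \tfrac{b-a}{b(1-b)} + 4(a-b) = (b-a)\big(\tfrac{1}{b(1-b)} - 4\big)$ has the sign of $b-a$ on $(0,1)$ — since $b(1-b)\le\tfrac14$ — so that $h$ attains its minimum value $0$ at $b=a$; the degenerate cases $b\in\{0,1\}$, where the left side is $+\infty$, are immediate. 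Combining the reduction with the scalar bound yields $2(p(A)-q(A))^2 \le \KL(\bar p,\bar q) \le \KL(p,q)$.

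I expect this last scalar inequality — in particular establishing the sign of $h'$ uniformly on $(0,1)$ and handling the boundary — to be the only step that needs genuine care; the chain rule is a bookkeeping exercise, and the random-coin bounds are one-line estimates once the likelihood ratio is written out. One could alternatively avoid the reduction step in Pinsker by differentiating $\KL(\bar p,\bar q) - 2(p(A)-q(A))^2$ in $q(A)$ directly, but the two-point reduction keeps the argument modular.
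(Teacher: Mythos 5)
Your proposal is correct: the chain-rule computation, the two coin bounds (the $\ln y\le y-1$ estimate giving $\varepsilon^2\le 2\varepsilon^2$ for $\KL(\mathtt{B}_\epsilon,\mathtt{B}_0)$, and $-\tfrac12\ln(1-\epsilon^2)\le\epsilon^2$ for the reverse direction), and the proof of Pinsker via the log-sum reduction to the two-point distributions $(p(A),1-p(A))$, $(q(A),1-q(A))$ followed by the scalar inequality with $h'(b)=(b-a)\bigl(\tfrac{1}{b(1-b)}-4\bigr)$ all check out. Note, however, that the paper does not prove this statement at all: it is imported verbatim as a known result from Slivkins (2019), so there is no in-paper argument to compare against; your write-up is simply a self-contained verification of the cited theorem, following the standard textbook proofs.
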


Our main result is the following theorem.

\begin{thm}
In the \MAB setting, fix the number of arms $n$ and the time horizon $T$. For any online \MAB algorithm, if we are allowed to store at most $m < n$ arms, then there exists a problem instance such that $\mathbb{E}[R(T)] \geq \Omega({T^{2/3}}/{m^{7/3}})$ 
\end{thm}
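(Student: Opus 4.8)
The plan is to exhibit a small family of hard instances and argue, via a $\KL$-divergence change-of-measure, that the single-pass bounded-memory constraint forces any algorithm into an ``explore-then-commit'' dilemma, for which an $\Omega(T^{2/3})$ barrier is unavoidable; the memory bound $m$ will enter only through the accounting.

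\emph{The instance family.} It suffices to build the instance on $n=m+1$ ``active'' arms and pad the stream with dummy mean-$\tfrac12$ arms afterward, since the statement only asks for \emph{some} instance with $m<n$. Fix a gap $\varepsilon\in(0,1/2)$ to be tuned at the end. For $j\in[n]$ let $\mathcal I_j$ be the instance in which $\arm_j$ pays $\mathtt B_{2\varepsilon}$ (Bernoulli with mean $\tfrac12+\varepsilon$) and every other arm pays $\mathtt B_0$ (mean $\tfrac12$); thus in $\mathcal I_j$ the unique best arm is $\arm_j$, with reward gap $\varepsilon$ over all others. Let $\mathcal I_0$ be the all-$\mathtt B_0$ instance. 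The adversary will choose $j$ to maximize regret, and we will lower-bound the average of $\mathbb E_{\mathcal I_j}[R(T)]$ over $j$.

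\emph{Why memory forces a commitment.} Because $m<n$ and the algorithm is single-pass, it can never simultaneously hold all of the first $m$ arms together with $\arm_n$; before it is permitted to read $\arm_n$ it must \emph{irrevocably} discard one of the arms it currently holds. In the instances $\mathcal I_1,\dots,\mathcal I_m$ the best arm is one of the first $m$ arms, so the algorithm must decide which first-batch arm to drop using only the pulls it has made on that batch so far --- it cannot ``come back'' to correct a mistake, and it cannot defer the decision forever without forgoing $\arm_n$, which is the best arm in $\mathcal I_n$. Consequently, the number of times an arm is pulled before it is dropped is effectively fixed in advance of learning whether that arm is the $\varepsilon$-superior one; this is the feature that rules out the usual logarithmic, fully adaptive rates and produces the $T^{2/3}$-type behaviour. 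Let $N_j$ be the number of pulls of $\arm_j$ before it is discarded (or before time $T$), and let $A_j$ be the event that $\arm_j$ is discarded and never pulled thereafter. On an execution that agrees in $\mathcal I_0$ and $\mathcal I_j$ up to that discard, the observation transcripts differ only in the $N_j$ outcomes of $\arm_j$, so by the chain rule together with the ``random coins'' bound of Theorem~\ref{thm:klprop}, the $\KL$-divergence between the two transcript laws is $O(\varepsilon^2\,\mathbb E_{\mathcal I_0}[N_j])$. Pinsker's inequality then gives $\mathbb P_{\mathcal I_j}[A_j\text{ and discard before }T/2]\ge \mathbb P_{\mathcal I_0}[\text{drop }\arm_j\text{ before }T/2]-O(\varepsilon\sqrt{\mathbb E_{\mathcal I_0}[N_j]})$, since under $\mathcal I_0$ a discarded arm is gone for good. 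On $A_j$ the algorithm plays only mean-$\tfrac12$ arms after the discard, so $\mathbb E_{\mathcal I_j}[R(T)]\ge \varepsilon\cdot\tfrac T2\cdot\mathbb P_{\mathcal I_j}[A_j\text{ and discard before }T/2]$.

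\emph{Balancing.} Averaging over $j$ and using $\sum_j\mathbb E_{\mathcal I_0}[N_j]\le T$ with Cauchy--Schwarz, the average regret is $\Omega(\varepsilon T)$ \emph{unless} the algorithm typically pulls a soon-to-be-discarded arm $\Omega(1/\varepsilon^2)$ times; but pulling the $\Theta(m)$ first-batch arms that many times costs $\Omega(m/\varepsilon^2)$ pulls of suboptimal arms, hence $\Omega(1/\varepsilon)$ regret once amortized over the family (and amplified by iterating the same dilemma over the irrevocable discards forced along the stream). Equating the ``commit-wrong'' term $\varepsilon T$ with the ``over-explore'' term and solving for $\varepsilon$ yields $\mathbb E[R(T)]=\Omega(T^{2/3})$, with the remaining $\mathrm{poly}(m)=m^{7/3}$ loss absorbing (i) the factor from the superior arm being one of $\Theta(m)$ candidates and (ii) the crude bookkeeping of pull counts versus horizon in the two steps above.

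\textbf{Main obstacle.} The delicate point is making the commitment argument uniform over \emph{all} single-pass algorithms: such an algorithm may interleave reads, pulls, and discards arbitrarily, may probe and re-order arms within memory, and may spread its pulls adaptively according to the rewards it has seen. The crux is to identify a discarded arm on which embedding the ``needle'' changes the transcript law by only $O(\varepsilon^2\,\mathbb E[N_j])$ in $\KL$, and then to argue that \emph{everything the algorithm does after that discard}, being a measurable function of that transcript, is almost unchanged when we switch from $\mathcal I_0$ to $\mathcal I_j$ --- so that, whenever the needle was under-explored, it is lost with probability bounded away from $0$.
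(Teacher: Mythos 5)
There is a genuine gap, and it is in the choice of instance family together with the final balancing step. In your family every gap is the same $\epsilon$: the needle placed on a first-batch arm and the needle placed on $\arm_n$ both have size $\epsilon$. Consequently the ``cost of deferring the irrevocable discard'' is charged at rate only $\epsilon$ per step (and only in the single instance $\mathcal I_n$), while the ``cost of a wrong commitment'' is $\epsilon T$. Work out the trade-off your own argument produces: to make the wrong-discard probability constant via Pinsker you need the discarded arm to have been pulled $o(1/\epsilon^2)$ times (in fact $O(1/(m^2\epsilon^2))$ to beat the $1/m$ averaging loss), and in your family the price of refusing to discard that early is only $\epsilon\cdot O(1/(m^2\epsilon^2))=O(1/(m^2\epsilon))$ regret, incurred in $\mathcal I_n$ alone. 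Balancing $\epsilon T/\mathrm{poly}(m)$ against $1/(\epsilon\,\mathrm{poly}(m))$ gives $\epsilon\asymp T^{-1/2}$ and a bound of order $\sqrt T/\mathrm{poly}(m)$, not $T^{2/3}$; your sentence ``equating the commit-wrong term $\varepsilon T$ with the over-explore term $\ldots$ yields $\Omega(T^{2/3})$'' does not go through, since $\epsilon T = 1/\epsilon$ balances at $\sqrt T$. The $T^{2/3}$ rate requires the exploration/deferral cost to scale like $1/\epsilon^2$, i.e.\ each time step spent before the first discard must cost $\Omega(1)$ regret in some instance of the family.

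This is exactly what the paper's construction supplies and your family lacks: besides the $m$ needle instances $\mathcal I_1,\dots,\mathcal I_m$ (needle of size $\epsilon/2$ on one of the first $m$ arms), the paper includes $\mathcal I_0$ in which the \emph{last} arm of the stream has mean $1$, a constant gap of $1/2$. Under $\mathcal I_0$, as long as none of the first $m$ arms has been discarded the algorithm cannot even read the mean-$1$ arm, so it pays $\Omega(1)$ per step; hence either it discards some first-batch arm within $L=1/(4m^2\epsilon^2)$ steps with probability at least $3/4$, or it pays $\Omega(L)=\Omega(1/(m^2\epsilon^2))$ under $\mathcal I_0$. In the first case an averaging argument picks an arm $i'$ discarded first with $P_0$-probability at least $3/(4m)$, the chain rule plus the random-coins bound give $\KL(P_0,P_{i'})\le 2\epsilon^2L=1/(2m^2)$, and Pinsker transfers the discard event to $\mathcal I_{i'}$ with probability at least $1/(4m)$, costing $\Omega(\epsilon T)$ there. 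Averaging over the $m+1$ instances and setting $\epsilon=m^{-1/3}T^{-1/3}$ balances $\epsilon T/m^2$ against $1/(m^3\epsilon^2)$ and yields $\Omega(T^{2/3}/m^{7/3})$. Your high-level mechanism (irrevocable discard, $\KL$ change of measure on the discarded arm's $N_j$ samples, Pinsker) matches the paper's, and your closing ``main obstacle'' is handled in the paper by the reduction to a deterministic algorithm that directly stores the first $m$ arms; but without an asymmetric instance like $\mathcal I_0$ (tiny needle for the commitment error, constant gap for the waiting cost) the construction cannot beat the standard $\sqrt T$ barrier, so the stated $T^{2/3}$ conclusion is not established by your argument.
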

\begin{proof}
	We consider $0$-$1$ rewards and the following family of problem instances $\{\mathcal{I}_j:j\in\{0,\ldots,m\}\}$ each containing $n$ arms, with parameter $\epsilon>0$ (where $\epsilon=\frac{1}{m^{1/3}T^{1/3}}$):\
	
	\begin{align*}
	\mathcal{I}_0= \left\{ \begin{array}{rcl}
	\mean_i &= 1/2,~~~~~~~  &\mbox{for~~} i\neq n;\\
	\mean_i &= 1,~~~~~~~~~~ & \mbox{for~~}  i=n.
	\end{array}\right.\\
	\forall j\in[m],~\mathcal{I}_j = \left\{ \begin{array}{rcl}
	\mean_i & =(1+\epsilon)/2,& \mbox{for~~}  i=j; \\ 
	\mean_i & = 1/2,~~~~~~~ &  \mbox{for~~}  i \neq j.
	\end{array}\right.
	\end{align*}
	\noindent In the above instances, $\mean_i$ denotes the expected reward of $\arm_i$, the $i$-th arm  to arrive in the stream.\\
	
Note that a deterministic algorithm that directly stores the first $m$ arms in the memory and has the least expected regret among all such deterministic algorithms, can not have a worse regret compared to any other algorithm that processes the first $m$ arms in some different manner. This is because the processing of any such algorithm can be replicated by an algorithm which directly stores the first $m$ arms.  So  we fix a deterministic algorithm $\mathcal{A}$ which directly stores the the first $m$ arms in the memory.
	 
We next set up the sample space. Let $L = {1}/({4m^2\epsilon^2})$. Further, let $(r_s(i): i\in[m], s \in [L])$ be a tuple of mutually independent Bernoulli random variables where $r_s(i)$ has expectation $\mean_i$. We interpret $r_s(i)$ as the reward obtained when $\arm_i$ is pulled for the $s$-th time and the tuple is called the \emph{rewards table}. The sample space is then expressed as $\omegsamp = \{0,1\}^{m\times L}$ and any $\omega \in \omegsamp$ can be interpreted as a realization of the rewards table.
	
Each instance $\mathcal{I}_j$, where $j \in \{0,\ldots, m\}$, defines a distribution $P_j$ on $\omegsamp$ as follows:
\begin{equation*}
P_j(A)= \mathbb{P}[A~|~\mathcal{I}_j], \text{~~~~~for each $A\subseteq \omegsamp$}
\end{equation*}
	
	Given an instance $\mathcal{I}_j$ where $j\in \{0,\ldots, m\}$, let $P_j^{i,s}$ be the distribution of $r_s(i)$ under this instance. Then we have that $P_j = \prod_{i\in[m],s\in [L]} P_{j}^{i,s}$.
	
	Let $S_{t} \subseteq \{\arm_1, \arm_2, \dots, \arm_m\}$ denote the subset of first $m$ arms which are discarded from memory till (and including) time step $t$ by the algorithm $\mathcal{A}$. If $\arm_i$ is discarded before the algorithm begins pulling arms, call this time step $0$, then we include $\arm_i$ in the set $S_0$ where $i\in[m]$. As time horizon $T$ is fixed and we will eventually discard all arms at the end of time horizon, we can assume that $S_T= \{\arm_1, \arm_2, \dots, \arm_m\}$. For all $\omega \in \omegsamp$, let $T'_\omega = \arg\min_{0\leq t \leq T}\{t : S_t \neq \emptyset\}$, i.e., $T'_{\omega}$ is the number of time steps since the beginning of the algorithm $\mathcal{A}$ when some arm in $\{\arm_1, \arm_2, \dots, \arm_m\}$ is discarded from memory for the first time. Let $A_1=\{\omega \in \omegsamp : ~T'_\omega\leq L\}$ be the set of reward realizations for which $T'_{\omega} \leq L$. Now fix some arm $i \in \{\arm_1, \arm_2, \dots, \arm_m\}$. Define $A_2^{i}=\{\omega \in \omegsamp: \arm_i \in S_{T'_\omega}\}$ to be the event that the $\arm_i$ belongs to $S_{T'_{\omega}}$. Now, let $A^{i} = A_1 \cap A_2^{i}$ be the set of reward realizations such that $\forall \omega \in A^i$, $T'_\omega \leq L$ and $\arm_i$ is discarded from memory at the time step $T'_\omega$. Also, for any event $A\subseteq \omegsamp$, let $\overline{A}=\omegsamp \setminus A$. 	

Now we have the following observation for instance  $\mathcal{I}_0$. 
\begin{obs}
\label{obs1}
If $\omega \in \overline{A_1}$, then the algorithm $\mathcal{A}$ would incur regret of $\Omega(\frac{1}{m^2  \epsilon^2})$ on the instance $\mathcal{I}_0$. 
\end{obs}

Let $i'=\arg \max_{i\in[m]}P_0(A^i)$.
We obtain the next observation due to the fact that  $L=o(T)$.
\begin{obs}
\label{obs2}
For all $\omega \in A^{i'}$, the regret for instance $\mathcal{I}_{i'}$ is at least $\frac{\epsilon (T-T'_\omega)}{2}=\Omega(\epsilon T)$. 
\end{obs}
 
Now  we will prove the following inequality which will be useful in our analysis:

\begin{equation}
\label{eqKL}
m\cdot P_{i'}(A^{i'})+P_0(\overline{A_1})\geq \frac{1}{4}.
\end{equation}

The above inequality is trivially true if $P_0(\overline{A_1})\ge 1/4$. Therefore, let us assume $P_0(\overline{A_1})\le 1/4$, i.e., $P_0(A_1)\geq3/4$. 
Then  $P_0(A^{i'}) \geq {3}/{(4m)}$, by averaging argument.
Using Theorem \ref{thm:klprop} for distributions $P_{0}$ and $P_{i'}$, we obtain:
	\begin{align*}
	2(P_{0}(A^{i'})-P_{i'}(A^{i'}))^2	& \leq \KL(P_{0},P_{i'}) \tag{by Pinsker's inequality}\\
	& = \sum_{i\in[m]}\sum_{t=1}^{L} \KL(P_0^{i,t},P_{i'}^{i,t}) \tag{by chain rule}\\
	& = \sum_{i\in[m] \setminus \{i'\}} \sum_{t=1}^{L} \KL(P_0^{i,t},P_{i'}^{i,t}) +\sum_{t=1}^{L} \KL(P_0^{i',t},P_{i'}^{i',t})\\
	& \leq 0 + L\cdot 2\epsilon^2.
	 \end{align*} 
	 
\noindent In the last inequality, the first term of the summation is zero because all arms $\arm_i$, where $i \in[m]\setminus \{i'\}$, have identical reward distributions under instances $\mathcal{I}_0$ and $\mathcal{I}_{i'}$. To bound the second term in the summation, we use the last property from Theorem \ref{thm:klprop}. Thus we have, 
$P_0(A^{i'})- P_{i'}(A^{i'}) \leq \epsilon \sqrt{L}$.
Hence, $P_{i'}(A^{i'}) \geq P_0(A^{i'})-\epsilon \sqrt{L} \geq ({3}/{(4m)})-({1}/{(2m)}) = {1}/{(4m)}$.
Here, we use  $P_0(A^{i'})\geq {3}/{(4m)}$ and  $L= 1/(4 m^2 \epsilon ^2)$.
Hence, if $P_0(\overline{A_1})\leq\frac{1}{4}$ then $m\cdot P_{i'}(A^{i'})\geq \frac{1}{4}$. 
This proves Inequality \eqref{eqKL}. 
	
Now suppose that we choose an instance $\mathcal{I}$ uniformly at random from the family of $m+1$ instances $\{\mathcal{I}_j:j\in\{0,\ldots,m\}\}$, i.e., we have $\mathbb{P}\big[\mathcal{I} = \mathcal{I}_{j}\big] = {1}/{(m+1)}$ for any $j\in\{0,\ldots,m\}$.  Then, 
	\begin{align*}
	\mathbb{E}[R(T)] &\geq  ~\mathbb{P}\big[\mathcal{I} = \mathcal{I}_{i'}\big]\cdot P_{i'}(A^{i'})\cdot \Omega(\epsilon T) + \mathbb{P}\big[\mathcal{I} = \mathcal{I}_{0}\big]\cdot P_0(\overline{A_1}) \cdot \Omega\Big(\frac{1}{m^2\cdot \epsilon ^2}\Big)	\\
&\geq  ~\frac{1}{m+1}\cdot (m\cdot P_{i'}(A^{i'}))\cdot \Omega\Big(\frac{\epsilon T}{m}\Big) + \frac{1}{m+1}\cdot P_0(\overline{A_1}) \cdot \Omega\Big(\frac{1}{m^2\cdot \epsilon ^2}\Big)	\\
	& = ~\Big(m\cdot P_{i'}(A^{i'}) +P_0(\overline{A_1})\Big)\cdot \Omega\Big(\frac{T^{2/3}}{m^{7/3}}\Big)\\
	&=\Omega\Big(\frac{T^{2/3}}{m^{7/3}}\Big)
	\end{align*}

The first inequality follows from Observation \ref{obs1} and \ref{obs2}. In the last equality, we have used Inequality \eqref{eqKL}.
Note that the expectation is taken over that choice of input instance and randomness in reward.
\end{proof}

Since a randomized algorithm is a distribution over deterministic algorithms, the above result also holds for any randomized algorithm. Also, by slight modification to the above family of instances, we can show the same lower bound on the expected cumulative regret even under the assumption that arms arrive in a random-order (see Appendix \ref{appendix: random regret} for details).

\section{Best-arm Identification}
\label{section:round algo}
In this section, we design an $(\varepsilon,\delta)$-PAC algorithm which minimizes the total number of arm pulls while storing at most $m < n$ arms in memory.

Towards this goal, we propose a general algorithmic framework (Algorithm \ref{r_arms}), which is an $r$-round adaptive streaming algorithm for $r \in [\log^*n]$. This algorithm has the optimal sample complexity for any $r$-round adaptive streaming algorithm (refer to Appendix \ref{r-lowerrr-bnd} for a detailed discussion) and stores $O(r)$ arms in memory at any time step. 
Intuitively, 
in each of the $r$ rounds, we keep a running best arm candidate (denoted $\arm^*_i$ for round $i$) in the memory. Once we see {\em sufficient} number ($= c_i$ for round $i$) of arms in a round, we send $\arm_i$ to the next round. 
At each round $i$, only one out of every $c_i$ arms is sent to round $i+1$. For higher round indices, the number of arms reaching that round decreases rapidly. Hence, each arm can be sampled more number of times for a more refined comparison without affecting the sample complexity.

Our algorithm is related to the recent work by (Assadi et al. \citeyear{ASSADI2020}).  They proposed an $(\varepsilon,\delta)$-PAC algorithm for this setting which has optimal sample complexity and stores at most $\log^* n$ arms in memory at any time step. We remark here that their analysis has an oversight, due to which their algorithm will output a non-$\varepsilon$-best arm with probability much greater than $\delta$ for some input sequences. 
We refer the reader to Appendix \ref{subsec:adversarial example} for a detailed discussion. 
Unlike their algorithm, whenever a new arm arrives we do not again sample the stored best arm for comparison. 
Instead, we reuse the stored empirical mean of the best arm for the comparison.
Due to this subtle difference, for $r = \log^* n$, our algorithm does in fact provide the guarantees claimed in (Assadi et al. \citeyear{ASSADI2020}). In this section we use the terms round and level interchangeably. For simplicity during the analysis, we ignore the {\em ceil} in the expression of $s_\ell$ and $c_\ell$.
 
\begin{algorithm}[ht!]
\caption{}
\label{r_arms}
\begin{algorithmic}[1]
\STATE $\{ \varepsilon_\ell \}_{\ell=1}^{r}: \varepsilon_\ell={\varepsilon}/{2^{\ell+1}}.$ \\\texttt{//Intermediate gap parameter.}
\STATE $\{ \beta_\ell \}_{\ell=1}^{r}:  \beta_\ell={1}/{\varepsilon^2_\ell}.$
\STATE $\{ s_\ell \}_{\ell=1}^{r}:  s_\ell = \lceil2\beta_\ell\big(\ilog^{(r+1-\ell)}(n)+\log(\frac{2^{\ell+2}}{\delta})\big)\rceil.$
\texttt{//Samples per arm in level $\ell$.}
\STATE $\{ c_\ell \}_{\ell=1}^{r}:  c_\ell=\lceil\ilog^{(r-\ell)}(n)\rceil.$\\
\texttt{//Number of  arms in level $\ell$.}
\STATE Counters: $C_1,C_2,\ldots,C_r$ initialized to 0.
\STATE Stored arms: $\arm^*_1, \arm^*_2, \ldots, \arm^*_r$, where $\arm^*_\ell$ is the arm with the highest empirical mean at  $\ell$-th level.
\STATE Stored empirical means: $\mean^*_1,\mean^*_2,\ldots,\mean^*_r$, where $\mean^*_\ell$ is the highest empirical mean of $\ell$-th level, initialized to 0.
\WHILE {a new arm $\arm_i$ arrives in the stream}\label{st:whilee}
\STATE Read $\arm_i$ to memory.
\STATE \textbf{\underline{Modified Selective Promotion:}} Starting from level $\ell=1$:
\STATE Sample $\arm_i$ for $s_\ell$ times and compare its empirical mean with $\mean^*_\ell$.\label{next level11}
\STATE If $\widehat{\mean}_{\arm_i} < \mean^*_\ell$, drop $\arm_i$. Otherwise, replace $\arm^*_\ell$ with $\arm_i$ and make $\mean^*_\ell$ equal to $\widehat{\mean}_{\arm_i}$.
\STATE Increase $C_\ell$ by 1.
\STATE If $C_\ell=c_\ell$ and $r=1$, \textbf{return} $\arm^*_\ell$ as the selected arm and terminate the Algorithm.
\STATE If $C_\ell=c_\ell$, make $C_\ell$ and $\mean^*_\ell$ equal to 0, send $\arm^*_\ell$ to the next level by calling Line \ref{next level11} with $(\ell=\ell+1)$.
\ENDWHILE
\STATE If there is any arm which is stored in a level below $r$, then promote it to level $r$ and sample it for $s_r$ times. Let $\arm^*_{\ell'}$ be the arm with highest empirical mean $\mean^*_{\ell'}$ among the arms which were sampled, where $\ell'\in [r-1]$.\label{special:st}
\STATE \textbf{If} $\mean^*_r > \mean^*_{\ell'}$ \textbf{then } Return $\arm^*_r$
\STATE \textbf{Else } Return $\arm^*_{\ell'}$
\end{algorithmic}
\end{algorithm}

\begin{thm}
	\label{theorem:fix analysis}
	Algorithm \ref{r_arms} is an $(\varepsilon,\delta)$-PAC algorithm with sample complexity $O(\frac{n}{\varepsilon^2}\cdot (\ilog^{(r)}(n)+\log(\frac{1}{\delta})))$ and space complexity $O(r)$ where $1\leq r \leq \log^*(n)$.
\end{thm}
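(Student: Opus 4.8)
The plan is to split the argument into three parts: the space complexity, the sample complexity, and the correctness (that the output is an $\varepsilon$-best arm with probability at least $1-\delta$). The space bound is essentially immediate: at any time the algorithm stores the newly read arm $\arm_i$ together with the at most $r$ running champions $\arm^*_1,\dots,\arm^*_r$ (plus $O(r)$ counters and stored empirical means), giving $O(r)$ arm-memory; the cleanup in Line~\ref{special:st} only promotes an already-stored arm, so it does not increase the memory footprint.

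For the sample complexity, I would first bound the number of arms that ever reach level $\ell$. Since each level $\ell$ sends one arm up for every $c_\ell$ arms it processes, the number of arms reaching level $\ell$ is at most $n/\prod_{k<\ell} c_k$, and plugging in $c_k=\lceil\ilog^{(r-k)}(n)\rceil$ gives (telescoping the iterated logarithms) that $O\big(n/\ilog^{(r-\ell)}(n)\big)$ arms are processed at level $\ell$ — or more precisely that the product $(\text{arms at level }\ell)\times s_\ell$ is $O\big(\frac{n}{\varepsilon^2}(\ilog^{(r)}(n)+\log(1/\delta))\big)$ for every $\ell$. Indeed, with $\beta_\ell=4^{\ell+1}/\varepsilon^2$ and $s_\ell=\Theta\big(\beta_\ell(\ilog^{(r+1-\ell)}(n)+\log(2^{\ell+2}/\delta))\big)$, the number of arms at level $\ell$ times $s_\ell$ is $O\Big(\frac{n}{\ilog^{(r-\ell)}(n)}\cdot\frac{4^{\ell+1}}{\varepsilon^2}\big(\ilog^{(r+1-\ell)}(n)+\log(2^{\ell+2}/\delta)\big)\Big)$, and since $\ilog^{(r+1-\ell)}(n)=\log\ilog^{(r-\ell)}(n)\le \ilog^{(r-\ell)}(n)$ (for the first term) and the $4^{\ell+1}$ factor is absorbed once we note the dominant contribution — after summing the geometric-in-$\ell$ pieces against the rapidly shrinking $\ilog$ terms — is $O\big(\frac{n}{\varepsilon^2}(\ilog^{(r)}(n)+\log(1/\delta))\big)$. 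Summing over the $r\le\log^* n$ levels, and adding the final cleanup cost of at most $r$ extra arms each sampled $s_r$ times, keeps the total at $O\big(\frac{n}{\varepsilon^2}(\ilog^{(r)}(n)+\log(1/\delta))\big)$.

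For correctness, the key structural fact — and this is where the fix over \citep{ASSADI2020} matters — is that the algorithm never re-samples a stored champion, so the empirical mean $\mean^*_\ell$ attached to $\arm^*_\ell$ is the one-shot estimate obtained from the $s_\ell$ pulls made when that arm entered level $\ell$. I would define, for each level $\ell$, the "good event" $G_\ell$ that every arm processed at level $\ell$ (there are at most $N_\ell$ of them) has its level-$\ell$ empirical mean within $\varepsilon_\ell=\varepsilon/2^{\ell+1}$ of its true mean. By Hoeffding, a single arm fails with probability at most $2\exp(-2\varepsilon_\ell^2 s_\ell)\le 2\exp(-\ilog^{(r+1-\ell)}(n)-\log(2^{\ell+2}/\delta)) = \frac{\delta}{2^{\ell+1}}\cdot\frac{1}{\ilog^{(r-\ell)}(n)}$, using $s_\ell\ge 2\beta_\ell(\ilog^{(r+1-\ell)}(n)+\log(2^{\ell+2}/\delta))$ and $\varepsilon_\ell^2\beta_\ell = 1$ (wait — here one should track the $4^{\ell+1}$ vs $2^{\ell+1}$ carefully; the point is $2\varepsilon_\ell^2 s_\ell \ge \ilog^{(r+1-\ell)}(n)+\log(2^{\ell+2}/\delta)$ by the choice of constants). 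A union bound over the $N_\ell=O(n/\ilog^{(r-\ell)}(n))=\exp(\ilog^{(r+1-\ell)}(n))\cdot O(1/\varepsilon\text{-free factor})$ arms — precisely, $N_\ell$ is at most roughly $\prod_{k\ge \ell}\ilog^{(r-k)}(n)$-bounded so $\log N_\ell = O(\ilog^{(r+1-\ell)}(n))$ — shows $\Pr[\overline{G_\ell}]\le \delta/2^{\ell+1}$, hence $\Pr[\bigcap_\ell G_\ell]\ge 1-\sum_\ell \delta/2^{\ell+1}\ge 1-\delta$. On the event $\bigcap_\ell G_\ell$ I would argue by induction on $\ell$ that the champion emerging from level $\ell$ is within $\sum_{k\le\ell}2\varepsilon_k$ of the best true mean among the arms that entered level $\ell$: when a new arm with true mean $\mu$ is compared against a champion with true mean $\mu^*$, both empirical means are accurate to $\varepsilon_\ell$, so the comparison can only "go wrong" by at most $2\varepsilon_\ell$, and errors accumulate geometrically to at most $2\sum_{k=1}^{r}\varepsilon_k = 2\varepsilon\sum_{k=1}^r 2^{-(k+1)} < \varepsilon$. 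The final cleanup step in Lines~\ref{special:st}–21 ensures the true best arm (or an arm close to it) is compared at level $r$ even if it is currently stranded in a lower level, so the returned arm is $\varepsilon$-best.

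The main obstacle I expect is the bookkeeping in the correctness induction: carefully accounting for the fact that an arm stranded below level $r$ at the end of the stream may have been compared at fewer levels (hence with a weaker accuracy guarantee) than an arm that made it to level $r$, and checking that the extra $s_r$-pull comparison in the cleanup phase repairs this without breaking either the accuracy accumulation or the $\delta$-budget. One must also be careful that the constants in $\beta_\ell=1/\varepsilon_\ell^2$ versus the factor $2^{\ell+2}$ inside $s_\ell$ line up so that both the Hoeffding tail (for $G_\ell$) and the geometric sum of accuracies (for the $\varepsilon$ bound) close simultaneously; this is a "choose the constants consistently" issue rather than a deep one, but it is the part most prone to an off-by-a-factor slip of the kind that invalidated the earlier analysis.
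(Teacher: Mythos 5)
Your space-complexity argument and your sample-complexity accounting (arms reaching level $\ell$ bounded by $n/\prod_{k<\ell}c_k$, each pulled $s_\ell$ times, plus the $O(r\cdot s_r)$ cleanup, with the sum controlled because $4^{\ell+1}$ is dwarfed by the growth of $c_{\ell-1}c_{\ell-2}$) follow the paper's proof of Lemma \ref{lemma:fix sample complexity} in essentially the same way. The problem is in your correctness argument, where there is a genuine gap.

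Your good event $G_\ell$ demands that \emph{every} arm processed at level $\ell$ has its level-$\ell$ empirical mean within $\varepsilon_\ell$ of its true mean, and you control $\Pr[\overline{G_\ell}]$ by a union bound over all $N_\ell$ such arms, asserting $\log N_\ell=O(\ilog^{(r+1-\ell)}(n))$. That assertion is false: $N_1=n$ while $\ilog^{(r)}(n)$ can be as small as $1$, and more generally $N_\ell\approx n/\prod_{k<\ell}c_k$ divides $n$ only by a product of iterated logarithms, so $\log N_\ell=\Theta(\log n)$ at every level. With $s_\ell=\Theta\big(\beta_\ell(\ilog^{(r+1-\ell)}(n)+\log(2^{\ell+2}/\delta))\big)$ the per-arm deviation probability is only $\exp\big(-\Theta(\ilog^{(r+1-\ell)}(n))\big)\cdot\mathrm{poly}(\delta)$, which does not decay with $n$; multiplied by $N_\ell$ the bound exceeds $1$, and indeed the event $G_\ell$ genuinely fails with high probability (with $n$ arms each sampled $O_{\varepsilon,\delta}(\ilog^{(r)}(n))$ times, some empirical means will deviate by more than $\varepsilon_1$). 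Making $G_\ell$ hold would force $s_\ell\gtrsim\beta_\ell\log(n/\delta)$, which destroys the claimed sample complexity — this is exactly the tension the level structure is designed to avoid. The paper's Claim \ref{claim:level_suboptimality} circumvents it: it does not ask for uniform accuracy, but only tracks the batch of $c_\ell$ arms that contains the best arm $\arm'_\ell$ ever to reach level $\ell$, observes that (because the stored champion is never re-sampled) the sequential comparisons within a batch are equivalent to a one-shot tournament in which each of the $c_\ell$ arms is pulled $s_\ell$ times, and then applies Lemma \ref{comp:arm} with a union bound over only those $c_\ell$ arms — which is precisely why $s_\ell$ contains $\ilog^{(r+1-\ell)}(n)=\log c_\ell$ rather than $\log n$. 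Arms in other batches whose empirical means are wildly inflated may well get promoted, but this is harmless: the chain of Claims \ref{claim:level_suboptimality} and \ref{claim:level_suboptimality2} only needs that \emph{some} arm within $\varepsilon_\ell$ of the best arm at each level survives (or is swept up in Step \ref{special:st}), since promoted impostors are re-sampled afresh at the next level. Your accuracy-accumulation bookkeeping ($2\sum_\ell\varepsilon_\ell<\varepsilon$) is fine, but the argument needs to be rebuilt on this batch-local comparison lemma rather than on a global accuracy event.
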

\begin{proof}
	We prove this theorem using the following lemmas. Lemma \ref{lemma:fix sample complexity} shows that the sample complexity of our algorithm is $O(\frac{n}{\varepsilon^2}\cdot (\ilog^{(r)}(n)+\ln(\frac{1}{\delta})))$. Lemma \ref{lemma: fix correctness} gives the proof of correctness. Further, at each of the $r$ levels we store a single arm along with its empirical mean, implying the space complexity of our algorithm.
\end{proof}
\begin{lem}
	\label{lemma:fix sample complexity}
	The sample complexity of the algorithm is $O(\frac{n}{\varepsilon^2}\cdot (\ilog^{(r)}(n)+\ln(\frac{1}{\delta})))$.
\end{lem}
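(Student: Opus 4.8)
The plan is to bound the total number of arm pulls by accounting, level by level, for how many arms ever reach a given level and how many times each is pulled there. Since exactly one out of every $c_\ell$ arms processed at level $\ell$ is promoted, the number of arms that ever reach level $\ell$ is $N_\ell := n/\prod_{j=1}^{\ell-1}c_j$ (ignoring floors, as the excerpt permits), and each is pulled $s_\ell$ times at that level. Hence the total number of pulls is $\sum_{\ell=1}^{r}N_\ell s_\ell$ plus the cost of the final clean-up on Line~\ref{special:st}, which promotes the at most $r-1$ leftover arms (at most one per level $1,\dots,r-1$) to level $r$ and samples each $s_r$ times, i.e., at most $(r-1)s_r$ extra pulls.

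The key observation is that $\prod_{j=1}^{\ell-1}c_j=\prod_{j=1}^{\ell-1}\ilog^{(r-j)}(n)$, whose largest-index factor, obtained at $j=\ell-1$, is $\ilog^{(r+1-\ell)}(n)$ --- exactly the first summand inside $s_\ell$. Writing $s_\ell=2\beta_\ell\bigl(\ilog^{(r+1-\ell)}(n)+(\ell+2)+\log(1/\delta)\bigr)$, substituting $\beta_\ell=4^{\ell+1}/\varepsilon^2$, and splitting $N_\ell s_\ell$ along these two pieces, the first piece simplifies to $\tfrac{2\cdot 4^{\ell+1}}{\varepsilon^2}N_{\ell-1}$ (with $N_0:=n\,\ilog^{(r)}(n)$, so that the $\ell=1$ term still fits this form), and the second is $\tfrac{2\cdot 4^{\ell+1}}{\varepsilon^2}N_\ell\bigl(\ell+2+\log(1/\delta)\bigr)$. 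Thus, up to an absolute constant, the number of pulls equals
\[
\frac{1}{\varepsilon^{2}}\sum_{\ell=1}^{r}4^{\ell+1}\Bigl(N_{\ell-1}+N_\ell\bigl(\ell+2+\log\tfrac{1}{\delta}\bigr)\Bigr)\;+\;(r-1)s_r ,
\]
and the $\ell=1$ term alone is $\Theta\!\bigl(\tfrac{n}{\varepsilon^{2}}(\ilog^{(r)}(n)+\log\tfrac1\delta)\bigr)$, which is precisely the claimed bound.

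It remains to show every other contribution is of the same order or smaller. I would argue this from the fact that the promotion thresholds form a tower: for $2\le j\le r-1$ one has $c_j=2^{\,c_{j-1}}$ (since $c_j\ge 2$, which follows from $r\le\log^{*}n$ and the minimality of $\log^{*}$), so $c_2>2$, $c_3>4$, $c_4>16,\dots$, and therefore $\prod_{j=1}^{\ell-1}c_j$ eventually grows faster than any fixed exponential in $\ell$. Consequently $4^{\ell+1}N_{\ell}$ and $4^{\ell+1}N_{\ell-1}$ are $O(n/2^{\ell})$ once $\ell$ exceeds a small absolute constant, so the two sums split into constantly-many ``head'' terms --- each trivially $O(n)$, resp.\ $O(n\log\tfrac1\delta)$, using $N_\ell\le n$ and $\ell\le\log^*n$ --- plus a geometrically convergent ``tail''; since $\ilog^{(r)}(n)\ge 1$ and $\delta<1/2$, the whole display is $O\!\bigl(\tfrac{n}{\varepsilon^2}(\ilog^{(r)}(n)+\log\tfrac1\delta)\bigr)$. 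Finally, $r\le\log^{*}n$ forces $4^{r}\le(\log n)^{O(1)}$ for large $n$, so the clean-up term $(r-1)s_r=O\!\bigl(\tfrac{\log^{*}n\cdot 4^{r}}{\varepsilon^{2}}(\log n+\log\tfrac1\delta)\bigr)$ is negligible and is absorbed. The main obstacle is exactly this last bookkeeping: because $\beta_\ell=4^{\ell+1}/\varepsilon^2$ grows geometrically while $N_\ell$ starts to decay only after the tower $c_1,c_2,\dots$ has taken off, there is no clean geometric ratio between consecutive levels, and one must peel off an absolute-constant number of leading terms and bound them individually before the tail becomes negligible; everything else (counting $N_\ell$, the identity, plugging in $\beta_\ell,s_\ell,c_\ell$) is routine.
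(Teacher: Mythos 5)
Your proposal is correct and follows essentially the same route as the paper's proof: the same level-by-level accounting with at most $n/\prod_{j<\ell}c_j$ arms reaching level $\ell$, the same cancellation of the factor $\ilog^{(r+1-\ell)}(n)$ in $s_\ell$ against $c_{\ell-1}$ in the denominator, the same separate $(r-1)s_r$ charge for the clean-up step, and the same ``constant head plus geometrically decaying tail'' bound driven by the tower growth $c_j=2^{c_{j-1}}$ (the paper phrases this as $\prod_i c_i\ge c_{\ell-1}c_{\ell-2}$, $\ell/c_{\ell-2}\le 4$, and comparing $4^{\ell-2}/c_\ell$ with $4^{\ell-2}/8^{\ell-2}$). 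The only cosmetic slip is invoking $\ell\le\log^* n$ for the head terms, where what you actually use is that the head contains only an absolute-constant number of terms so $\ell=O(1)$ there; this does not affect correctness.
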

\begin{proof}
If $r=1$, then the total number of samples is $n\cdot s_1= O(\frac{n}{\varepsilon^2}\cdot (\ilog^{(r)}(n)+\log(\frac{1}{\delta})))$. Let $c_0 :=1$. So for the rest of the analysis we assume that $r\geq2$ and define $c_i:=2^{c_{i-1}},\forall i>r$. Note that since $2\leq r\leq \log^*(n)$, $c_2\geq 2$ and $c_i= 2^{c_{i-1}}$, $\forall i\geq 3$. 
For any level $\ell-1$, we send one arm from level $\ell-1$ to level $\ell$ for every $c_{\ell-1}$ arms seen (this is excluding the arms sampled in Step \ref{special:st}). Hence during the {\em Modified Selective Promotion}, the number of arms that can reach any level $\ell$ is at most ${n}/({\prod_{i=0}^{\ell-1}c_i})$. Each arm arriving at level $\ell$ is pulled exactly $s_\ell$ times. Also note that we can sample up to $(r-1)\cdot s_r$ times in the Step \ref{special:st}. Since, we have $r$ levels, the total number of samples can be bounded as:
\begin{align*}
	&\sum_{\ell=1}^{r} \frac{n}{\prod_{i=0}^{\ell-1}c_i} \cdot s_\ell+(r-1)\cdot s_r \\
	 \leq &  \sum_{\ell=1}^{r} \frac{2n\beta_\ell\big({\ilog}^{(r+1-\ell)}(n)+\log(\frac{2^{\ell+2}}{\delta})\big)}{\prod_{i=0}^{\ell-1}c_i}+r\cdot s_r\\
	\leq & n\cdot s_1+ r\cdot s_r+ \frac{2n}{\varepsilon^2} \cdot \sum_{\ell=2}^{r} 4^{\ell+1} \cdot \bigg( \frac{{\ilog}^{(r+1-\ell)}(n)}{c_{\ell-1}\cdot c_{\ell-2}} + \frac{2\ell\cdot \log(\frac{2}{\delta})}{c_{\ell-1}\cdot c_{\ell-2}}\bigg)\\
	&\text{\big(Since, $\prod_{i=0}^{\ell-1}c_i \geq c_{\ell-1}\cdot c_{\ell-2}$, $\beta_{\ell}={4^{\ell+1}}/{\varepsilon^2},$ and }\text{$\log({2^{\ell+2}}/{\delta})\leq \log({2^{2\ell}}/{\delta^{2\ell}})\leq 2\ell \log({2}/{\delta})$\big)}\\
	\leq & n\cdot s_1+ r\cdot s_r+({2\cdot 4^5\cdot n}/{\varepsilon^2})\sum_{\ell=2}^{\infty}({4^{\ell - 4}}/{c_{\ell-2}})+ ({2^2\cdot 4^5\cdot n }/{\varepsilon^2})\cdot \log({2}/{\delta})\sum_{\ell=2}^{\infty}({4^{\ell - 3}}/{c_{\ell-1}})\\
	&\text{\big(Since, $c_{\ell-1}={\ilog}^{(r+1-\ell)}(n), \text{ and } ({\ell}/{c_{\ell-2}})\leq 4$\big)}\\
	\leq & n\cdot s_1+ r\cdot s_r +{O(n/\varepsilon^2)} \big(1 +  \log({2}/{\delta})\big)\\ 
	&\text{\big(Since, $\sum_{\ell=0}^{5}\frac{4^{\ell-2}}{c_\ell} = O(1)$, $\sum_{\ell=6}^{\infty}\frac{4^{\ell-2}}{c_\ell} < \sum_{\ell=6}^{\infty}\frac{4^{\ell-2}}{8^{\ell-2}} <1$\big)}\\
	\leq & O({n}/{\varepsilon^2}) \cdot ({\ilog}^{(r)}(n)+\log({1}/{\delta}))\\
	& \text{\big(Since, $r\cdot s_r= O(n/\varepsilon^2)(1+\log(1/\delta))$ and }\text{$n\cdot s_1= O({n}/{\varepsilon^2})\cdot ({\ilog}^{(r)}(n)+\log({1}/{\delta}))\big)$.}
\end{align*}

Hence, we have that the sample complexity is $O(\frac{n}{\varepsilon^2}\cdot (\ilog^{(r)}(n)+\log(\frac{1}{\delta})))$.
\end{proof}

We now use the following two claims to prove the correctness of our algorithm. The proofs of both these claims follow from the application of Hoeffding's inequality followed by taking a union bound over the number of arms that are compared at a given level (see Appendix \ref{appendix: omitted proofs} for the detailed proof).
\begin{restatable}{claim}{RoundAlgoLevelSuboptimality}
	\label{claim:level_suboptimality}
	For any level $\ell<r$, let $\arm'_\ell$ be the best arm to ever reach this level. Then, with probability at least $1 - \frac{\delta}{2^{\ell+1}}$, an arm with reward gap at most $\varepsilon_\ell$ from $\arm'_\ell$ is sent to level $\ell + 1$ or is sampled in  Step \ref{special:st}.
\end{restatable}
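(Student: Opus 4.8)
The plan is to restrict attention to a single comparison group at level $\ell$: the block of at most $c_\ell$ consecutive arms processed at level $\ell$ whose members include $\arm'_\ell$. I will show that the unique arm promoted out of this block already has reward gap at most $\varepsilon_\ell$ from $\arm'_\ell$. This settles the statement, because that promoted arm is exactly the one that is either sent to level $\ell+1$ (when the block fills to $c_\ell$ arms) or, if it happens to be the trailing incomplete block, carried into Step \ref{special:st}.

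The structural fact I would establish first is that, since $\mean^*_\ell$ is reset to $0$ at the start of a block and all rewards lie in $[0,1]$, the \textbf{Modified Selective Promotion} rule keeps $\mean^*_\ell$ equal to the largest empirical mean (over the $s_\ell$ pulls) among the block's arms seen so far, with $\arm^*_\ell$ a corresponding argmax. Hence when the block closes the stored arm $\arm^{\star}$ is a block arm of maximal empirical mean, and in particular $\widehat{\mean}_{\arm^{\star}}\ge\widehat{\mean}_{\arm'_\ell}$ since $\arm'_\ell$ lies in the block and is pulled $s_\ell$ times. Defining the good event $E=\{\,|\widehat{\mean}_{\arm}-\mean_{\arm}|\le\varepsilon_\ell/2\text{ for every arm }\arm\text{ of this block}\,\}$, on $E$ we get $\mean_{\arm^{\star}}\ge\widehat{\mean}_{\arm^{\star}}-\varepsilon_\ell/2\ge\widehat{\mean}_{\arm'_\ell}-\varepsilon_\ell/2\ge\mean_{\arm'_\ell}-\varepsilon_\ell$; and since $\arm'_\ell$ has the largest true mean among all arms ever reaching level $\ell$, the gap $\mean_{\arm'_\ell}-\mean_{\arm^{\star}}$ lies in $[0,\varepsilon_\ell]$, as required.

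It remains to bound $\mathbb{P}[\overline{E}]$ by $\delta/2^{\ell+1}$. By Hoeffding's inequality a fixed arm of the block fails $\varepsilon_\ell/2$-accuracy with probability at most $2\exp(-\tfrac12 s_\ell\varepsilon_\ell^2)$; plugging in $s_\ell=2\beta_\ell(\ilog^{(r+1-\ell)}(n)+\log(2^{\ell+2}/\delta))$ and $\beta_\ell=1/\varepsilon_\ell^2$ and using $e^{-x}\le 2^{-x}$ makes this at most $2\cdot 2^{-\ilog^{(r+1-\ell)}(n)}\cdot\delta/2^{\ell+2}$. A union bound over the at most $c_\ell=\ilog^{(r-\ell)}(n)$ arms of the block, together with the iterated-logarithm identity $2^{\ilog^{(r+1-\ell)}(n)}=\ilog^{(r-\ell)}(n)$ (so $c_\ell\cdot 2^{-\ilog^{(r+1-\ell)}(n)}\le 1$), then yields $\mathbb{P}[\overline{E}]\le 2\cdot\delta/2^{\ell+2}=\delta/2^{\ell+1}$.

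The main obstacle I anticipate is not any single inequality but making the structural fact fully rigorous against the pseudocode: one has to check that the comparison rule together with the reset of $\mean^*_\ell$ really makes the arm promoted from each block its empirical argmax (in particular that ties are resolved consistently), and that the trailing incomplete-block case is genuinely absorbed by Step \ref{special:st} so that no separate case analysis is needed. Once that is pinned down, the chain of inequalities on $E$ and the probability estimate are routine substitutions of the definitions of $s_\ell,\beta_\ell,\varepsilon_\ell,c_\ell$ and the $\ilog$ identity.
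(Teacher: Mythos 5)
Your proposal is correct and takes essentially the same route as the paper's own proof: both arguments reduce Modified Selective Promotion on the block of at most $c_\ell$ arms containing $\arm'_\ell$ to choosing the empirical argmax among arms each pulled $s_\ell$ times (with the trailing incomplete block handled by Step \ref{special:st}), and then apply Hoeffding's inequality plus a union bound over the block to bound the failure probability by $\frac{\delta}{2^{\ell+1}}$. The only cosmetic difference is that you condition on a uniform $\varepsilon_\ell/2$-concentration event for all block arms, whereas the paper invokes its pairwise comparison bound (Lemma \ref{comp:arm}) between $\arm'_\ell$ and each competing arm; the two computations yield identical constants.
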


\begin{restatable}{claim}{RoundAlgoLevelSuboptimalityTwo}	
	\label{claim:level_suboptimality2}
	Let $\arm'_r$ be the best arm among the arms which reached level $r$ including the arms which were sampled in  Step \ref{special:st} of Algorithm \ref{r_arms}. Then, with probability at least $1 - \frac{\delta}{2^{r+1}}$, an arm with reward gap at most $\varepsilon_r$ from $\arm'_r$ is returned by the Algorithm.
\end{restatable}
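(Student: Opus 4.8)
The plan is to mirror the proof of Claim~\ref{claim:level_suboptimality}: establish one concentration event covering \emph{all} arms that are ever pulled $s_r$ times at level $r$, and then finish with a short deterministic comparison. Let $z$ denote the arm that Algorithm~\ref{r_arms} returns. The first step is to check that $z$ has the largest empirical mean among all arms pulled $s_r$ times at level $r$. Indeed, when the algorithm reaches its output step, $\arm^*_r$ is an arm of maximum $\widehat{\mean}$ among the arms that reached level $r$ during \emph{Modified Selective Promotion} --- here one uses that $c_r=n$ is never exceeded by the number of arms reaching level $r$, so $\mean^*_r$ is not reset during the stream; $\arm^*_{\ell'}$ is an arm of maximum $\widehat{\mean}$ among the (at most $r-1$) arms promoted in Step~\ref{special:st}; and the concluding test returns whichever of $\arm^*_r,\arm^*_{\ell'}$ has the larger stored empirical mean. (If no arm is promoted in Step~\ref{special:st}, or if $r=1$, then $z=\arm^*_r$ and the same conclusion holds.)

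Next I would define the good event $\mathcal{E}$: every arm $a$ that is pulled $s_r$ times at level $r$ --- whether it reaches level $r$ during \emph{Modified Selective Promotion} or is promoted in Step~\ref{special:st} --- satisfies $|\widehat{\mean}_a - \mean_a|\le\varepsilon_r/2$. By Hoeffding's inequality a fixed such arm fails this with probability at most $2\exp(-s_r\varepsilon_r^2/2)$, and, exactly as in the proof of Lemma~\ref{lemma:fix sample complexity}, the number of arms pulled at level $r$ is at most $n/\prod_{i=0}^{r-1}c_i+(r-1)$. The definition $s_r=\lceil 2\beta_r(\ilog^{(1)}(n)+\log(2^{r+2}/\delta))\rceil$ with $\beta_r=1/\varepsilon_r^2$ is calibrated so that this count times $2\exp(-s_r\varepsilon_r^2/2)$ is at most $\delta/2^{r+1}$: the $\ilog^{(1)}(n)$ term cancels the logarithm of the number of level-$r$ arms (with the binary-versus-natural-logarithm slack left over), and the $\log(2^{r+2}/\delta)$ term produces the factor $\delta/2^{r+1}$. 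A union bound then gives $\mathbb{P}[\overline{\mathcal{E}}]\le\delta/2^{r+1}$.

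Finally, conditioning on $\mathcal{E}$, I would close the loop. Since $\arm'_r$ is, by definition, one of the arms pulled $s_r$ times at level $r$, the first step gives $\widehat{\mean}_z\ge\widehat{\mean}_{\arm'_r}$; and since $z$ and $\arm'_r$ are both pulled $s_r$ times at level $r$, on $\mathcal{E}$ we have $\mean_z\ge\widehat{\mean}_z-\varepsilon_r/2$ and $\widehat{\mean}_{\arm'_r}\ge\mean_{\arm'_r}-\varepsilon_r/2$. Chaining,
\[
\mean_z\;\ge\;\widehat{\mean}_z-\varepsilon_r/2\;\ge\;\widehat{\mean}_{\arm'_r}-\varepsilon_r/2\;\ge\;\mean_{\arm'_r}-\varepsilon_r,
\]
so the returned arm $z$ has reward gap at most $\varepsilon_r$ from $\arm'_r$, which is the claim.

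The step I expect to demand the most care is the bookkeeping, not any genuine difficulty. One must argue precisely that the two-stage selection (the while-loop candidate $\arm^*_r$, the Step~\ref{special:st} candidate $\arm^*_{\ell'}$, and the final comparison) really returns the global empirical-mean maximizer over all level-$r$ arms --- in particular that $\mean^*_r$ is never reset because $C_r$ never reaches $c_r=n$ (for $r\ge2$), and handling the degenerate cases $r=1$ and an empty Step~\ref{special:st} --- and one must verify that the constants in $s_r$ and in the bound $n/\prod_{i=0}^{r-1}c_i+(r-1)$ on the number of level-$r$ arms combine to give exactly the failure budget $\delta/2^{r+1}$, a check complicated slightly because $s_\ell$ is written with the binary logarithm whereas Hoeffding's inequality is most naturally stated in base $e$.
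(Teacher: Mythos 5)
Your proof is correct and follows essentially the same route as the paper's: Hoeffding-type concentration for the $s_r$ samples of each arm at level $r$, a union bound over the at most $c_r=n$ such arms, and the observation that the algorithm returns the empirical-mean maximizer among all level-$r$ arms (including those promoted in Step~\ref{special:st}). The only cosmetic difference is that you phrase the concentration as a uniform two-sided event at precision $\varepsilon_r/2$ followed by a deterministic chaining, whereas the paper invokes its pairwise comparison bound (Lemma~\ref{comp:arm}) directly and union-bounds over those pairwise failure events.
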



\begin{lem}
	\label{lemma: fix correctness}
	With probability at least $1-\delta$, the arm selected by the algorithm is an $\varepsilon$-best arm.
\end{lem}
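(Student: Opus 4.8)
The plan is to combine the per-level guarantees of Claims \ref{claim:level_suboptimality} and \ref{claim:level_suboptimality2} via a union bound and a telescoping argument that tracks a near-optimal arm as it descends through the $r$ levels.

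First I would apply the union bound. There are $r-1$ applications of Claim \ref{claim:level_suboptimality} (one for each level $\ell\in[r-1]$) and one application of Claim \ref{claim:level_suboptimality2} (for level $r$); their failure probabilities are at most $\delta/2^{\ell+1}$ and $\delta/2^{r+1}$, respectively. Hence the probability that any of these guarantees fails is at most $\sum_{\ell=1}^{r-1}\delta/2^{\ell+1}+\delta/2^{r+1}\le\sum_{\ell\ge1}\delta/2^{\ell+1}=\delta/2\le\delta$. For the rest of the argument I condition on the complementary event that all of these guarantees hold simultaneously.

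Next is the chain/telescoping step. Since every arm in the stream is processed starting at level $1$, the best arm $\arm_{i^*}$ reaches level $1$, so $\arm'_1$ (the best arm ever to reach level $1$) has mean $\mean^*$. I would then argue inductively over $\ell=1,2,\dots$: Claim \ref{claim:level_suboptimality} guarantees an arm of mean at least $\mean_{\arm'_\ell}-\varepsilon_\ell$ is either promoted to level $\ell+1$ or sampled in Step \ref{special:st}. In the first case $\mean_{\arm'_{\ell+1}}\ge\mean_{\arm'_\ell}-\varepsilon_\ell$, so the induction continues; in the second case that arm belongs to the pool of arms considered by Claim \ref{claim:level_suboptimality2}. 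Iterating for at most $r-1$ steps, we conclude that either some arm of mean at least $\mean^*-\sum_{\ell=1}^{r-1}\varepsilon_\ell$ is sampled in Step \ref{special:st}, or $\mean_{\arm'_r}\ge\mean^*-\sum_{\ell=1}^{r-1}\varepsilon_\ell$; in both cases the best arm $\arm'_r$ in the level-$r$ pool of Claim \ref{claim:level_suboptimality2} satisfies $\mean_{\arm'_r}\ge\mean^*-\sum_{\ell=1}^{r-1}\varepsilon_\ell$. Finally, invoking Claim \ref{claim:level_suboptimality2}, the returned arm has mean at least $\mean_{\arm'_r}-\varepsilon_r\ge\mean^*-\sum_{\ell=1}^{r}\varepsilon_\ell$, and since $\sum_{\ell=1}^{r}\varepsilon_\ell=\sum_{\ell=1}^{r}\varepsilon/2^{\ell+1}<\varepsilon\sum_{\ell\ge1}2^{-(\ell+1)}=\varepsilon/2<\varepsilon$, the returned arm is an $\varepsilon$-best arm; this holds with probability at least $1-\delta$.

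The main obstacle is the ``or is sampled in Step \ref{special:st}'' branch: one must be careful that the arm pool in Claim \ref{claim:level_suboptimality2} genuinely includes both the arms promoted to level $r$ and all arms swept up by the final cleanup step, so that the chain argument terminates cleanly no matter where the near-optimal arm ``stops'' descending. A secondary point is the degenerate case $r=1$, where Claim \ref{claim:level_suboptimality} is vacuous and correctness follows directly from Claim \ref{claim:level_suboptimality2} together with $\varepsilon_1=\varepsilon/4<\varepsilon$.
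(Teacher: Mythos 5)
Your proposal is correct and follows essentially the same route as the paper's proof: a union bound over Claim \ref{claim:level_suboptimality} (levels $1,\dots,r-1$) and Claim \ref{claim:level_suboptimality2}, followed by summing the geometric series $\sum_\ell \delta/2^{\ell+1}\le\delta/2$ and $\sum_\ell \varepsilon_\ell\le\varepsilon/2$. Your explicit telescoping induction and the remark about arms swept into Step \ref{special:st} merely spell out the chaining that the paper leaves implicit, so there is no substantive difference.
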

\begin{proof}
Let the best arm be $\arm^*$. By union bound and Claim \ref{claim:level_suboptimality}, the probability that an arm  with reward gap at most $\sum_{i=1}^{r-1}\varepsilon_i$ from $\arm^*$ does not reach either level $r$ via Modified Selective Promotion or is not sampled in Step \ref{special:st} is upper bounded by $\sum_{\ell = 1}^{r-1} \frac{\delta}{2^{\ell+1}}$.  Given this upper bound and Claim \ref{claim:level_suboptimality2}, the probability that an arm  with reward gap at most $\sum_{i=1}^{r}\varepsilon_i$ from $\arm^*$ is not returned by the Algorithm is upper bounded by:
$ \sum_{\ell = 1}^{r} \frac{\delta}{2^{\ell+1}} \leq \delta \sum_{\ell=1}^\infty \frac{1}{2^{\ell+1}} = \frac{\delta}{2} < \delta$.


Now with probability at least $1-\delta$, an arm with reward gap at most:
$\sum_{\ell = 1}^{r} \varepsilon_\ell = \sum_{\ell = 1}^{r} \frac{\varepsilon}{2^{\ell + 1}} \leq \frac{\varepsilon}{2}< \varepsilon$
\noindent from $\arm^*$ is returned by the algorithm. This concludes the proof of this lemma.
\end{proof}

\section{Towards Constant Arm-Memory}
\label{section: new algo}
In this section, we take a step towards designing an $(\varepsilon,\delta)$-PAC algorithm which has optimal sample complexity, while using only $O(1)$ arm-memory. 

\cite{ASSADI2020} have  proposed an algorithm which stores only one extra arm, assuming $\Delta$ is known. 
They maintain a candidate best-arm ({\em king}) and assign it a certain budget, essentially denoting the number of permissible arm pulls. For each arriving arm, both king and the new arm are sampled for some number of times, if needed in multiple levels, until either king wins against the new arm (by having a higher empirical mean at one of the levels) or the budget of the king is exhausted. If the budget is exhasuted then the  king is replaced with the new arm and  the process is repeated. 
The number of samples and budget is proportional to $1/\Delta^2$ and this careful choice of budget ensured 
 smaller sample complexity. 
However, when $\Delta$ is not known a similar approach will not work. 
See Appendix \ref{assadi_type} for more details. 

Inspired by their framework, we propose Algorithm \ref{algo:new_algo} for the case when $\Delta$ is not known to the algorithm.
In Step \ref{newchal},  we go to the next level of challenge subroutine only if there is a good chance that the newly arrived arm is significantly better compared to the king. Intuitively, we ensure two properties: 
(i) king only lose to an arm that has significantly better mean compared to the king, and (ii) when the true best-arm arrives, it can only lose to a king if their means are quite close. 
See Appendix \ref{randomorder} for more details. 


\begin{algorithm}[ht!]
	\caption{}
	\label{algo:new_algo}
	\begin{algorithmic}[1]
		\STATE $\{s_\ell\}^{\infty}_{\ell=1}:s_\ell=\lceil\frac{2}{(\frac{\varepsilon}{200})^2}\cdot\ln\left(\frac{4}{\delta}\right)\cdot 3^\ell\rceil$.
		\STATE $b:=\lceil\frac{2}{(\frac{\varepsilon}{200})^2}\cdot C \cdot \ln\left(\frac{4}{\delta}\right)+s_1\rceil$. 
		\texttt{//Here, $C$ is a large constant that we choose later.}
		\STATE Let \textbf{king} be the first available arm and set its budget $\phi:=\phi(\textbf{king})=0$.
		\WHILE {A new arm $\arm_i$ arrives in the stream}\label{assadi-type:st1}
		\STATE Increase the budget $\phi(\textbf{king})$ by $b$. 
		\STATE \textbf{\underline{Challenge subroutine:}} For level $\ell=1$ to $+\infty$ :
		\STATE If $\phi(\textbf{king})<s_\ell$: we declare $\textbf{king}$ defeated, make $\arm_i$ the king, initialize its budget to 0 and go to Step \ref{assadi-type:st1}.
		\STATE Otherwise, we decrease $\phi(\textbf{king})$ by $s_\ell$ and sample both \textbf{king} and $\arm_i$ for $s_\ell$ times.
		\STATE Let $\widehat{\mean}_{king}$ and $\widehat{\mean}_i$ denote the empirical means of \textbf{king} and $\arm_i$ in this trial.
		\STATE \label{newchal} If $\widehat{\mean}_{king} > \widehat{\mean}_i-0.495\varepsilon$, we declare \textbf{king} winner and go to the next arm in the stream; otherwise we go to the next level of challenge (increment $\ell$ by one).
		\ENDWHILE
		\STATE Return $\textbf{king}$ as the selected best-arm.   
	\end{algorithmic}
\end{algorithm}

We experimentally show that  this algorithm performs well on randomly generated input and provide a theoretical justification for these experimental results as follows.

\subsection{Random Order Arrival}
\begin{definition}
Let the P.D.F.~and C.D.F.~of a distribution $\mathcal{D}$ be $g(x)$ and $F(x)$, respectively. Then the truncated distribution $\mathcal{D}$ with support $(a,b]$ is a distribution where P.D.F. and C.D.F.~are $\big\{\frac{g(x)}{F(b)-F(a)}\text{ for }x\in(a,b], 0 \text{ for } x\notin(a,b]\big\}$ and $\big\{\frac{F(x)-F(a)}{F(b)-F(a)}\text{ for } x\in(a,b],1\text{ for } x>b,0 \text{ for }x\leq a\big\}$, respectively. 
\end{definition}

In the following theorem we show that if the arms arrive in random order and the means of arms come from various common distributions, then Algorithm \ref{algo:new_algo} is successful with reasonable probability. 
See Appendix \ref{randomorder} for the proof. 
 
\begin{thm}\label{thm:rand}
Let the means of the arms come from one of the following distributions with support in (0,1] : uniform, truncated normal, truncated lognormal, truncated exponential, beta, truncated gamma, truncated weibull. Then, under random order arrival, asymptotically (i.e., when $n \rightarrow \infty$) the probability that Algorithm \ref{algo:new_algo} returns an $\varepsilon$-best arm is greater than or equal to $0.9(1-\delta)$, $\forall \varepsilon\leq {1}/{10}$ .
\end{thm}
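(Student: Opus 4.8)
The plan is to analyze Algorithm~\ref{algo:new_algo} under random-order arrival by isolating the single event that matters: whether, at the moment the true best arm $\arm^*$ arrives in the stream, the current king is ``close enough'' to $\arm^*$ that $\arm^*$ either defeats it or (if $\arm^*$ loses) the eventual returned arm is still $\varepsilon$-best. I would first set up a clean ``good event'' $\mathcal{G}$ under which every invocation of the challenge subroutine behaves as intended: by Hoeffding's inequality, in each level-$\ell$ comparison the empirical means of king and challenger are within, say, $0.001\varepsilon$ of their true means with probability $1-\delta/4$ after $s_\ell$ samples (the geometric growth $3^\ell$ of $s_\ell$ makes the failure probabilities summable across levels, and the budget $b$ bounds how many levels/arms can be processed, so a union bound over all comparisons ever performed costs at most $\delta$ total). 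Conditioning on $\mathcal{G}$, the comparison in Step~\ref{newchal} effectively tests $\mean_{king} > \mean_i - 0.495\varepsilon \pm O(0.001\varepsilon)$, which gives the two structural guarantees advertised in the text: (i) a king only loses to an arm whose true mean exceeds it by roughly $0.495\varepsilon$, and (ii) when $\arm^*$ is the challenger it only loses to a king whose mean is within about $0.505\varepsilon < \varepsilon$ of $\mean^*$.

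Next I would argue that guarantee~(i) forces the sequence of kings to have strictly and substantially increasing means, so along the whole stream there can be at most $O(1/\varepsilon)$ king replacements; combined with the budget accounting ($b$ is chosen with the large constant $C$ precisely so that each king has enough budget to survive a ``reasonable'' number of challenges before being dethroned) this shows the algorithm does not prematurely exhaust budget and terminate on a bad arm. The crux is then a probabilistic statement about the random arrival order: I want to show that with probability $\ge 0.9$ the king in place when $\arm^*$ arrives already has mean at least $\mean^* - 0.005\varepsilon$ (so that, combined with (ii), whatever happens afterward the returned arm is $\varepsilon$-best). To get this, I would use the random-order assumption to say that $\arm^*$ appears at a uniformly random position, hence a constant fraction of the arms — in particular a constant fraction of the arms with mean very close to $\mean^*$ — arrive before it; I then need that at least one such near-optimal arm actually becomes king and is not itself defeated before $\arm^*$ shows up. This is where the distributional hypotheses enter: for each of the listed families (uniform, truncated normal/lognormal/exponential/gamma/weibull, beta) I would lower-bound the density of the mean distribution near its essential supremum, so that among $n$ i.i.d.\ draws the number landing in a window of width $\Theta(\varepsilon)$ below the top order statistic grows with $n$, and asymptotically ($n\to\infty$) at least one of them arrives early enough and wins. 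A short case analysis per distribution — really just ``the CDF approaches $1$ at a controlled rate near the right endpoint'' — handles all seven at once, and the $0.9$ factor absorbs the small-probability bad orderings.

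The main obstacle, and the part I would spend the most care on, is the second probabilistic step: controlling the king process along the random prefix before $\arm^*$. A near-optimal arm arriving early is not automatically enough, because it could be dethroned by an even-better (but still sub-$\arm^*$) arm, or it could fail to win its challenges if the incumbent king was already decent — one has to show the king's mean is monotonically non-decreasing in a way that ``traps'' it near the top once any near-optimal arm has passed through, and that budget exhaustion (which would reset the king to an arbitrary arm) is unlikely during the prefix. I expect to handle this by showing that once the king's mean exceeds $\mean^* - c\varepsilon$ for a suitable small $c$, guarantee~(i) prevents it from ever dropping below that level, and the budget bound (via $C$ large) ensures it cannot be dethroned except by a genuinely better arm; then the only remaining question is the probability that \emph{some} arm bringing the king into that regime arrives before $\arm^*$, which is a clean order-statistics / random-permutation computation feeding off the density bounds. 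The remaining details — the exact constants $C$, the bookkeeping of $0.495\varepsilon$ vs.\ $0.495\varepsilon + O(0.001\varepsilon)$, and verifying the per-distribution tail bounds — are routine and I would defer them to the appendix as the statement promises.
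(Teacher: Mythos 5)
Your proposal has the right ingredients at the level of the two structural guarantees (a king only loses, apart from budget exhaustion, to a challenger roughly $0.49\varepsilon$ better; the best arm only loses to a king within roughly $0.5\varepsilon$ of $\mu^*$), but the step you identify as the crux is both unattainable and not what is needed. By your own guarantee (i), the king's mean can only increase in jumps of about $0.49\varepsilon$, so the king reaches $[\mu^*-0.005\varepsilon,\mu^*]$ only if an arm from that tiny sliver happens to arrive while the incumbent king is still about $0.5\varepsilon$ below it; under random order this is a rare event, not a probability-$0.9$ event (for uniform means with $\varepsilon=1/10$, conditioned on the first arrival from the top-$0.5\varepsilon$ band, the chance it lies in the top $0.005\varepsilon$ sliver is on the order of $1\%$). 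Fortunately much less suffices: if the king ever enters $[\mu^*-0.49\varepsilon,\mu^*]$ it can never again be beaten on merit and is already $\varepsilon$-best, and if instead the king stays at least $0.5\varepsilon$ below $\mu^*$ then the best band arm dethrones it when it arrives. The paper's proof is precisely a computation of the probability of this dichotomy over the random permutation: condition on the first arm whose mean lands in the top band; either it is already within $0.49\varepsilon$ of $\mu^*$ (clean success), or one conditions further on the ``race'' between the first later arm that is at least $0.5\varepsilon$ better than the current king and the first arm in the grey gap $(0.49\varepsilon,0.5\varepsilon]$, where the challenge outcome and the budget drain are uncontrolled. This yields an explicit lower bound of the form $\inf_{p'}(1-\delta)\cdot(\,\cdot\,)$ involving ratios of integrals of the density of the means near its right endpoint, which is then evaluated separately for each of the seven listed distributions at $\varepsilon\le 1/10$, giving constants between roughly $0.92$ and $0.94$. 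That computation is exactly where the constant $0.9$, the restriction $\varepsilon\le 1/10$, and the specific list of distributions come from; it cannot be replaced by the generic remark that ``the CDF approaches $1$ at a controlled rate,'' and it does not tend to $1$ as $n\to\infty$ --- the missing $0.1$ is not ``small-probability bad orderings to be absorbed'' but a constant-probability set of orderings (first band arrival landing in the middle of the band, or losing the grey-zone race) that the analysis cannot credit as success.

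A second genuine gap is the construction of your good event $\mathcal{G}$: a union bound over \emph{all} comparisons ever performed does not survive $n\to\infty$, because $s_\ell$ does not grow with $n$, each level-$1$ comparison is inaccurate with constant probability (about $2(\delta/4)^{3}$), and there are $\Theta(n)$ challenges. The paper never needs all comparisons to be accurate: it uses per-challenge Hoeffding bounds only for the finitely many arms appearing in its event chain, together with a budget-based survival lemma (an adaptation of Lemma 3.3 of Assadi and Wang) saying that if every future challenger is at most $0.49\varepsilon$ better, then the probability that the king is ever defeated --- \emph{including} defeat by budget exhaustion --- is at most $\delta/8$. This same lemma is what your sketch is missing when you argue that ``$O(1/\varepsilon)$ king replacements plus budget accounting'' rules out premature budget exhaustion; by itself it does not, since a single grey-zone challenger can push a king through many levels and drain its budget, after which an arbitrarily bad arm takes over.
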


\subsection{Randomly generated input stream}
We now consider finding the $\varepsilon$-best arm when the means of $n$ arms are  i.i.d. samples from a distribution with C.D.F. $F(x)$. Let $F_n(x)$ be the empirical distributions of the means of these  $n$ arms. Then, we have that $\sup_{x\in\mathbb{R}}|F_n(x)-F(x)| \rightarrow 0$ as $n \rightarrow \infty$, due to Glivenko-Cantelli theorem. For practical purposes, $F(x) \approx F_n(x)$ for $n\geq 10^5$. 
Due to Theorem \ref{thm:rand}, for certain well-known distributions, under random order arrival, Algorithm \ref{algo:new_algo} returns an $\varepsilon$-best arm with probability at least $0.9(1-\delta)$, $\forall \varepsilon\leq\frac{1}{10}$. This implies that for any set of such $n$ arms, for at least $\frac{9}{10}\cdot n!$ out of a total $n!$ permutations, Algorithm \ref{algo:new_algo} returns an $\varepsilon$-best arm with probability at least $(1-\delta)$. 
Hence, the probability that for a randomly generated input stream, Algorithm \ref{algo:new_algo} will output an $\varepsilon$-best arm is at least $0.9(1-\delta)$. Here, the probability is calculated over all possible input streams.
\subsection{Experimental evaluation}
We now give experimental evidence that in practice Algorithm \ref{algo:new_algo} returns an $\varepsilon$-arm with high confidence, even when we reduce the number of samplings by a factor of 40000.
In the experiments, we run the algorithm on $R$ different instances and for each instance \arn{means of} $n$ arms were sampled from a distribution $\mathcal{D}$ with support in $(0,1]$,  mean $=\mu$ and variance $=\sigma^2$. Also we set $C=117, \varepsilon={1}/{10}, \delta={1}/{10}$. 
\ari{After we obtain the mean of an arm, the arm has Bernoulli reward distributions with that mean.}
In Figure (\ref{fig1a}), $R=1000$, $n=10^6$, $\mathcal{D}$ is the truncated normal distribution, $\mu=1/2$, $\sigma^2=1$. In Figure (\ref{fig1b}), $R=100$, $n=10^5$, $\mathcal{D}$ is the truncated normal distribution, $\mu=1/2$, $\sigma^2=1/2$. In Figure (\ref{fig1c}), $R=100$, $n=10^6$, $\mathcal{D}$ is the uniform distribution, $\mu=1/2$, $\sigma^2={1}/{12}$. In Figure (\ref{fig1d}), $R=1000$, $n=10^5$, $\mathcal{D}$ is the uniform distribution, $\mu=1/2$, $\sigma^2={1}/{12}$. 
See Appendix \ref{exp:dist123} for the experimental evaluation for more distributions. 
In all these cases, we almost always return an arm with mean within at most 0.05 ($<1/10=\varepsilon$) from the mean of the best-arm. 

\begin{figure}
	\centering
	\begin{subfigure}[b]{0.24\columnwidth}
		\centering
		\includegraphics[scale=0.26]{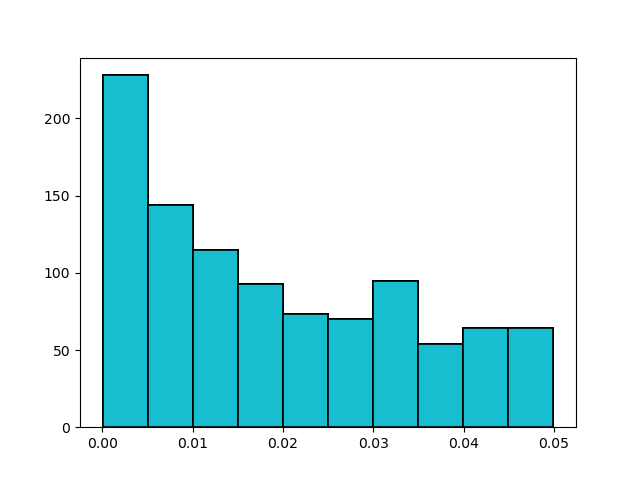}
		\caption[RandomInstance]%
		{{\small Instance 1}}    
		\label{fig1a}
	\end{subfigure}
	\hfill
	\begin{subfigure}[b]{0.24\columnwidth}  
		\centering 
		\includegraphics[scale=0.26]{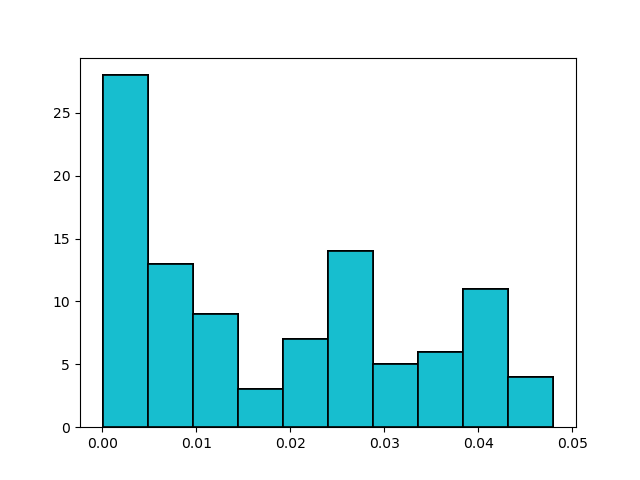}
		\caption[]%
		{{\small Instance 2}}    
		\label{fig1b}
	\end{subfigure}
	\hfill
	\begin{subfigure}[b]{0.24\columnwidth}   
		\centering 
		\includegraphics[scale=0.26]{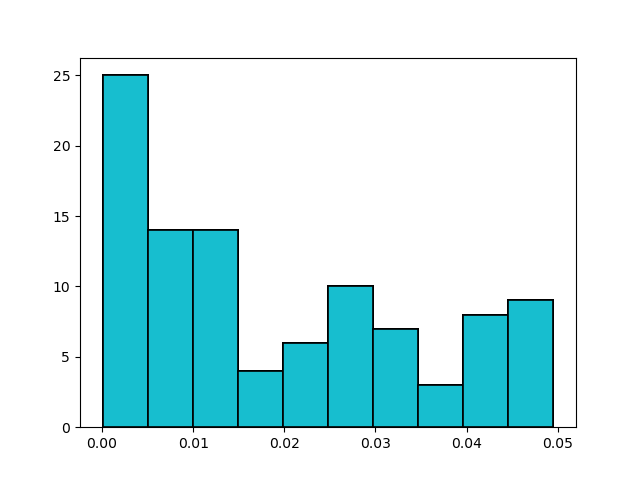}
		\caption[]%
		{{\small Instance 3}}    
		\label{fig1c}
	\end{subfigure}
	\hfill
	\begin{subfigure}[b]{0.24\columnwidth}   
		\centering 
		\includegraphics[scale=0.26]{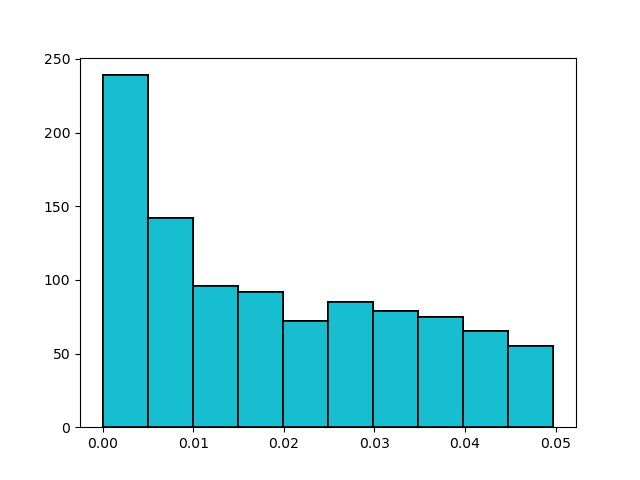}
		\caption[]%
		{{\small Instance 4}}    
		\label{fig1d}
	\end{subfigure}
	\caption[]
	{\small \textbf{X-axis}: Gap between means of the best arm and the arm returned by Algorithm \ref{algo:new_algo}, \textbf{Y-axis}: Count of such arms.} 
	\label{fig:mean and std of nets}
\end{figure}


 
\section{Conclusion}
\label{section: conclusion}
We study the \MAB problem  with bounded arm-memory where the arms arrive in a stream. We study two standard objectives: regret minimization and best-arm identification. 

For regret minimization, \cite{PEKOZ2003} show that when $m=1$, a single-pass algorithm can have linear regret unless the reward distributions of arms satisfy some additional conditions. 
\cite{LIAU2018} conjecture an instance-dependent lower bound on the expected regret. (Chaudhuri et al.~\citeyear{CHAUDHURI2019}) leave it as an open problem to prove a lower bound on the expected regret of any bounded arm-memory algorithm. Our first result shows a lower bound of $\Omega\big({T^{2/3}}/{m^{7/3}}\big)$ on the expected cumulative regret for single-pass \MAB algorithms with bounded arm-memory. The lower bound holds for any $m < n$. 
This shows a nice dichotomy for $T >> n$, as one can obtain  expected cumulative regret of $\tilde{O}\big({\sqrt{nT}})$ by standard UCB1 algorithm, where we are allowed to store $n$ arms.
Note that the question of proving a lower bound on the regret of multi-pass algorithms remains open.



The best-arm identification problem in a streaming model has  been studied by (Assadi et al. \citeyear{ASSADI2020}). We propose an $r$-round adaptive $(\varepsilon,\delta)$-PAC streaming algorithm that uses $O(r)$ arm-memory ($r \in [\log^* n]$) and has tight sample complexity. We also propose a best-arm identification algorithm that stores exactly one extra arm in the memory and outputs an $\varepsilon$-best arm with high confidence for most standard distributions. We note  that Algorithm \ref{algo:new_algo} and the techniques used in its analysis for random order arrival may help in resolving the  problem with adversarial order arrival of arms.

Another interesting question is to find the \emph{top-k} arms in the stream. For this problem, under the assumption that $\Delta$ is known, (Assadi et al. \citeyear{ASSADI2020}) propose an algorithm that has optimal sample complexity and stores exactly $k$ arms in the memory at any time-step. Note that our algorithm in Section \ref{section:round algo} can be directly extended to find the \emph{top-k} arms having optimal $r$-round sample complexity and $O(kr)$ space complexity, even when $\Delta$ is not known beforehand. It would be interesting to see if these ideas can be extended to an $(\varepsilon,\delta)$-PAC algorithm that finds the \emph{top-k} arms using the optimal number of samples, while improving the space complexity to say, $O(k + r)$.
\bibliographystyle{apalike}
\bibliography{references}
\newpage
\appendix
\section{Regret Minimization under Random Order Arrival}
\label{appendix: random regret}
\begin{thm}
In the \MAB setting, fix the number of arms $n$ and the time horizon $T$ . For any online \MAB algorithm, if we are allowed to store at most $m < n$ arms, then in the setting of random order arrival there exists an input instance such that $\mathbb{E}[R(T)] \geq \Omega(\frac{T^{2/3}}{m^{7/3}})$
\end{thm}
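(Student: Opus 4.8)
The plan is to reuse the proof of the regret lower bound in Section~\ref{sec:regretLower} almost verbatim, changing only the hard family so that it survives a uniformly random arrival order. The starting point is the remark that, \emph{as a multiset of means}, $\mathcal{I}_j$ for $j\in[m]$ is independent of $j$: it equals $\mathcal{M}_1:=\{(1+\epsilon)/2\}\cup\{1/2\}^{n-1}$. Together with $\mathcal{M}_0:=\{1\}\cup\{1/2\}^{n-1}$ these are the only two instances one needs, and under random order it is the permutation, not an index, that decides which arm carries the special mean. As before, I would fix a deterministic algorithm $\mathcal{A}$ that stores the first $m$ arrivals, and keep the definitions of $S_t$, the first-discard time $T'_\omega$, and the events $A_1,A_2^i,A^i$, where now $i$ ranges over the (random) set of first-$m$ arrivals.

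I would then condition on two constant-probability events of the permutation $\sigma$: for $\mathcal{M}_0$, the event $\mathcal{E}_0$ that the mean-$1$ arm arrives after stream position $m$; and for $\mathcal{M}_1$, the event that the special arm arrives within the first $m$ positions, at a position $i^{\ast}$ which is then uniform in $[m]$. Observations~\ref{obs1} and~\ref{obs2} transfer with only cosmetic changes: on $\mathcal{M}_0\cap\mathcal{E}_0$, $\omega\in\overline{A_1}$ forces $\Omega(1/(m^2\epsilon^2))$ regret, and on $\mathcal{M}_1$ with the special arm at any position $i^{\ast}\le m$, $\omega\in A^{i^{\ast}}$ forces $\Omega(\epsilon T)$ regret (using $L=o(T)$). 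Since the reward table records only rewards of the first $m$ arms, and since $P_{\mathcal{M}_0}$ and $P_{\mathcal{M}_1}$ (with the special arm at position $i^{\ast}$) differ only in the $i^{\ast}$-th column, the Pinsker + chain-rule + ``random coins'' computation is identical and gives $|P_{\mathcal{M}_0}(A^{i^{\ast}})-P_{\mathcal{M}_1}(A^{i^{\ast}})|\le\epsilon\sqrt{L}$ for each $i^{\ast}$; averaging over $i\in[m]$ using $\bigcup_{i}A^i=A_1$ then yields the analogue of Inequality~\eqref{eqKL}, namely $m\cdot\mathbb{E}_{i^{\ast}}\!\big[P_{\mathcal{M}_1}(A^{i^{\ast}})\big]+P_{\mathcal{M}_0}(\overline{A_1})=\Omega(1)$.

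To finish, I would choose the instance uniformly from $\{\mathcal{M}_0,\mathcal{M}_1\}$ and the order uniformly, and combine the two regret contributions, now weighted by $\mathbb{P}[\mathcal{E}_0]$ and $\mathbb{P}[i^{\ast}\le m]$; with the inequality from the previous step and a $\min$-of-products bound this produces a lower bound of the form $\Omega\big(\min((\text{prob. factor})/(m^2\epsilon^2),\ (\text{prob. factor})\cdot\epsilon T)\big)$, and re-optimizing $\epsilon$ accordingly gives $\Omega(T^{2/3}/m^{7/3})$ for some instance of the family; one also checks that the optimal $\epsilon$ still leaves $L=o(T)$, which holds for $T$ large. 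The step I expect to be the main obstacle is exactly this quantitative balancing when $m\ll n$: defining the reward table over \emph{only} the first $m$ arrivals makes $\mathbb{P}[i^{\ast}\le m]=m/n$ too small, so to keep the exponent $7/3$ one should instead plant the special structure in --- and define the reward table over --- a longer prefix of the stream whose length is tuned in terms of both $m$ and $n$, and then redo the averaging identity $\bigcup_i A^i=A_1$ and the $\epsilon$-optimization with that prefix length. Making all of these constants line up, rather than any new conceptual ingredient, is the delicate part.
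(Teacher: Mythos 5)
Your construction is the same as the paper's own proof in Appendix \ref{appendix: random regret}: your $\mathcal{M}_1,\mathcal{M}_0$ are exactly its $\mathcal{I}_1,\mathcal{I}_2$, you fix a deterministic algorithm that stores the first $m$ arrivals, keep the rewards table over those $m$ arms, the events $A_1,A^i$, and the Pinsker/chain-rule computation leading to the analogue of Inequality \eqref{eqKL}. The paper differs only cosmetically: it conditions on the probability-$1/2$ event that the permutation is ``good'' (the $(1+\epsilon)/2$ arm among the first $m$ positions under $\mathcal{I}_1$, the mean-$1$ arm outside them under $\mathcal{I}_2$) and then invokes the Section \ref{sec:regretLower} computation conditionally. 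In the regime $n=O(m)$ --- in particular the flagship case $m=n-1$, where the conditional weights $1/n$ and $(n-m)/n$ coincide with the $1/(m+1)$ weights of Section \ref{sec:regretLower} --- your argument is complete and is precisely the paper's argument.

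However, as written your proposal does not establish the bound when $m\ll n$, and you say so yourself. The branch that extracts $\Omega(\epsilon T)$ regret is weighted by the probability that the special arm sits at the particular position $i'$, which is exactly $1/n$; a literal transfer of the final combination therefore yields only $\Omega\big(T^{2/3}(n-m)^{1/3}/(n\,m^{4/3})\big)$ after re-optimizing $\epsilon$, which matches $\Omega(T^{2/3}/m^{7/3})$ only for $n=O(m)$. Moreover, your proposed repair --- planting the structure in, and defining the rewards table over, a longer prefix of tunable length $k$ --- does not remove this loss: with a single $\epsilon$-biased arm the chance that it occupies any one fixed stream position is $1/n$ no matter how long the prefix is, while enlarging the prefix only weakens the averaging step (roughly $P_0(A^{i'})\ge 3/(4k)$, forcing $L\approx 1/(k^2\epsilon^2)$), so the optimized bound is largest at $k=m$ and the dilution persists. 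Closing the $m\ll n$ case needs a genuinely different ingredient, e.g.\ a hard family in which a constant fraction of arms are $\epsilon$-biased (so the stored prefix contains one with constant probability), or an argument applied at the random arrival time of the special arm rather than only at the first discard among the first $m$ arrivals. To be fair, this is exactly the step the paper compresses into ``using arguments similar to Section \ref{sec:regretLower}'', so you have identified the genuinely delicate point; but your proposal, by its own admission, leaves it unresolved.
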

We consider $0$-$1$ rewards and the 2 input instances $\mathcal{I}_1,\mathcal{I}_2$ each containing $n$ arms, with parameter $\epsilon>0$ (where $\epsilon=\frac{1}{m^{1/3}T^{1/3}}$):
	\begin{align*}
	\mathcal{I}_1 = \left\{ \begin{array}{rcl}
	\mu_i & =(1+\epsilon)/2& \mbox{for~~}  i=1 \\ 
	\mu_i & = 1/2,~~~~~~~ &  \mbox{for~~}  i \neq 1
	\end{array}\right.
	\end{align*}

 \begin{align*}
	\mathcal{I}_2= \left\{ \begin{array}{rcl}
	\mu_i &= 1/2~~~~~~~  &\mbox{for~~} i\neq n\\
	\mu_i &= 1.~~~~~~~~~~ & \mbox{for~~}  i=n
	\end{array}\right.
	\end{align*}	
	\noindent In the above instances, $\mu_i$ denotes the expected reward of the $i^{th}$ arm in the input instance.\\
Let us fix a deterministic algorithm $\mathcal{A}$ that directly stores the first $m$ arms in the memory. We choose an input instance uniformly at random. Let this input instance be $\mathcal{I}'$. Then under random-order arrival setting one of the $n$ permutations of $\mathcal{I}'$ is chosen uniformly at random and is sent as the input stream to the Algorithm $\mathcal{A}$. Note that this equivalent to choosing a permutation $\mathcal{P}$ from $2n$ total distinct permutations of $\mathcal{I}_1$ and $\mathcal{I}_2$ uniformly at random and sending it to the Algorithm $\mathcal{A}$. Now assume that $m\leq n-1$. Let $\mathcal{I}_1'$ be the collection of distinct permutations of $\mathcal{I}_1$ such that the arm with expected reward of $(1+\epsilon)/2$ is in the first $m$ positions of the permutation.  Similarly, let $\mathcal{I}_2'$ be the collection of distinct permutations of $\mathcal{I}_2$ such that the arm with expected reward of $1$ is not in the first $m$ positions of the permutation. Clearly $|\mathcal{I}_1'|=m$ and $|\mathcal{I}_2'|=n-m$. 
Using arguments similar to Section \ref{sec:regretLower}, we can show that $\mathbb{E}[R(T)|\mathcal{P}\in\mathcal{I}_1'\cup\mathcal{I}_2']\geq \Omega\Big(\frac{T^{2/3}}{m^{7/3}}\Big)$. 
Hence, we have the following:
\begin{align*}
\mathbb{E}[R(T)] &\geq \mathbb{P}[\mathcal{P}\in\mathcal{I}_1'\cup\mathcal{I}_2']\cdot \mathbb{E}[R(T)|\mathcal{P}\in\mathcal{I}_1'\cup\mathcal{I}_2']\\
&\geq \frac{n-m+m}{2n}\cdot \Omega\Big(\frac{T^{2/3}}{m^{7/3}}\Big)\\
&\geq \Omega\Big(\frac{T^{2/3}}{m^{7/3}}\Big)
\end{align*}
The above result should hold for any randomized algorithm too as randomized algorithm are a distribution over deterministic algorithms.
\section{Important Inequalities}
\begin{lem}
(Hoeffding's inequality). Let $Z_1,\ldots,Z_n$ be independent bounded variables with $Z_i\in[0,1]$ for all $i\in[n]$. Then 
\begin{equation*}
\mathbb{P}\Big(\frac{1}{n}\sum_{i=1}^{n}(Z_i-\mathbb{E}[Z_i])\geq t)\leq e^{-2nt^2}\Big), \text{ and }
\end{equation*}
\begin{equation*}
\mathbb{P}\Big(\frac{1}{n}\sum_{i=1}^{n}(Z_i-\mathbb{E}[Z_i])\leq -t)\leq e^{-2nt^2}\Big),
\text{ for all } t\geq 0.
\end{equation*}
\end{lem}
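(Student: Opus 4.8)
The plan is to use the Chernoff (exponential-moment) method together with Hoeffding's lemma bounding the moment generating function of a bounded, centered random variable. I would prove only the upper tail in detail; the lower tail then follows immediately by applying the upper-tail bound to the variables $1 - Z_i$, which again lie in $[0,1]$ and whose centered sum equals the negation of $\sum_{i=1}^n (Z_i - \mathbb{E}[Z_i])$.

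First I would set $S = \sum_{i=1}^n (Z_i - \mathbb{E}[Z_i])$ and observe that $\{\frac{1}{n}\sum_{i=1}^n (Z_i - \mathbb{E}[Z_i]) \ge t\}$ is the same event as $\{S \ge nt\}$. For any $s > 0$, applying Markov's inequality to the nonnegative random variable $e^{sS}$ gives $\mathbb{P}(S \ge nt) = \mathbb{P}(e^{sS} \ge e^{snt}) \le e^{-snt}\,\mathbb{E}[e^{sS}]$. Using independence of the $Z_i$, the moment generating function factorizes as $\mathbb{E}[e^{sS}] = \prod_{i=1}^n \mathbb{E}\big[e^{s(Z_i - \mathbb{E}[Z_i])}\big]$, so it suffices to control each factor.

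The key step is Hoeffding's lemma: for any random variable $X$ with $\mathbb{E}[X]=0$ and $X \in [a,b]$ almost surely, $\mathbb{E}[e^{sX}] \le e^{s^2(b-a)^2/8}$. I would prove this by writing $\psi(s) = \ln \mathbb{E}[e^{sX}]$, noting $\psi(0)=0$ and $\psi'(0) = \mathbb{E}[X] = 0$, and checking that $\psi''(s)$ is the variance of $X$ under the exponentially tilted law $d\mathbb{P}_s \propto e^{sX}\,d\mathbb{P}$; since that law is supported in $[a,b]$, its variance is at most $(b-a)^2/4$ by Popoviciu's inequality. A second-order Taylor expansion of $\psi$ around $0$ then yields $\psi(s) \le s^2(b-a)^2/8$. (Alternatively one can use the convexity bound $e^{sx} \le \frac{b-x}{b-a}e^{sa} + \frac{x-a}{b-a}e^{sb}$ for $x \in [a,b]$ and optimize a one-variable auxiliary function, but the tilting argument is cleaner.) Applying this with $X = Z_i - \mathbb{E}[Z_i]$, whose range has length at most $1$ since $Z_i \in [0,1]$, gives $\mathbb{E}\big[e^{s(Z_i - \mathbb{E}[Z_i])}\big] \le e^{s^2/8}$ for each $i$.

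Combining the two steps yields $\mathbb{P}(S \ge nt) \le e^{-snt + n s^2/8}$ for every $s > 0$. Finally I would optimize the exponent over $s$: the quadratic $-snt + ns^2/8$ is minimized at $s = 4t$, at which the exponent equals $-2nt^2$, so $\mathbb{P}\big(\frac{1}{n}\sum_{i=1}^n (Z_i - \mathbb{E}[Z_i]) \ge t\big) \le e^{-2nt^2}$, and the symmetric argument gives the lower tail. The main obstacle is Hoeffding's lemma, specifically the variance-of-tilted-measure bound; the factorization, the Chernoff inequality, and the final scalar optimization are all routine.
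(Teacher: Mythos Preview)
Your proof is correct and is the standard Chernoff--Hoeffding argument. The paper itself does not prove this lemma at all: it simply states Hoeffding's inequality as a known result in the ``Important Inequalities'' appendix and uses it as a black box, so there is no paper proof to compare against.
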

\begin{lem}\label{comp:arm}
Let $\arm_1$ and $\arm_2$ be two different arms with means $\mean_1$ and $\mean_2$. Suppose $\mean_1 - \mean_2 \geq \theta$ and we sample each arm $\frac{K}{\theta^2}$ times to obtain empirical biases $\widehat{\mean}_1$ and $\widehat{\mean}_2$. Then,
\begin{equation*}
\mathbb{P}(\widehat{\mean}_1 \leq \widehat{\mean}_2)\leq 2\cdot e^{(-K/2)}
\end{equation*}
\end{lem}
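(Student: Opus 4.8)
The plan is to decompose the ``bad'' event $\{\widehat{\mean}_1 \le \widehat{\mean}_2\}$ into two deviation events, each controlled by Hoeffding's inequality, and then take a union bound. First I would observe that since $\mean_1 - \mean_2 \ge \theta$, if both empirical means were within $\theta/2$ of their true means, i.e. $\widehat{\mean}_1 > \mean_1 - \theta/2$ and $\widehat{\mean}_2 < \mean_2 + \theta/2$, then we would have
\[
\widehat{\mean}_1 > \mean_1 - \theta/2 \ge \mean_2 + \theta/2 > \widehat{\mean}_2,
\]
contradicting $\widehat{\mean}_1 \le \widehat{\mean}_2$. Hence the event $\{\widehat{\mean}_1 \le \widehat{\mean}_2\}$ is contained in $\{\widehat{\mean}_1 \le \mean_1 - \theta/2\} \cup \{\widehat{\mean}_2 \ge \mean_2 + \theta/2\}$.

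Next I would bound each of the two events. Each of $\widehat{\mean}_1$ and $\widehat{\mean}_2$ is an average of $n := K/\theta^2$ independent rewards bounded in $[0,1]$ with the correct expectation, so Hoeffding's inequality (the version stated in the appendix) applies with $t = \theta/2$. This gives
\[
\mathbb{P}\!\left(\widehat{\mean}_1 \le \mean_1 - \tfrac{\theta}{2}\right) \le e^{-2 n (\theta/2)^2} = e^{-2 \cdot (K/\theta^2) \cdot \theta^2/4} = e^{-K/2},
\]
and identically $\mathbb{P}(\widehat{\mean}_2 \ge \mean_2 + \theta/2) \le e^{-K/2}$. A union bound over the two events yields $\mathbb{P}(\widehat{\mean}_1 \le \widehat{\mean}_2) \le 2 e^{-K/2}$, as claimed.

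There is essentially no obstacle here; the only points requiring a little care are (i) making sure the reward random variables are i.i.d.\ and bounded in $[0,1]$ so Hoeffding applies directly (which holds since rewards lie in $[0,1]$ by the model), and (ii) the arithmetic that $2 n (\theta/2)^2 = K/2$ so that the sample count $K/\theta^2$ is exactly what makes the exponent clean. If one wants to be fully rigorous about a possibly non-integer sample count $K/\theta^2$, one simply replaces it by $\lceil K/\theta^2 \rceil$, which only improves the bound.
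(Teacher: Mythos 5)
Your proof is correct and follows essentially the same approach as the paper: both use the $\theta/2$ concentration events for the two empirical means together with Hoeffding's inequality at $t=\theta/2$, so that the sample size $K/\theta^2$ yields the exponent $K/2$. The only (cosmetic) difference is that you take a union bound over the two bad events, whereas the paper lower-bounds the probability of the good event by multiplying the two complementary probabilities using independence; both give the same bound $2e^{-K/2}$.
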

\begin{proof}
\begin{align*}
\mathbb{P}(\widehat{\mean}_1 > \widehat{\mean}_2) & \geq \mathbb{P}\Big(\mean_1 - \frac{\theta}{2}< \widehat{\mean}_1 \text{ and }  \widehat{\mean}_2 < \mean_2 + \frac{\theta}{2}\Big)\\
&= \mathbb{P}\Big(\mean_1 - \frac{\theta}{2}< \widehat{\mean}_1\Big) \cdot \mathbb{P}\Big(\widehat{\mean}_2 < \mean_2 + \frac{\theta}{2}\Big) \\
&\geq \big(1- e^{-2\cdot\frac{K}{\theta^2}\cdot (\frac{\theta}{2})^2}\big)\cdot \big(1- e^{-2\cdot\frac{K}{\theta^2}\cdot (\frac{\theta}{2})^2}\big) \tag{Due to Hoeffding's Inequality}\\
& \geq (1-2\cdot e^{-K/2}).
\end{align*}
Hence, $\mathbb{P}(\widehat{\mean}_1 \leq \widehat{\mean}_2)\leq 2\cdot e^{-K/2}$.
\end{proof}

\begin{thm}[Berry-Esseen Theorem]
There exists a positive constant $C\leq 1$ such that if $X_1, X_2, \ldots, X_n$ are i.i.d.~random variables with $\mathbb{E}[X_i] = 0$, $\mathbb{E}[X_i^2] = \sigma^2 > 0$, and $\mathbb{E}[|X_i|^3] = \rho < \infty$, and if we define $$Y_n = \frac{X_1 + X_2 + \ldots + X_n}{n}$$ to be the sample mean, with $F_n$ being the cumulative distribution function of $\frac{Y_n\sqrt{n}}{\sigma}$, and $\Phi$ be the cumulative distribution function of the standard normal distribution $\mathcal{N}(0,1)$, then for all $x$ and $n$,
$$|F_n(x) - \Phi(x)| \leq \frac{C\rho}{\sigma^3\sqrt{n}}.$$
\end{thm}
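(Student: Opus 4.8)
The plan is to prove this via characteristic functions together with Esseen's smoothing inequality, the standard Fourier-analytic route to quantitative central limit theorems. First I would reduce to the standardized case: replacing each $X_i$ by $X_i/\sigma$ rescales the sum and leaves the Lyapunov ratio $\beta := \rho/\sigma^3$ invariant, so it suffices to treat summands of unit variance and to bound $\sup_x |F_n(x) - \Phi(x)|$ by $C\beta/\sqrt n$. Writing $\varphi$ for the characteristic function of a standardized summand, the characteristic function of the normalized sum $Y_n\sqrt n$ becomes $f(t) = \varphi(t/\sqrt n)^n$, while that of $\Phi$ is $g(t) = e^{-t^2/2}$.

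The first key step is Esseen's smoothing inequality, which converts a uniform distance between distribution functions into an integral of the difference of their characteristic functions: for every $T>0$,
\[
\sup_x |F_n(x) - \Phi(x)| \le \frac{1}{\pi}\int_{-T}^{T}\left|\frac{f(t) - g(t)}{t}\right|\,dt + \frac{24}{\pi T}\sup_x \Phi'(x),
\]
where $\sup_x \Phi'(x) = 1/\sqrt{2\pi}$. I would either cite this or derive it by convolving $F_n - \Phi$ with a fixed smoothing kernel (e.g.\ the Fej\'er kernel) whose Fourier transform is supported in $[-T,T]$, controlling the smoothing error through the bounded density of $\Phi$.

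The technical heart is bounding the integrand on the window $|t| \le T$ with the choice $T \asymp \sqrt n/\beta$. Taylor-expanding $e^{isx}$ to third order and using the moment conditions $\mathbb{E}[X]=0$, $\mathbb{E}[X^2]=1$, $\mathbb{E}[|X|^3]=\beta$ of a standardized summand $X$ gives $\varphi(s) = 1 - s^2/2 + \theta(s)$ with $|\theta(s)| \le \beta|s|^3/6$; in particular $|\varphi(s)| \le e^{-s^2/2 + \beta|s|^3/6}$ for small $s$. Then, via the elementary bound $|a^n - b^n| \le n\,|a-b|\,\max(|a|,|b|)^{n-1}$ applied to $a = \varphi(t/\sqrt n)$ and $b = e^{-t^2/(2n)}$, I would establish an estimate of the form $|f(t) - g(t)| \le C'\,(\beta|t|^3/\sqrt n)\,e^{-t^2/4}$ valid for $|t| \le T$. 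Dividing by $|t|$ and integrating leaves $\frac{C'\beta}{\sqrt n}\int_{-\infty}^{\infty} t^2 e^{-t^2/4}\,dt = O(\beta/\sqrt n)$, while the second term of the smoothing inequality is $\frac{24}{\pi\sqrt{2\pi}\,T} = O(\beta/\sqrt n)$ for $T \asymp \sqrt n/\beta$; summing the two yields the claimed rate.

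The main obstacle is twofold. Controlling $|\varphi(t/\sqrt n)|^{n-1}$ uniformly on $|t|\le T$ is delicate: the window $T$ must be small enough that $\varphi(t/\sqrt n)$ stays in the region where the quadratic Taylor term dominates the cubic remainder, so that $\max(|\varphi|,|g|)^{n-1}$ admits a clean Gaussian envelope — this is precisely why $T$ is taken proportional to $\sqrt n/\beta$, forcing a careful interplay between the range of integration and the remainder bound. The second, genuinely sharper, difficulty is pinning the absolute constant below $1$: the Lyapunov inequality gives $\beta = \mathbb{E}[|X|^3] \ge (\mathbb{E}[X^2])^{3/2} = 1$, so the statement is vacuous unless $\sqrt n > C\beta$, yet the elementary smoothing argument above only delivers a universal constant of order several units rather than $C\le 1$. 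Driving the constant down to $1$ requires either the refined truncation-and-optimization estimates of Esseen and his successors or, in a self-contained treatment, invoking the known sharp value (currently below $0.48$), both of which comfortably give $C \le 1$.
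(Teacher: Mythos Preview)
Your sketch follows the classical Fourier-analytic route (Esseen's smoothing inequality plus a third-order Taylor bound on the characteristic function), and the outline is essentially correct as a proof of the Berry--Esseen theorem; you also correctly flag that getting the constant down to $C\le 1$ requires sharper work than the naive smoothing argument yields.

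However, there is nothing to compare against: the paper does \emph{not} prove this theorem. It is stated in an appendix titled ``Important Inequalities'' alongside Hoeffding's inequality, purely as a quoted classical result to be used as a black box (specifically, to control the normal-approximation error in the counterexample of Appendix~\ref{subsec:adversarial example}). No proof or even proof sketch is offered. So while your proposal is a reasonable self-contained argument, the paper's ``proof'' is simply a citation of the standard literature.
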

\section{Omitted Proofs from Section \ref{section:round algo}}
\label{appendix: omitted proofs}

\RoundAlgoLevelSuboptimality*
\begin{proof}
	Let us consider some level $\ell$ and a set of arms $\arm_{\ell_1},\arm_{\ell_2},\ldots,\arm_{\ell c_\ell}$, which increases the counter $C_\ell$ from $0$ to $c_\ell$. Denote the best arm among them as $\arm'$ and let $\arm^*_\ell$ be  the empirically best arm seen so far since the arrival of $\arm_{\ell_1}$.  Our algorithm works as follows: once an arm arrives at level $\ell$, it is pulled $s_\ell$ number of times. Next, we compare its empirical mean with that of $\arm^*_\ell$, which was computed when the arm corresponding to $\arm^*_\ell$ arrived at level $\ell$. The arm with the greater empirical mean is maintained as $\arm^*_\ell$ and its empirical mean is stored for future comparisons. Once $C_\ell = c_\ell$, $\arm^*_\ell$ is sent to level $\ell + 1$. Note that if $0<C_\ell< c_\ell$, and the condition in Step \ref{st:whilee} is not satisfied then we will sample $\arm^*_\ell$ in the Step \ref{special:st}.  Note that this is equivalent to simultaneously pulling each of the $c_\ell$ arms $s_\ell$ number of times, and then sending the empirically best arm to level $\ell+1$.
	
	Using Lemma \ref{comp:arm}, we can show that at any level $\ell$, if two arms have reward gap $\mean_i - \mean_j \geq \varepsilon_\ell$ then $\mathbb{P}[\widehat{\mean}_i < \widehat{\mean}_j] \leq \frac{\delta}{2^{\ell+1}\cdot c_\ell}$. Since at most $c_\ell$ arms arrive at level $\ell$ before the counter $C_\ell$ reaches $c_\ell$ or the condition in Step \ref{st:whilee} is not satisfied, taking union bound we have that the probability that an arm with reward gap $\geq \varepsilon_\ell$ with respect to $\arm'_\ell$ is either sent to level $\ell+1$ or is sampled in the Step \ref{special:st} is at most $\frac{\delta}{2^{\ell+1}\cdot c_\ell}\cdot c_\ell=\frac{\delta}{2^{\ell+1}}$.
\end{proof}

\RoundAlgoLevelSuboptimalityTwo*
\begin{proof}
	Using Lemma \ref{comp:arm}, we can show that among the arms that are being considered here, if two arms have reward gap $\mean_i - \mean_j \geq \varepsilon_r$ then $\mathbb{P}[\widehat{\mean}_i < \widehat{\mean}_j] \leq \frac{\delta}{2^{r+1}\cdot c_r}$. As $c_r=n$, by taking union bound we get that with probability at least $1 - \frac{\delta}{2^{r+1}}$, an arm with reward gap at most $\varepsilon_r$ from $\arm'_r$ is returned by the Algorithm.
\end{proof}	
\section{Adversarial Example for  \cite{ASSADI2020}}
\label{subsec:adversarial example}
%
In this section, we show an adversarial example to show that the algorithm of \cite{ASSADI2020} is not $(\varepsilon, \delta)$-PAC.
First, we state the algorithm of \cite{ASSADI2020} (Algorithm \ref{algo:AssadiOriginal}) here for completeness. The algorithm takes as input $n \in \mathbb{N}$ arms arriving in a stream in an arbitrary order, an approximation parameter $\varepsilon \in [0,1/2)$, and the confidence parameter $\delta \in (0,1)$. We first define some notation that is used throughout this work, which is consistent with the notation used in \cite{ASSADI2020}.
\begin{align*}
	&\{r_\ell\}_{\ell=1}^\infty:\hspace{0.3cm} r_1 = 4;~~ r_\ell = 2^{r_\ell}; \tag{intermediate parameters used to define $s_\ell$ below}\\
	&\varepsilon_\ell = \varepsilon/({10\cdot 2^{\ell - 1}}); &\tag{intermediate estimate of gap parameter}\\
	&\beta_\ell = {1}/{\varepsilon_\ell^2};  \\
	&\{s_\ell\}_{\ell=1}^\infty:\hspace{0.3cm} s_\ell = 4\beta_\ell\big(\ln(1/\delta) + 3r_\ell \big); \tag{number of samples per arm in level $\ell$}\\
	&\{c_\ell\}_{\ell=1}^{\ceil{\log^* n} + 1}:\hspace{0.3cm} c_1 = 2^{r_1};~~ c_\ell = {2^{r_\ell}}/{2^{\ell - 1}};\\& \text{(the number of arms processed in level $\ell$ before}\\ &\text{sending $\arm^*_\ell$ to level $\ell+1$)}
\end{align*}

For Algorithm \ref{algo:AssadiOriginal} to be $(\varepsilon,\delta)$-PAC, it has to find an $\varepsilon$-best arm with probability at least $1-\delta$. We next provide a formal argument for why Algorithm \ref{algo:AssadiOriginal} is not an $(\varepsilon,\delta)$-PAC algorithm by providing a counterexample. 
\begin{algorithm}[ht!]
	\caption{}
	\label{algo:AssadiOriginal}
	\begin{algorithmic}[1]
		\STATE $\{r_\ell\}^{\infty}_{\ell=1}:r_1:=4,r_{\ell+1}=2^{r_\ell}$;
		\STATE $\varepsilon_\ell=\frac{\varepsilon}{10\cdot 2^{\ell-1}}$;
		\STATE $\beta_\ell=\frac{1}{\varepsilon^2_\ell}$;
		\STATE $s_\ell=4\beta_\ell\big(\ln(\frac{1}{\delta})+3r_\ell\big)$;
		\STATE $c_1=2^{r_1},$ $c_\ell=\frac{2^{r_\ell}}{2^{\ell-1}}(\ell \geq 2)$;
		\STATE Counters: $C_1,C_2,\ldots,C_t$ initialized to 0 where $t=\lceil \log^*(n)\rceil+1$.
		\STATE Stored arms: $\arm^*_1,\arm^*_2,\ldots,\arm^*_t$ the most biased arm of $\ell$-th level.
		\STATE Stored empirical means:$p_1,p_2,\ldots,p_t$ the highest empirical mean of $\ell$-th level.
		\WHILE {A new arm $\arm_i$ arrives in the stream}
		\STATE Read $\arm_i$ to memory
		\STATE \textbf{\underline{Aggressive Selective Promotion:}} Starting from level $\ell=1$:
		\STATE \label{next level} Sample \textbf{both} $\arm_i$ and $\arm^*_\ell$ for $s_\ell$ times.
		\STATE Drop $\arm_i$ if $\hat{p}_{\arm_i} < \hat{p}_{\arm^*_\ell}$, otherwise replace $\arm^*_\ell$ with $\arm_i$.
		\STATE Increase $C_\ell$ by 1.
		\STATE If $C_\ell=c_\ell$, make $C_\ell$ equal to 0, send $\arm^*_\ell$ to the next level by calling Line \ref{next level} with $(\ell=\ell+1)$.
		\ENDWHILE
		\STATE Return $\arm^*_t$ as the selected most bias arm.
	\end{algorithmic}
\end{algorithm}

Let the arms arrive in the stream be $\arm_1,\ldots,\arm_n$, where $n>(c_1\cdot c_2)$. 
Let $\arm_i$ be the $i^{th}$ arm to arrive in the stream and has a mean $p_i$, where $i\in[n]$. 
For all $i>c_1\cdot c_2$, let $p_i=0$. 
For all $i\leq c_1\cdot c_2$, let $p_i=\frac{1}{2}-(\lceil \frac{i}{c_1}-1\rceil)\cdot\frac{\varepsilon}{c_2-2}$. 
Let $\arm_{k_1},\arm_{k_2},\ldots,\arm_{k_{c_2}}$ be the first $c_2$ arms which arrive at level $2$ (note that all the arms which arrive at level $2$ after $\arm_{k_{c_2}}$ will have a mean of 0). Let $\arm^*_2$ be the most biased arm (based on the sampling) at the end of Aggressive Selection Promotion step for level $\ell=2$ for  $\arm_{k_{c_2}}$. 
Now $C_2 = c_2$ after the arrival of $\arm_{k_{c_2}}$. 
Thus $\arm^*_2$ will be sent to level $3$.
As all the following remaining arms have lesser means, at the end, the algorithm finally returns an arm with mean less than or equal to the mean of $\arm^*_2$. 
Note that $\forall i\in [c_2]$, all the arms in the set $\{\arm_{(i-1)\cdot c_1+1},\ldots,\arm_{i\cdot c_1}\}$ have the same mean and one among them is sent as $\arm_{k_{i}}$ to level $2$. Therefore $p_{k_i} = p_{k_1}-(i-1)\frac{\varepsilon}{b}$, where $b = 2^{15}-2$ and $p_{k_1}=\frac{1}{2}$. So for any $i \in [c_{2}-1]$, $p_{k_i} - p_{k_{i+1}} = \frac{\varepsilon}{b}$. 
We will show that with probability $>\delta$, we send $\arm_{k_{c_2}}$ to level $3$, and $\frac{1}{2}-p_{k_{c_2}}>\varepsilon$. 
 
For $i\in[c_2]$, let $Y_i^t$ denote the reward when we sample the arm $\arm_{k_i}$ for the $t$-th time. We assume that $Y_i^t \sim \text{Bern}(p_{k_i})$ and $\mathrm{Var}[Y_i^t]=p_{k_i}(1-p_{k_i})$ (Note that this is a reasonable assumption as the Algorithm  \ref{algo:AssadiOriginal} should work for any distribution). For $i \in [c_2 - 1]$, let $Z_i^t = Y_i^t - Y_{i+1}^t$. Clearly, $\mu_i:=\mathbb{E}[Z_i^t] = \frac{\varepsilon}{b}$. Let $\sigma_i^2:=\mathrm{Var}[Z_{i}^t]$. Let us assume that $\varepsilon<\frac{1}{5}$ (Later we will choose $\varepsilon$ in such a way so that this condition is satisfied). In this case, $\sigma_i^2=\mathrm{Var}[Y_i^t]+\mathrm{Var}[Y_{i+1}^t]>2(p_{k_{c_2}})(1-p_{k_{c_2}})>\frac{2}{5}$. Let $Z_i = Z_i^1 + Z_i^2 + \ldots + Z_i^{s_2}$. Note that, if every arm from the set $\arm_{k_1},\arm_{k_2},\ldots,\arm_{k_{c_2}}$ when it arrives in the level $2$ beats $\arm^*_2$ in the challenge, then $\arm_{k_{c_2}}$ will be sent to level $3$. 
Thus, $\{Z_i < 0, \forall i\in[c_2-1]\} \subseteq \{\arm_{k_{c_2}} ~\text{is sent to level}~ 3\}$.

Assuming that $\delta$ and $\varepsilon$ are very small (which we will choose appropriately to bound the error), we approximate (using the central limit theorem) the distribution of $Z_i$ using the normal distribution $\mathcal{N}(s_2\mu_i,s_2\sigma_i^2)$.

\begin{align*}
\mathbb{P}[Z_i<0]  &= \mathbb{P}[Z_i > 2s_2\mu_i]\\
& = 1-\frac{1}{2}\bigg(1+\text{erf}\Big(\frac{s_2\mu_i}{\sqrt{2s_2 \sigma_i^2}}\Big)\bigg)\\
&= \frac{\text{erfc}\Big(\frac{s_2\mu_i}{\sqrt{2s_2 \sigma_i^2}}\Big)}{2}\\
& \geq \frac{\text{erfc}\Big(s_2\mu_i \cdot \sqrt{\frac{5}{4s_2}}\Big)}{2} \tag{Since, $\sigma_i^2 \geq \frac{2}{5}$ and erfc$(x)$ is decreasing in $x$}\\
& = \frac{\text{erfc}\Big(\frac{20\sqrt{5}}{b}\sqrt{\ln\big(\frac{e^{3r_2}}{\delta}\big)}\Big)}{2} \tag{Substituting $s_2 = 4\beta_2\big(\ln(\frac{1}{\delta})+3r_2\big)$}\\
&\geq \frac{\sqrt{\gamma-1}}{2}e^{-\frac{2000\cdot \gamma}{b^2}\ln(\frac{e^{3r_2}}{\delta})}\tag{Since $\text{erfc}(x)\geq {(\gamma-1)}^{1/2}e^{-\gamma x^2}, \forall x\geq 0, \text{where }\gamma:=\sqrt{{2e}/{\pi}}$}\\
&= \frac{\sqrt{\gamma-1}}{2}\Big(\frac{\delta}{e^{3r_2}}\Big)^{\frac{2000\cdot \gamma}{b^2}}.\\ 
\end{align*}

Thus, we can lower bound the probability that $\arm_{k_{c_2}}$ is sent to level $3$ as follows:
\begin{align*}
\mathbb{P}[\arm_{k_{c_2}} ~\text{is sent to level}~ 3]
 &\geq \mathbb{P}[Z_i < 0, \forall i\in[c_2-1]]\\
& = \prod_{i\in [c_2 -1]}\mathbb{P}[Z_i < 0] \tag{Since, the arm pulls are independent}\\
& \geq \Big( \frac{\sqrt{\gamma-1}}{2}\Big(\frac{\delta}{e^{3r_2}}\Big)^{\frac{2000\cdot \gamma}{b^2}}\Big)^{c_2-1}\\
& = \frac{\delta^{\frac{2000\cdot\gamma\cdot(c_2-1)}{b^2}}}{K}, \text{ where } K= \Big(\frac{2\cdot e^{\frac{3r_2\cdot 2000\cdot \gamma}{b^2}}}{\sqrt{\gamma-1}}\Big)^{c_2-1}.
\end{align*}

Consider that function $f(x)=\frac{e^{x\left(1-\frac{2000\cdot\gamma\cdot(c_2-1)}{b^2}\right)}}{K}$. Since $f(x)$ is an increasing and convex function, there is a constant $c$ such that $f(c)>2$. This implies that for $\delta=e^{-c}$ we have the following:
\begin{align*}
\mathbb{P}[\arm_{k_{c_2}} ~\text{is sent to level}~ 3] & \geq \frac{\delta^{\frac{2000\cdot\gamma\cdot(c_2-1)}{b^2}}}{K}\\
& = f(c)e^{-c}\\
& >2\delta.
\end{align*}

Now, we bound the error in calculation of the above probability. Using the Berry-Esseen theorem, the error $\epsilon_i$ of calculating $\mathbb{P}[Z_i<0]$ is upper bounded by $\frac{C \rho}{\sigma_i^3\sqrt{s_2}}\leq \frac{ \varepsilon}{\sqrt{\ln(\frac{1}{\delta})}}$, where $C\leq 1$, $\rho=\mathbb{E}[|Z_i^t-\mean_i|^3]\leq 8 \text{ (as } |Z_i^t-\mean_i|\leq 2)$ and $\sigma_i^2=\mathrm{Var}[Z_i^t-\mean_i]=\mathrm{Var}[Z_i^t]$. Also we assumed $\mathrm{Var}[Z_i^t]>\frac{2}{5}$ (we will choose $\varepsilon$ in such a way that this is satisfied). If we choose $\varepsilon$ such that $\varepsilon < \frac{\delta\sqrt{\ln(\frac{1}{\delta})}}{c_2}$ and $\varepsilon<1/5$, then $\epsilon_i<\frac{\delta}{c_2}$. Hence, we can conclude that $\mathbb{P}[\arm_{k_{c_2}} ~\text{is sent to level}~ 3]>2\delta-\sum_{i=1}^{c_2-1}\epsilon_i > 2\delta-\delta=\delta$.

As $\frac{1}{2}-p_{k_{c_2}}=\Big(\frac{c_2-1}{c_2-2}\Big)\cdot\varepsilon$, we can conclude that with probability $>\delta$, the Algorithm \ref{algo:AssadiOriginal} returns an arm with reward gap $> \varepsilon$.

\section{Lower bound for r-round adaptive streaming algorithm}\label{r-lowerrr-bnd}
In this section, we use the following lower bound for $r$-round adaptive offline algorithm model defined in \cite{AGARWAL2017} to provide a lower bound on the sample complexity for any $r$-round adaptive streaming algorithm. 

\begin{lem}\label{agar:thm4}
(\cite{AGARWAL2017}). For any parameter $\Delta\in (0,1/2)$ and any integer $n,k\geq 1$, there exists a distribution $\mathcal{D}$ on input instances of the $k$ most biased coins problem with $n$ coins and gap parameter $\Delta_k=\Delta$ such that for any integer $r\leq 1$, any $r$-round algorithm that finds the $k$ most biased coins in the instances sampled from $\mathcal{D}$ with probability at least $3/4$ has a sample complexity $\Omega(\frac{n}{\varepsilon^2\cdot r^4}\cdot \ilog^{(r)}(n/k))$
\end{lem}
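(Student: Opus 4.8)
The plan is to derive this from the recursive hard-instance construction of \cite{AGARWAL2017}; I outline the argument and indicate where the work lies. Since $\mathcal{D}$ is a fixed distribution over inputs and we only require success probability $3/4$, Yao's minimax principle lets me restrict to \emph{deterministic} $r$-round algorithms and lower bound their expected number of pulls on inputs drawn from $\mathcal{D}$. It also suffices to treat $k=1$: the instances split into $k$ disjoint gadgets on about $n/k$ coins each, each gadget hiding its own most-biased coin, so any algorithm that reports the top $k$ coins must in effect solve every gadget, and a direct-sum argument turns a per-gadget bound of $\Omega\!\big(\tfrac{N}{\Delta^2 r^4}\,\ilog^{(r)}(N)\big)$ with $N:=n/k$ into the stated bound.

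For $k=1$ on $N$ coins I would use a nested ``needle in nested haystacks'' distribution with $r$ levels: the coins are organized into a hierarchy of blocks whose sizes shrink roughly like iterated logarithms, one block being \emph{active} at each level and containing within it the active sub-block of the next level, down to a single planted coin of bias $1/2+\Delta$ while every other coin has bias $1/2$. Resolving the planted coin is then equivalent to identifying the active block at each of the $r$ levels, and at level $j$ there are about $\ilog^{(j)}(N)$ competing blocks. The reason the nesting bites is that within a single round an $r$-round algorithm fixes its per-coin sample counts in advance (it is adaptive only between rounds), so it cannot first learn which level-$j$ block is active and then retarget its budget inside that block during the same round --- effectively it can peel off at most one level of the hierarchy per round, and at each level a coin must be pulled $\Omega(1/\Delta^2)$ times before its block can be told apart with constant confidence.

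The core is then an induction over rounds. After round $j$ I would track the sub-family of $\mathcal{D}$ consistent with the transcript so far, and argue --- via the chain rule and Pinsker's inequality from Theorem \ref{thm:klprop} --- that unless the algorithm has already made $\Omega\!\big(\tfrac{N}{\Delta^2 r^4}\,\ilog^{(j+1)}(N)\big)$ pulls within rounds $1,\dots,j+1$, the conditional distributions over which level-$(j+1)$ sub-block is active stay within $\KL$-divergence $o(1)$ of one another, so the round-$(j+1)$ transcript carries essentially no information with which to descend to level $j+2$. The base case $j=0$ just says that distinguishing the active top-level block among $\approx\ilog^{(1)}(N)=\log N$ candidates, each coin needing $\Omega(1/\Delta^2)$ pulls, is already expensive. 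Unrolling to $j=r-1$ and using that a $3/4$-correct algorithm must distinguish the surviving sub-family forces the total number of pulls to be $\Omega\!\big(\tfrac{N}{\Delta^2 r^4}\,\ilog^{(r)}(N)\big)$, the $\mathrm{poly}(r)$ factor absorbing the constant-fraction loss taken at each of the $r$ inductive steps; restoring the $k$ gadgets gives $\Omega\!\big(\tfrac{n}{\Delta^2 r^4}\,\ilog^{(r)}(n/k)\big)$.

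The step I expect to be the real obstacle is the construction itself: choosing the $r$ block sizes and the coupling between consecutive levels so that simultaneously (i) no $r$-round algorithm can collapse two levels of the hierarchy in a single round, and (ii) the consistent sub-family genuinely survives each round with only a controlled, $1/\mathrm{poly}(r)$ loss in size and in statistical distance. Getting that bookkeeping tight is exactly what produces the $\ilog^{(r)}$ scaling and the $r^4$ slack, and it is the technical heart of \cite{AGARWAL2017}, which I would follow.
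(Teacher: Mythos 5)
This lemma is not proved in the paper at all: it is imported verbatim (typos included --- ``$r\leq 1$'' should read $r\geq 1$, and the $\varepsilon$ in the bound should be the gap parameter $\Delta$) from \cite{AGARWAL2017}, so there is no in-paper proof to measure your attempt against. Your sketch does follow the general strategy of that work --- a nested needle-in-haystack distribution whose block sizes shrink like iterated logarithms, a round-elimination induction controlled by KL-divergence and Pinsker's inequality, Yao's principle to pass to deterministic algorithms, and a direct-sum step to go from $k=1$ to general $k$ --- so as a road map it points in the right direction.

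As a proof, however, it has a genuine gap, and you essentially say so yourself: the inductive step --- that unless $\Omega\big(\frac{N}{\Delta^2 r^4}\,\ilog^{(j+1)}(N)\big)$ pulls have already been made, the transcript distributions corresponding to different active level-$(j+1)$ sub-blocks remain $o(1)$-close in KL, so the round cannot peel off another level of the hierarchy --- is asserted rather than established, and this is exactly where all the difficulty lives; it is also the only place the $\ilog^{(r)}$ scaling and the $r^4$ slack can come from, and you only gesture at how the per-round losses compound. The reduction to $k=1$ also needs an actual argument: with gap parameter $\Delta_k=\Delta$ the top-$k$ instance is not literally $k$ independent copies of a single-needle problem, and the claim that a $3/4$-correct algorithm ``must in effect solve every gadget'' requires the standard but nontrivial conditioning argument to handle algorithms that spread their budget unevenly across gadgets. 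Since the paper treats the lemma as a black box, the two defensible options are to cite \cite{AGARWAL2017} as the paper does, or to carry out the construction and the induction in full; the proposal as written does neither, so it should be regarded as an outline of the cited proof rather than a proof.
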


Any $r$-round adaptive offline algorithm is same as $r$-round adaptive streaming algorithm except for the following two points:
\begin{enumerate}
\item In an $r$-round adaptive offline algorithm all the arms can be simultaneously stored in the memory, whereas arm-memory is usually bounded in an $r$-round adaptive streaming algorithm.
\item In an $r$-round adaptive offline algorithm, in any round $j$, all the arms are sampled simultaneously. On the other hand, in an $r$-round adaptive streaming algorithm all the arms in the round $j$ need not be sampled simultaneously, and they can also be sampled one after the other.
\end{enumerate}
Any $r$-round adaptive streaming algorithm can be replicated by an $r$-round adaptive offline algorithm such that the worst-case sample complexity is the same in both the algorithms. Next, we present the following lemma.
\begin{lem}
For any approximation parameter $\varepsilon\in (0,1/2)$ and any integer $n\geq 1$, there exists a distribution $\mathcal{D}$ on input instances of the best-arm identification problem with $n$ arms  such that for any integer $r\geq 1$, any $r$-round adaptive streaming algorithm that finds the $\varepsilon$-best arm in the instances sampled from $\mathcal{D}$ with probability at least $3/4$ has a sample complexity $\Omega(\frac{n}{\varepsilon^2\cdot r^4}\cdot \ilog^{(r)}(n))$
\end{lem}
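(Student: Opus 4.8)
The plan is to reduce the $r$-round adaptive \emph{streaming} lower bound to the $r$-round adaptive \emph{offline} lower bound of \cite{AGARWAL2017} (Lemma \ref{agar:thm4}), specialized to $k=1$. The key conceptual point, which I would state and justify carefully, is that any $r$-round adaptive streaming algorithm can be simulated by an $r$-round adaptive offline algorithm with no increase in worst-case sample complexity: the offline model is strictly more powerful, since it allows storing all arms simultaneously and sampling them in parallel within a round, whereas the streaming model only restricts \emph{how} the arms within a round are accessed (sequentially, with bounded memory) — it never gives the streaming algorithm any capability the offline algorithm lacks. So given a streaming algorithm $\mathcal{A}$ achieving success probability $\ge 3/4$ on a distribution $\mathcal{D}$, the simulating offline algorithm $\mathcal{A}'$ uses exactly the same per-arm sample counts and the same promotion/selection decisions in each round, hence has the same worst-case sample complexity and the same success probability.

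The second ingredient is translating between the "$k$ most biased coins with gap parameter $\Delta_k=\Delta$" language of Lemma \ref{agar:thm4} and the "$\varepsilon$-best arm" language of the statement. For $k=1$, finding the single most biased coin in an instance whose gap parameter (the gap between the best and second-best mean) is $\Delta$ is exactly the task of identifying an $\varepsilon$-best arm when $\varepsilon < \Delta$: any $\varepsilon$-best arm must then be the unique best arm. So I would take the hard distribution $\mathcal{D}$ from Lemma \ref{agar:thm4} with $k=1$ and gap parameter $\Delta = 2\varepsilon$ (say), so that $\Delta_1 = 2\varepsilon > \varepsilon$, ensuring that an algorithm solving the $\varepsilon$-best-arm problem on instances from $\mathcal{D}$ with probability $\ge 3/4$ also solves the "$1$ most biased coin" problem on $\mathcal{D}$ with probability $\ge 3/4$. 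One small bookkeeping item: Lemma \ref{agar:thm4}'s bound reads $\Omega\big(\frac{n}{\varepsilon^2 r^4}\ilog^{(r)}(n/k)\big)$, and the $\varepsilon$ there should be read as (a constant multiple of) the gap parameter $\Delta$; with $\Delta = 2\varepsilon$ this is still $\Omega\big(\frac{n}{\varepsilon^2 r^4}\ilog^{(r)}(n)\big)$ since $k=1$ gives $\ilog^{(r)}(n/1) = \ilog^{(r)}(n)$, so the constants absorb cleanly.

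Putting it together: suppose for contradiction some $r$-round adaptive streaming algorithm solves the $\varepsilon$-best-arm problem on $\mathcal{D}$ with probability $\ge 3/4$ using $o\big(\frac{n}{\varepsilon^2 r^4}\ilog^{(r)}(n)\big)$ samples in the worst case. Simulate it by an $r$-round adaptive offline algorithm with the same worst-case sample complexity; this offline algorithm finds the most biased coin ($k=1$) on $\mathcal{D}$ with probability $\ge 3/4$, contradicting Lemma \ref{agar:thm4}. Hence the claimed lower bound holds.

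The main obstacle I anticipate is making the simulation argument fully rigorous — precisely defining what "an $r$-round adaptive offline algorithm replicates an $r$-round adaptive streaming algorithm" means at the level of the formal models (matching up the rounds, the per-arm sample-count functions that depend only on the round index and earlier-round outcomes, and the final selection rule), and checking that the streaming model's extra restrictions — bounded arm-memory and sequential within-round sampling — genuinely only shrink the algorithm's power. The paper's own description of the $r$-round model in Section \ref{section:round algo} (each arm's sample count in round $j+1$ is fixed before round $j+1$ begins and depends only on $j+1$ and the outcomes through round $j$) lines up with \cite{AGARWAL2017}'s offline model, so this should go through, but it is the step that needs the most care; everything else is a routine parameter substitution.
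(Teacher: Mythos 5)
Your proposal is correct and takes essentially the same route as the paper: the paper's proof likewise simulates the $r$-round adaptive streaming algorithm by an $r$-round adaptive offline algorithm with identical worst-case sample complexity and then invokes Lemma \ref{agar:thm4} with $k=1$. The extra bookkeeping you supply on translating the gap parameter $\Delta$ into the $\varepsilon$-best-arm formulation is a detail the paper glosses over, but it does not change the argument.
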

\begin{proof}
We present the proof idea. Let $\mathcal{A}$ be an $r$-round adaptive streaming algorithm with the lowest worst-case sample complexity which finds the $\varepsilon$-best arm with probability at least $3/4$. Now we replicate the algorithm $\mathcal{A}$ using an $r$-round offline algorithm $\mathcal{B}$ for best-arm identification such that its worst-case sample complexity is equal to that of $\mathcal{A}$. Due to Lemma \ref{agar:thm4}, the worst-case sample complexity of $\mathcal{B}$ is $\Omega(\frac{n}{\varepsilon^2\cdot r^4}\cdot \ilog^{(r)}(n))$. Hence, the worst-case sample complexity of $\mathcal{A}$ is $\Omega(\frac{n}{\varepsilon^2\cdot r^4}\cdot \ilog^{(r)}(n))$.
\end{proof}

\section{Adversarial Example for Constant Arm-Memory Algorithm based on \cite{ASSADI2020}}\label{assadi_type}

In this section, we show an adversarial example for Constant Arm-memory Algorithm based on \cite{ASSADI2020} when the parameter $\Delta$ is unknown where $\Delta$ is the gap between the best arm and the second best arm.  For completeness, we present this algorithm below (Algorithm \ref{algo:AssadiType}).

\begin{algorithm}[ht!]
	\caption{}
	\label{algo:AssadiType}
	\begin{algorithmic}[1]
		\STATE $\{r_\ell\}^{\infty}_{\ell=1}:r_\ell=3^\ell$;
		\STATE $\{s_\ell\}^{\infty}_{\ell=1}:s_\ell=\frac{2}{\varepsilon^2}\cdot\ln\left(\frac{1}{\delta}\right)\cdot r_\ell$;
		\STATE $b:=\frac{2}{\epsilon^2}\cdot C \cdot \ln\left(\frac{1}{\delta}\right)+s_1$;
		\STATE Let \textbf{king} be the first available arm and set its budget $\phi:=\phi(king)=0$.
		\WHILE {A new arm $\arm_i$ arrives in the stream}\label{assadi-type:st1}
		\STATE Increase the budget $\phi(\textbf{king})$ by $b$. 
		\STATE \textbf{\underline{Challenge subroutine:}} For level $\ell=1$ to $+\infty$ :
		\STATE If $\phi(king)<s_\ell$: we declare $\textbf{king}$ defeated , make $\arm_i$ the king, initialize its budget to 0 and go to Step \ref{assadi-type:st1}.
		\STATE Otherwise, we decrease $\phi(\textbf{king})$ by $s_\ell$ and sample both king and $\arm_i$ for $s_\ell$ times.
		\STATE Let $\widehat{\mean}_{king}$ and $\widehat{\mean}_i$ denote the empirical biases of king and $\arm_i$ in this trial.
		\STATE If $\widehat{\mean}_{king} > \widehat{\mean}_i$, we declare $king$ winner and go to the next arm in the stream; otherwise we go to the next level of challenge (increment $\ell$ by one).
		\ENDWHILE
		\STATE Return $\textbf{king}$ as the selected most bias arm.
	\end{algorithmic}
\end{algorithm}

Let $\arm_{1},\arm_{2},\ldots,\arm_{n}$ be the stream of arms which arrive. For each $i \in [n]$, $\mean_{i}$ is the expected reward of arm $\arm_{i}$. Define the input stream such that $\mean_{i} = \mean_{1}-(i-1)\frac{\varepsilon}{n-2}$. Note that for any $i \in [n-1]$, $\mean_{i} - \mean_{i+1} = \frac{\varepsilon}{n-2}$. Let $\mean_1=\frac{1}{2}$.

We now show that there exist $\delta,\varepsilon>0$ such that with probability $>\delta$, an arm with reward gap $> \varepsilon$ is returned by the Algorithm. Let $k$ be the maximum value of $\ell$ such that $\sum_{i=2}^{\ell}3^i\leq C$. Let $\forall i\in[n]$, $Y_{i,t}^\ell$ denote the reward when we sample the arm $\arm_i$ for the $t^{th}$ time at level $\ell$. Then, $\mathrm{Var}[Y_{i,t}^\ell]= \mean_i(1 - \mean_i)$. Let $Z_{i,t}^\ell=Y_{i,t}^\ell-Y_{i+1,t}^\ell$. Clearly $p_i:=\mathbb{E}[Z_{i,t}^\ell]=\frac{\varepsilon}{n-2}$. Let $\sigma_i^2:=\mathrm{Var}[Z_{i,t}^\ell]$. Let us assume that $\varepsilon<\frac{1}{5}$ and $n>>100$ (We will later choose $\varepsilon$ and $n$ in a way so that this condition is satisfied). In this case, $\sigma_i^2=\mathrm{Var}[Y_{i,t}^\ell]+\mathrm{Var}[Y_{i+1,t}^\ell]>2(\mean_n)(1-\mean_n)>\frac{2}{5}$. Let $Z_i^\ell=Z_{i,1}^\ell+Z_{i,2}^\ell+\ldots+Z_{i,{s_\ell}}^\ell$. Assuming that $\delta,\varepsilon$ are very small (which we will choose appropriately to bound the error) we approximate (using Central Limit Theorem) the distribution of $Z_i^\ell$ using the normal distribution $\mathcal{N}(s_\ell p_i,s_\ell\sigma_i^2)$.

\begin{align*}
\mathbb{P}[Z_i^\ell<0]&=\mathbb{P}[Z_i^\ell > 2s_\ell p_i]\\
&=1-\frac{1}{2}\bigg[1+\text{erf}\bigg(\frac{s_\ell p_i}{\sqrt{2s_\ell \sigma_i^2}}\bigg)\bigg]\\
&=\frac{\text{erfc}\Big(\frac{s_\ell p_i}{\sqrt{2s_\ell\sigma_i^2}}\Big)}{2}\\
& \geq \frac{\text{erfc}(\frac{s_\ell p_i\sqrt{5}}{2\sqrt{s_\ell}})}{2}\tag{ Since, erfc(.) is a decreasing function and $\sigma_i^2\ge 2/5$}\\
& \geq \frac{\text{erfc}\Big(\frac{\sqrt{5}\cdot 3^{\ell/2}}{n-2}\sqrt{\ln(\frac{1}{\delta})}\Big)}{2}\tag{ substituting value of $s_\ell$, erfc(.) is a decreasing function}\\
&\geq \frac{\sqrt{\gamma-1}}{2}e^{-\frac{5\cdot 3^{\ell}\cdot \gamma}{(n-2)^2}\ln(\frac{1}{\delta})} \tag{ as $\text{erfc}(x)\geq ({\sqrt{\gamma-1}})e^{-\gamma x^2}, \forall x\geq 0,\text{where } \gamma=\sqrt{{2e}/{\pi}}$ }\\
&= \frac{\sqrt{\gamma-1}}{2}\delta^{\frac{5\cdot 3^{\ell}\cdot \gamma}{(n-2)^2}}.
\end{align*}
Hence, we now have 
\begin{align*}
\mathbb{P}[\arm_{i+1}\text{ becomes king by defeating }\arm_{i}] &\geq \mathbb{P}[\forall \ell\in[k], Z_i^\ell<0]\\
& \geq \left(\frac{\sqrt{\gamma-1}}{2}\delta^{\frac{5\cdot 3^{k}\cdot \gamma}{(n-2)^2}}\right)^{k}\\
& = \frac{\delta^{\frac{5\cdot 3^{k}\cdot k \cdot \gamma}{(n-2)^2}}}{K} \tag{ where $K= \left(\frac{2}{\sqrt{\gamma-1}}\right)^{k}$}
\end{align*}

\noindent Now we choose $n$ such that $\frac{(n-2)^2}{(n-1)}>>5\cdot 3^{k}\cdot k \cdot \gamma$. 

Consider the function $f(x)=\frac{e^{x\left(1-\frac{5\cdot 3^{k}\cdot k \cdot \gamma\cdot(n-1)}{(n-2)^2}\right)}}{K^{n-1}}$. Since $f(x)$ is an increasing and convex function, there is a constant $c$ such that $f(c)>2$. This implies that for $\delta=e^{-c}$ we have the following:
\begin{align*}
\mathbb{P}[\arm_{n}\text{ is returned as king by the algorithm}]
&  \geq \mathbb{P}[\forall i\in[n-1], \arm_{i+1}\text{ becomes king by defeating }\arm_{i}]  \\
& \geq  \left(\frac{\delta^{\frac{5\cdot 3^{k}\cdot k \cdot \gamma}{(n-2)^2}}}{K}\right)^{n-1}\\
& = \frac{e^{c\left(1-\frac{5\cdot 3^{k}\cdot k \cdot \gamma\cdot (n-1)}{(n-2)^2}\right)}}{K^{n-1}}\cdot \delta\\
& = f(c)\cdot \delta\\
& >2\delta.
\end{align*}

Now we bound the error in calculation of the above probability. By Berry-Esseen theorem, the error $\epsilon_i^\ell$ of calculating $\mathbb{P}[Z_i^\ell<0]$ is upper bounded by $\frac{C \rho}{\sigma_i^3\sqrt{s_1}}\leq \frac{16 \varepsilon}{\sqrt{\ln(\frac{1}{\delta})}}$. Here $C\leq 1$, $\rho=\mathbb{E}[|Z_{i,t}^\ell-p_i|^3]\leq 8( \text{as } |Z_{i,t}^\ell-p_i|\leq 2)$ and $\sigma_i^2=\mathrm{Var}[Z_{i,t}^\ell-p_i]=\mathrm{Var}[Z_{i,t}^\ell]$. Also we assumed $\mathrm{Var}[Z_{i,t}^\ell]>\frac{2}{5}$ (we will choose $\varepsilon$ in such a way that this is satisfied). If we choose $\varepsilon$ such that it is less than $\frac{\delta\sqrt{\ln(\frac{1}{\delta})}}{16\cdot k\cdot n}$ and it is also less than $\frac{1}{5}$, then $\epsilon_i^\ell<\frac{\delta}{k\cdot n}$. Therefore $\mathbb{P}[\arm_{n}\text{ is returned as king by the algorithm}]>2\delta-\sum_{i=1}^{n-1}\sum_{\ell=1}^{k}\epsilon_i^\ell>2\delta-\delta=\delta$. As $\mean_{1} - \mean_n>\varepsilon$, we can conclude that there exists $\delta,\varepsilon>0$ such that with probability $>\delta$, a king with reward gap $> \varepsilon$ is returned by the algorithm. 

We also have experimental evidence for this result. We ran the experiment on problem instances of this adversial type with number of arms $= 5500001$,  mean of the best arm $=1/2$, $\varepsilon=1/10$. 
We ran ten independent experiments with different realizations due to different values from sampling.  Each time the algorithm didn't return $\varepsilon$-best arm.


\section{Algorithm with constant arm-memory}
\label{randomorder}
We now tweak the Algorithm in Section \ref{assadi_type} and show that our new proposed Algorithm deals with the following:
\begin{itemize}
\item works well both theoretically and experimentally on the counter example for the algorithm mentioned in Section \ref{assadi_type}.
\item works well theoretically on random order arrival for well known distributions 
\item works well experimentally on any randomly generated input from some well known distributions.
\end{itemize}

\begin{lem}
\label{Lem: random_order_1}
Let $\arm_1$ and $\arm_2$ be two different arms with biases $\mean_1$ and $\mean_2$. Suppose $\mean_1 - \mean_2 \geq 0.5\varepsilon$ and we sample each arm $s_\ell$ times to obtain empirical biases $\widehat{\mean}_1$ and $\widehat{\mean}_2$. Then,
\begin{equation*}
\mathbb{P}(\widehat{\mean}_1 \leq \widehat{\mean}_2 + 0.495\varepsilon)\leq 2\cdot e^{(-\ln(\frac{4}{\delta})\cdot r_\ell)}.
\end{equation*}
\end{lem}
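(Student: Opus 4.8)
The plan is to reduce the statement to a single application of Hoeffding's inequality together with a union bound. First I would rewrite the failure event: $\widehat{\mean}_1 \le \widehat{\mean}_2 + 0.495\varepsilon$ is equivalent to $(\mean_1 - \widehat{\mean}_1) + (\widehat{\mean}_2 - \mean_2) \ge (\mean_1 - \mean_2) - 0.495\varepsilon$, and since $\mean_1 - \mean_2 \ge 0.5\varepsilon$ by hypothesis, the right-hand side is at least $0.005\varepsilon = \varepsilon/200$. So the event that the empirically ``close'' comparison goes the wrong way forces a combined deviation of the two sample means from their true means of at least $\varepsilon/200$.

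Next I would split this deviation budget evenly: if $(\mean_1 - \widehat{\mean}_1) + (\widehat{\mean}_2 - \mean_2) \ge \varepsilon/200$, then at least one of the two nonnegative-in-the-bad-case summands is at least $\varepsilon/400$. Hence by the union bound, $\mathbb{P}(\widehat{\mean}_1 \le \widehat{\mean}_2 + 0.495\varepsilon) \le \mathbb{P}(\mean_1 - \widehat{\mean}_1 \ge \varepsilon/400) + \mathbb{P}(\widehat{\mean}_2 - \mean_2 \ge \varepsilon/400)$. Since each arm is pulled exactly $s_\ell$ times and the per-pull rewards lie in $[0,1]$, Hoeffding's inequality (as stated in the appendix) bounds each of these two probabilities by $\exp\!\big(-2 s_\ell (\varepsilon/400)^2\big)$, so the whole quantity is at most $2\exp\!\big(-2 s_\ell (\varepsilon/400)^2\big)$.

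Finally I would substitute the definition $s_\ell \ge \tfrac{2}{(\varepsilon/200)^2}\ln(4/\delta)\cdot 3^\ell$ (the ceiling only makes $s_\ell$ larger, so the inequality is preserved) and compute, using $(\varepsilon/400)^2/(\varepsilon/200)^2 = 1/4$, that $2 s_\ell (\varepsilon/400)^2 \ge 2\cdot \tfrac{2}{(\varepsilon/200)^2}\ln(4/\delta)\,3^\ell\cdot(\varepsilon/400)^2 = \ln(4/\delta)\cdot 3^\ell$. Recalling $r_\ell = 3^\ell$, the exponent is exactly $\ln(4/\delta)\cdot r_\ell$, which yields the claimed bound $2\,e^{-\ln(4/\delta)\,r_\ell}$. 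There is essentially no conceptual obstacle here; the only thing requiring care is bookkeeping the numerical constants ($0.495$, $0.5$, and the factors $200$ and $400$) so that the even split leaves precisely a $\varepsilon/400$ slack on each side and the $\varepsilon/200$ total slack survives the ceiling in the definition of $s_\ell$.
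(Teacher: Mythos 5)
Your proposal is correct and is essentially the paper's own argument: the same $\varepsilon/400$ deviation slack for each arm, the same application of Hoeffding's inequality, and the same substitution of $s_\ell \ge \frac{2}{(\varepsilon/200)^2}\ln(4/\delta)\cdot 3^\ell$ giving the exponent $\ln(4/\delta)\cdot r_\ell$. The only cosmetic difference is that you apply a union bound directly to the failure event, while the paper bounds the complementary event by the product $\bigl(1-e^{-\ln(4/\delta)r_\ell}\bigr)^2 \ge 1-2e^{-\ln(4/\delta)r_\ell}$ using independence of the two arms' samples; both yield the identical bound.
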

\begin{proof}
\begin{align*}
\mathbb{P}(\widehat{\mean}_1 > \widehat{\mean}_2 + 0.495\varepsilon)
& \geq \mathbb{P}\Big(\mean_1 - \frac{\varepsilon}{400} < \widehat{\mean}_1  \text{ and }  \widehat{\mean}_2 < \mean_2 + \frac{\varepsilon}{400}  \Big)\\
&= \mathbb{P}\Big(\mean_1-\frac{\varepsilon}{400}< \widehat{\mean}_1\Big) \cdot \mathbb{P}\Big(\widehat{\mean}_2 < \mean_2 + \frac{\varepsilon}{400}\Big) \\
&\geq (1- e^{-\ln(\frac{4}{\delta})\cdot r_\ell})\cdot(1- e^{-\ln(\frac{4}{\delta})\cdot r_\ell}) \tag{ Due to Hoeffding's Inequality}\\
& \geq (1- 2\cdot e^{-\ln(\frac{4}{\delta})\cdot r_\ell}).
\end{align*}
Hence, we have
$\mathbb{P}(\widehat{\mean}_1 \leq \widehat{\mean}_2 + 0.495\varepsilon) \leq 2\cdot e^{-\ln(\frac{4}{\delta})\cdot r_\ell}$.
\end{proof}
\begin{lem}\label{0.49lem}
Let $\arm_1$ and $\arm_2$ be two different arms with biases $\mean_1$ and $\mean_2$. Suppose $\mean_1 - \mean_2 \leq 0.49\varepsilon$ and we sample each arm $s_\ell$ times to obtain empirical biases $\widehat{\mean}_1$ and $\widehat{\mean}_2$. Then,
\begin{equation*}
\mathbb{P}(\widehat{\mean}_1\geq\widehat{\mean}_2+0.495\varepsilon)\leq 2\cdot e^{-\ln(\frac{4}{\delta})\cdot r_\ell}.
\end{equation*}
\end{lem}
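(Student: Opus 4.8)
The plan is to mirror the proof of Lemma~\ref{Lem: random_order_1}, using the observation that the gap between the target threshold $0.495\varepsilon$ and the assumed mean gap $0.49\varepsilon$ equals $\varepsilon/200 = 2\cdot(\varepsilon/400)$, i.e.\ exactly twice the one-sided deviation we will allow each empirical mean. First I would establish the event inclusion
\[
\{\widehat{\mean}_1 \geq \widehat{\mean}_2 + 0.495\varepsilon\}\ \subseteq\ \Big\{\widehat{\mean}_1 \geq \mean_1 + \tfrac{\varepsilon}{400}\Big\}\ \cup\ \Big\{\widehat{\mean}_2 \leq \mean_2 - \tfrac{\varepsilon}{400}\Big\}.
\]
Indeed, on the intersection of the two complementary events, $\widehat{\mean}_1 - \widehat{\mean}_2 < \big(\mean_1 + \tfrac{\varepsilon}{400}\big) - \big(\mean_2 - \tfrac{\varepsilon}{400}\big) = (\mean_1 - \mean_2) + \tfrac{\varepsilon}{200} \leq 0.49\varepsilon + 0.005\varepsilon = 0.495\varepsilon$, so the left-hand event cannot occur.

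Next I would apply Hoeffding's inequality to each of the two events on the right. Each of $\widehat{\mean}_1$ and $\widehat{\mean}_2$ is the average of $s_\ell$ independent rewards in $[0,1]$, so $\mathbb{P}\big(\widehat{\mean}_1 - \mean_1 \geq \tfrac{\varepsilon}{400}\big) \leq e^{-2 s_\ell (\varepsilon/400)^2}$ and similarly $\mathbb{P}\big(\widehat{\mean}_2 - \mean_2 \leq -\tfrac{\varepsilon}{400}\big) \leq e^{-2 s_\ell (\varepsilon/400)^2}$. Since $s_\ell \geq \frac{2}{(\varepsilon/200)^2}\ln(4/\delta)\cdot 3^\ell$ and $(\varepsilon/400)^2 / (\varepsilon/200)^2 = 1/4$, the exponent satisfies $2 s_\ell (\varepsilon/400)^2 \geq \ln(4/\delta)\cdot 3^\ell = \ln(4/\delta)\cdot r_\ell$, so each probability is at most $e^{-\ln(4/\delta)\cdot r_\ell}$. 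A union bound over the two events gives the claimed $2\cdot e^{-\ln(4/\delta)\cdot r_\ell}$.

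This is essentially a routine concentration argument, so I do not expect a genuine obstacle; the only delicate point is the bookkeeping of the constants $200$ and $400$ (which is exactly what makes the $0.49$/$0.495$ split work), together with checking that $s_\ell$ is chosen large enough for the Hoeffding exponent to dominate $\ln(4/\delta)\cdot r_\ell$. Conceptually this lemma is the counterpart of Lemma~\ref{Lem: random_order_1}: there the two arms have \emph{well-separated} means and one wants the empirical gap to \emph{exceed} $0.495\varepsilon$, while here the means are \emph{close} and one wants to rule out the empirical gap reaching $0.495\varepsilon$. The two statements together are what guarantee that Step~\ref{newchal} of Algorithm~\ref{algo:new_algo} both protects the king against spurious defeats and prevents the true best arm from losing unless its mean is genuinely close to the king's.
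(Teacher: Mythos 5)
Your proposal is correct and follows essentially the same route as the paper: both bound the deviation of each empirical mean by $\varepsilon/400$ via Hoeffding (each failing with probability at most $e^{-\ln(4/\delta)\cdot r_\ell}$ since $2s_\ell(\varepsilon/400)^2 \geq \ln(4/\delta)\cdot r_\ell$), and observe that on the good event $\widehat{\mean}_1-\widehat{\mean}_2 < 0.49\varepsilon + \varepsilon/200 = 0.495\varepsilon$. The only cosmetic difference is that you combine the two bad events with a union bound while the paper multiplies the complementary probabilities using independence; both yield the factor $2$.
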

\begin{proof}
\begin{align*}
\mathbb{P}(\widehat{\mean}_1 < \widehat{\mean}_2 + 0.495\varepsilon)
& \geq \mathbb{P}\Big(\widehat{\mean}_1 < \mean_1 + \frac{\varepsilon}{400} \text{ and } \mean_2 - \frac{\varepsilon}{400}< \widehat{\mean}_2\Big)\\
&= \mathbb{P}\Big( \widehat{\mean}_1 < \mean_1 + \frac{\varepsilon}{400}\Big)\cdot \mathbb{P}\Big(\mean_2 - \frac{\varepsilon}{400}< \widehat{\mean}_2\Big) \\
&\geq (1- e^{-\ln(\frac{4}{\delta})\cdot r_\ell})\cdot(1- e^{-\ln(\frac{4}{\delta})\cdot r_\ell}) \tag{ Due to Hoeffding's Inequality}\\
& \geq (1- 2\cdot e^{-\ln(\frac{4}{\delta})\cdot r_\ell}).
\end{align*}
Hence, we have
$\mathbb{P}(\widehat{\mean}_1 \geq \widehat{\mean}_2 + 0.495\varepsilon)\leq 2\cdot e^{-\ln(\frac{4}{\delta})\cdot r_\ell}
$.
\end{proof}
\begin{lem}\label{king:cond1}
In a challenge subroutine, if $\mean_i - \mean_{king} \geq 0.5\varepsilon$, then the probability that $\arm_i$ does not become the king is at most $\frac{\delta}{8}$. 
\end{lem}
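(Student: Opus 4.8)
The plan is to argue that $\arm_i$ can fail to become king only if, at some level $\ell$ of the challenge subroutine, the king is declared winner; to control this event per level via Lemma~\ref{Lem: random_order_1}; and to sum the resulting rapidly-decaying bounds over $\ell$.

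First I would note that throughout the challenge subroutine triggered by the arrival of $\arm_i$, the identity of the king, and hence $\mean_{king}$, stays fixed, since the king changes only upon being defeated --- in which case $\arm_i$ is the new king. So $\arm_i$ fails to become king exactly when the loop exits via the ``king winner'' branch: there is some level $\ell\ge 1$ where the budget suffices and the fresh empirical means satisfy $\widehat{\mean}_{king} > \widehat{\mean}_i - 0.495\varepsilon$, i.e.\ $\widehat{\mean}_i < \widehat{\mean}_{king} + 0.495\varepsilon$. To avoid conditioning on the (random) number of levels actually played, I would couple the randomness by predrawing, for each $\ell$, independent batches of $s_\ell$ rewards for $\arm_i$ and for the king; this does not alter the algorithm's behavior, since at most one such batch per level is ever consumed. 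Letting $W_\ell$ denote the event $\{\widehat{\mean}_i^{(\ell)} \le \widehat{\mean}_{king}^{(\ell)} + 0.495\varepsilon\}$ on these predrawn batches, one obtains the deterministic inclusion $\{\arm_i \text{ does not become king}\} \subseteq \bigcup_{\ell\ge1} W_\ell$.

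Next, since $\mean_i - \mean_{king}\ge 0.5\varepsilon$ by hypothesis, Lemma~\ref{Lem: random_order_1} --- applied with $\arm_i$ in the role of $\arm_1$ and the king in the role of $\arm_2$ --- gives $\mathbb{P}(W_\ell)\le 2\,e^{-\ln(4/\delta)\,r_\ell}$ with $r_\ell = 3^\ell$, the value implicit in the definition of $s_\ell$ in Algorithm~\ref{algo:new_algo}. A union bound then yields
\begin{align*}
\mathbb{P}(\arm_i\text{ does not become king})
&\le \sum_{\ell\ge1} 2\,e^{-\ln(4/\delta)\,3^{\ell}}
= 2\sum_{\ell\ge1}\Big(\tfrac{\delta}{4}\Big)^{3^{\ell}}\\
&\le 2\sum_{k\ge3}\Big(\tfrac{\delta}{4}\Big)^{k}
= \frac{2(\delta/4)^{3}}{1-\delta/4}.
\end{align*}
Since $\delta<1/2$ forces $\delta/4<1/8$, the right-hand side is at most $\tfrac{8}{7}\cdot\tfrac{\delta^3}{32} = \tfrac{\delta^3}{28}\le \tfrac{\delta}{8}$ (using $\delta^2<1/4$), which is the claimed bound.

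The main obstacle is precisely the coupling step: a priori, the level at which the king ``wins'' is entangled with how many levels are actually played, which depends on earlier outcomes and on the king's accumulated budget. Predrawing all level-wise sample batches removes this entanglement, turning the failure event into a subset of an unconditional union $\bigcup_\ell W_\ell$; after that the whole argument reduces to Hoeffding (already packaged in Lemma~\ref{Lem: random_order_1}) plus an elementary series estimate, and no independence across levels is needed.
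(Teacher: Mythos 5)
Your proof is correct and follows essentially the same route as the paper's: apply Lemma \ref{Lem: random_order_1} level by level to the ``king wins'' event, take a union bound over levels, and sum the doubly-geometric series $\sum_\ell 2(\delta/4)^{3^\ell}\le\delta/8$. Your pre-drawing/coupling step merely makes rigorous what the paper handles via a sum of per-level conditional probabilities, so the two arguments are the same in substance.
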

\begin{proof}
\begin{align*}
\mathbb{P}(\arm_i\text{ loses to king })
&\leq\sum_{\ell=1}^\infty \mathbb{P}(\arm_i\text{ loses to king }\text{at level $\ell|\arm_i$ has not lost until }\ell-1)\\
&\leq \sum_{\ell=1}^\infty 2\cdot e^{-\ln(\frac{4}{\delta})\cdot r_\ell} \tag{From Lemma \ref{Lem: random_order_1}}\\
& < (\delta/2) \cdot \sum_{\ell=1}^\infty e^{-3^\ell}\\
& < \delta/8.
\end{align*}
Since the budget is finite, king will lose to $\arm_i$ with probability at least $(1-\delta/8)$ in finite time.
\end{proof}
The next lemma is an adaptation of Lemma 3.3 in (Assadi et al. \citeyear{ASSADI2020}).
\begin{lem}\label{assadi:kinglem}
In Algorithm \ref{algo:new_algo}, if any incoming arm does not lose to the king (denoted $\arm_{king}$)  at a level $\ell$ with probability at most $2\cdot e^{-\ln(\frac{1}{\delta'})\cdot 3^\ell}$, then the probability that $\arm_{king}$ loses to any incoming arm is at most $\delta'/2$. 
\end{lem}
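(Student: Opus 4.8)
The plan is to adapt the budget-accounting argument of \cite{ASSADI2020}: track the budget $\phi(\arm_{king})$ across the sequence of challenges the installed king faces, and show it can only be exhausted by a challenge that descends to an unusually deep level. Re-index the arms arriving after $\arm_{king}$ is installed as $\arm_1,\arm_2,\dots$; for the $j$-th challenge let $\ell_j$ be the first level at which the king would win given the (fixed) sampling outcomes (with $\ell_j=\infty$ if it never would), and set $S_\ell:=\sum_{i=1}^{\ell}s_i$, the budget a challenge consumes to reach and win at level $\ell$. Let $\Phi_j$ be the king's budget immediately after it wins challenge $j$, with $\Phi_0:=0$; the algorithm never lets $\phi$ go negative, so $\Phi_j\ge 0$ as long as the king is alive. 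Writing $a:=\frac{2}{(\varepsilon/200)^2}\ln(4/\delta)$, the definitions give $s_\ell=\lceil a\,3^\ell\rceil$ and $b=\lceil aC+s_1\rceil$, whence $S_\ell\le a\cdot\frac{3(3^\ell-1)}{2}+\ell$ while $b\ge a(C+3)$; so there is a threshold level $\ell^{*}$ — the largest $\ell$ with $S_\ell\le b$ — which grows with the constant $C$ (for the paper's choice $C=117$ one already gets $\ell^{*}\ge 3$).

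The core step is an induction on $j$: provided none of challenges $1,\dots,j$ has $\ell_i>\ell^{*}$, the king survives all of them and $\Phi_j\ge\Phi_{j-1}\ge 0$. Indeed, entering challenge $j$ the king has budget $\Phi_{j-1}+b\ge b\ge S_{\ell^{*}}\ge S_{\ell_j}$; since for every $\ell\le\ell_j$ the budget at the start of level $\ell$ is $\Phi_{j-1}+b-S_{\ell-1}\ge S_{\ell^{*}}-S_{\ell-1}\ge s_\ell$, the check ``$\phi(\arm_{king})<s_\ell$'' never fires and the king is declared the winner at level $\ell_j$, leaving with $\Phi_j=\Phi_{j-1}+b-S_{\ell_j}\ge\Phi_{j-1}\ge 0$. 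Taking the contrapositive, $\arm_{king}$ loses to some incoming arm only if some challenge $j$ has $\ell_j>\ell^{*}$, i.e., only if in that challenge the incoming arm fails to lose to the king at \emph{every} one of the levels $1,\dots,\ell^{*}$. (Any surplus the king accumulates from easy wins only helps it survive, so discarding it in this bound is harmless.)

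It then remains to bound $\mathbb{P}[\exists j:\ \ell_j>\ell^{*}]$. For a fixed $j$, the events ``$\arm_j$ does not lose to the king at level $\ell$'' for $\ell=1,\dots,\ell^{*}$ are determined by disjoint batches of fresh samples, so by the hypothesis their joint probability is at most $\prod_{\ell=1}^{\ell^{*}}2e^{-\ln(1/\delta')3^\ell}=2^{\ell^{*}}(\delta')^{(3^{\ell^{*}+1}-3)/2}$. A union bound over the at most $n$ challenges the king faces gives $\mathbb{P}[\arm_{king}\text{ ever loses}]\le n\cdot 2^{\ell^{*}}(\delta')^{(3^{\ell^{*}+1}-3)/2}$, and since $\delta'<1/2$ this is below $\delta'/2$ once $\ell^{*}$ — equivalently $C$ — is taken large enough; this is exactly the role of the ``large constant $C$'' in Algorithm~\ref{algo:new_algo} ($\ell^{*}=2$ already suffices for the parameter ranges used in the paper).

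The step I expect to be the main obstacle is making the budget induction airtight — precisely, that a challenge confined to levels $\le\ell^{*}$ never triggers the defeat branch, which hinges on the geometric estimate $S_{\ell^{*}}\le b$ and on careful handling of the ceilings in the definitions of $s_\ell$ and $b$ when pinning down $\ell^{*}$ as a function of $C$. Once that monotonicity-of-budget claim is established, the probabilistic part is a routine independence-plus-union-bound argument.
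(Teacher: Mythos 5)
Your deterministic budget reduction is sound and is in the spirit of the argument the paper is implicitly invoking (the paper gives no proof of this lemma itself; it defers to Lemma 3.3 of Assadi et al.\ 2020): since the budget is always nonnegative and rises by $b$ at the start of each challenge, a defeat at level $\ell$ of some challenge forces $S_\ell > b$, hence $\ell > \ell^{*}$, hence the challenger failed to lose at every level $1,\dots,\ell^{*}$ of that challenge. That implication, and the multiplication of the per-level bounds over the fresh, disjoint sample batches, are fine.

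The genuine gap is in the last step. Your final bound is $n\cdot 2^{\ell^{*}}(\delta')^{(3^{\ell^{*}+1}-3)/2}$, and you propose to beat $\delta'/2$ by "taking $\ell^{*}$ — equivalently $C$ — large enough." But $C$ is a fixed constant of Algorithm \ref{algo:new_algo}, chosen independently of the stream length, while the lemma's conclusion $\delta'/2$ must hold for arbitrary $n$ (it is used to assert that the king is \emph{never} defeated over the whole remaining stream, with probability $1-\delta/8$). With $\ell^{*}$ fixed, your bound grows linearly in $n$ and eventually exceeds $\delta'/2$; making $C$ depend on $n$ changes the algorithm and is not what the statement claims, and the aside that "$\ell^{*}=2$ already suffices for the parameter ranges used in the paper" has no justification since $n$ is unbounded in the lemma. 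The missing idea is the amortized (random-walk) form of the budget argument: a dethronement at the $t$-th challenge forces the \emph{cumulative} excess cost $\sum_{i\le t} Y_i$, where $Y_i=\sum_{\ell\ge 2}s_\ell\,\mathbb{1}[\text{challenge } i \text{ reaches level } \ell]$, to exceed the linearly growing reserve $t\,(b-s_1)$. Because $\mathbb{P}[\text{a challenge reaches level }\ell]$ is doubly exponentially small in $\ell$ (by the hypothesis applied level by level), each $Y_i$ has mean and moment generating function far below $b-s_1$ once $C$ is a sufficiently large \emph{absolute} constant, so $\mathbb{P}\big[\sum_{i\le t} Y_i > t(b-s_1)\big]$ decays geometrically in $t$; summing over $t\ge 1$ gives a bound dominated by the $t=1$ term and hence at most $\delta'/2$ \emph{independently of $n$}. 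Replacing your union bound over $n$ fixed-level events by this summable-over-$t$ estimate is what closes the argument.
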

\begin{lem}\label{king:notlose}
Let $\mean_{king}$ be the bias of the current king ($\arm_{king}$). If the future arms in the stream don't have a bias in the range 
$(\mean_{king} + 0.49\epsilon, \mean^*]$ where $\mean^*$ is the bias of the most-biased arm, then the probability that the king is ever defeated is at most $\delta/8$.
\end{lem}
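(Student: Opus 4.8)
The plan is to deduce the lemma directly from Lemma~\ref{assadi:kinglem}, using Lemma~\ref{0.49lem} to supply the per-arm, per-level tail bound that Lemma~\ref{assadi:kinglem} requires.

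First I would unpack the hypothesis. Since $\mean^*$ is the largest bias among all $n$ arms, every arm in the stream has bias at most $\mean^*$; hence an arm's bias fails to lie in the interval $(\mean_{king}+0.49\varepsilon,\,\mean^*]$ if and only if that bias is at most $\mean_{king}+0.49\varepsilon$. So the hypothesis is equivalent to saying that every arm $\arm_i$ arriving after $\arm_{king}$ satisfies $\mean_i-\mean_{king}\le 0.49\varepsilon$.

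Next, fix such an arm $\arm_i$ and a challenge level $\ell$. In the challenge subroutine of Algorithm~\ref{algo:new_algo}, after $\arm_i$ and $\arm_{king}$ are each pulled $s_\ell$ times, $\arm_i$ survives to level $\ell+1$ (rather than losing to the king) exactly when $\widehat{\mean}_i\ge\widehat{\mean}_{king}+0.495\varepsilon$. Applying Lemma~\ref{0.49lem} with $\arm_1=\arm_i$ and $\arm_2=\arm_{king}$ — whose hypothesis $\mean_i-\mean_{king}\le0.49\varepsilon$ we have just established — bounds this probability by $2e^{-\ln(4/\delta)\,r_\ell}$. Because $r_\ell=3^\ell$, this is at most $2e^{-\ln(4/\delta)\cdot 3^\ell}$, which is precisely the hypothesis of Lemma~\ref{assadi:kinglem} with $\delta'=\delta/4$. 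That lemma then yields that $\arm_{king}$ loses to some incoming arm with probability at most $\delta'/2=\delta/8$, which is the claim; the finiteness of the king's budget only creates additional ways for the king to be declared defeated, all of which are already accounted for in Lemma~\ref{assadi:kinglem}.

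There is no real obstacle here, since the statement is essentially a repackaging of earlier lemmas; the one point that needs care is the bookkeeping of the numeric thresholds — the $\varepsilon/400$ slack allowed on each empirical mean in Lemma~\ref{0.49lem} is exactly the amount that turns a true gap of $0.49\varepsilon$ into an empirical gap below $0.495\varepsilon$ — and checking that this goes through uniformly even when $\mean_i<\mean_{king}$, which it does, since Lemma~\ref{0.49lem} uses only the one-sided inequality $\mean_i\le\mean_{king}+0.49\varepsilon$.
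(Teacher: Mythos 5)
Your proposal is correct and follows essentially the same route as the paper: the paper's proof is exactly the observation that Lemma~\ref{0.49lem} supplies the per-level bound $2e^{-\ln(4/\delta)\cdot 3^{\ell}}$ for every incoming arm (since the hypothesis forces $\mean_i-\mean_{king}\le 0.49\varepsilon$), and Lemma~\ref{assadi:kinglem} with $\delta'=\delta/4$ then gives the $\delta/8$ bound. Your write-up just makes explicit the bookkeeping (the $0.495\varepsilon$ threshold in Step~\ref{newchal} and the budget mechanism) that the paper leaves implicit.
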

\begin{proof}
The lemma follow from Lemma \ref{assadi:kinglem} and Lemma \ref{0.49lem} by substituting $\delta'=\delta/4$.
\end{proof}
\begin{lem}\label{king:cond2}
Let the $\mean^*$ be the bias of the most-biased arm. If a $\arm_i$ with bias $\mean_i \in [\mean^* - 0.49\varepsilon,\mean^*]$ becomes the king, then the probability that $arm_i$ is ever defeated as a king is at most $\delta/8$.	
\end{lem}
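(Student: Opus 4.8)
The plan is to derive this as an immediate specialization of Lemma \ref{king:notlose}. The key observation is that the hypothesis $\mean_i \in [\mean^* - 0.49\varepsilon, \mean^*]$ is equivalent to $\mean_i + 0.49\varepsilon \ge \mean^*$, which forces the interval $(\mean_i + 0.49\varepsilon, \mean^*]$ to be empty. By definition of $\mean^*$ as the bias of the most-biased arm, every arm in the stream — in particular every arm arriving after $\arm_i$ becomes king — has bias at most $\mean^*$, hence certainly cannot have bias lying in this empty interval. Therefore the precondition of Lemma \ref{king:notlose}, applied with the current king taken to be $\arm_i$, is vacuously satisfied, and that lemma yields that the probability $\arm_i$ is ever defeated as king is at most $\delta/8$.

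Alternatively — and this is the route I would actually write out, since it is self-contained — one can argue directly from Lemma \ref{0.49lem} and Lemma \ref{assadi:kinglem}. For any arm $\arm_j$ arriving after $\arm_i$ has become king, we have $\mean_j - \mean_i \le \mean^* - \mean_i \le 0.49\varepsilon$, so Lemma \ref{0.49lem} gives $\mathbb{P}(\widehat{\mean}_j \ge \widehat{\mean}_i + 0.495\varepsilon) \le 2 e^{-\ln(4/\delta)\cdot r_\ell}$ at each level $\ell$ of the challenge subroutine; that is, $\arm_j$ fails to lose to $\arm_i$ at level $\ell$ with probability at most $2 e^{-\ln(4/\delta)\cdot r_\ell}$. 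Invoking Lemma \ref{assadi:kinglem} with $\delta' = \delta/4$ then bounds the probability that $\arm_i$ is ever defeated by any incoming arm by $\delta'/2 = \delta/8$.

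The step requiring the most care is a purely bookkeeping one: verifying that Lemma \ref{king:notlose} (respectively Lemma \ref{assadi:kinglem}) is genuinely applicable when the king in question is an arm that became king partway through the stream, rather than the initial king. This is fine because the statements and proofs of those lemmas only reference the bias structure of the arms arriving \emph{after} the king is installed and the budget-refresh behaviour of the \texttt{while} loop, both of which are unaffected by how $\arm_i$ was chosen as king; so the argument transfers verbatim. I do not anticipate any genuine analytic obstacle here — the content of the lemma is entirely inherited from the earlier lemmas, and the only new ingredient is the arithmetic fact $\mean_i \ge \mean^* - 0.49\varepsilon \Rightarrow \mean^* - \mean_i \le 0.49\varepsilon$.
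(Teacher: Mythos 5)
Your proposal is correct and takes essentially the same route as the paper, whose entire proof is the observation that the statement follows directly from Lemma \ref{king:notlose}: your remark that $\mean_i \ge \mean^* - 0.49\varepsilon$ makes the interval $(\mean_i + 0.49\varepsilon, \mean^*]$ empty is exactly the (implicit) reason that lemma applies vacuously. Your second, self-contained route via Lemmas \ref{0.49lem} and \ref{assadi:kinglem} with $\delta' = \delta/4$ is just an unpacking of the paper's own proof of Lemma \ref{king:notlose}, so it is the same argument in expanded form.
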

\begin{proof} 
This follows directly from Lemma \ref{king:notlose}.
\end{proof}
\begin{cor}
If the input stream is the counter example for the algorithm mentioned in the Section \ref{assadi_type}, then the probability that Algorithm \vis{\ref{algo:new_algo}} returns a non-$\varepsilon$-best arm is at most $\delta/8$.
\end{cor}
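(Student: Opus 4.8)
The plan is to exploit the very special structure of the adversarial instance from Section~\ref{assadi_type}. There the arms arrive in the stream as $\arm_1,\arm_2,\dots,\arm_n$ with means $\mean_i=\mean_1-(i-1)\tfrac{\varepsilon}{n-2}$ and $\mean_1=\tfrac12$, so the means are \emph{strictly decreasing} in order of arrival. The first thing I would observe is that this forces $\arm_1$, the first arm to arrive, to be the unique best arm, and $\mean^*=\mean_1=\tfrac12$. Since the initialization of Algorithm~\ref{algo:new_algo} makes the first available arm the king, at the start of the stream we have $\arm_{king}=\arm_1$ with $\mean_{king}=\mean^*$.

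With this in hand, the corollary should fall out of Lemma~\ref{king:notlose} (or, equivalently, Lemma~\ref{king:cond2}). Applying Lemma~\ref{king:notlose} to the king $\arm_1$, the set of biases that future arms must avoid in order for the king to be safe is $(\mean_{king}+0.49\varepsilon,\mean^*]=(\mean^*+0.49\varepsilon,\mean^*]$, which is empty because $0.49\varepsilon>0$. Hence no arm of the stream can lie in this interval, the hypothesis of the lemma is satisfied vacuously, and it yields that $\arm_1$ is ever defeated as king with probability at most $\delta/8$. (Alternatively, Lemma~\ref{king:cond2} applies directly since $\arm_1$ has bias $\mean^*\in[\mean^*-0.49\varepsilon,\mean^*]$ and becomes the king.)

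Finally, on the complementary event, which has probability at least $1-\delta/8$, the king is never changed throughout the stream, so Algorithm~\ref{algo:new_algo} returns $\arm_1$; and since $\arm_1$ is the best arm it is in particular an $\varepsilon$-best arm. Therefore a non-$\varepsilon$-best arm is returned with probability at most $\delta/8$, as claimed.

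I do not expect a genuine obstacle here: the only points needing care are that the adversarial construction really does place the best arm first in the stream (so that the initial king is globally optimal), and that Lemma~\ref{king:notlose} may legitimately be invoked with an empty forbidden interval. Both are immediate from the definitions, which is precisely why the statement is phrased as a corollary rather than a standalone lemma.
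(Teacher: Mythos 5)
Your proposal is correct and matches the paper's intended argument: the corollary is stated without an explicit proof precisely because, in the adversarial stream, the first arm is the unique best arm with mean $\mean^*$, it is made king at initialization, and Lemma \ref{king:cond2} (equivalently Lemma \ref{king:notlose} with an empty forbidden interval $(\mean^*+0.49\varepsilon,\mean^*]$) bounds the probability that it is ever dethroned by $\delta/8$. Your handling of the vacuous hypothesis and the conclusion that the returned king is then the best (hence $\varepsilon$-best) arm is exactly the paper's reasoning.
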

\subsection{Random Order Arrival}
Let the number of arms $n$ in the input set of arms be very large such that the distribution of means of the input set of arms becomes sort of continuous. Let this distribution have a P.D.F $f(x)$. Let $\mean^*$ be the mean of the best arm. 
We choose a random permutation of our input and send it as an input stream to Algorithm \ref{algo:new_algo}.
\begin{lem}
Under random order arrival, probability that Algorithm \ref{algo:new_algo} returns an $\varepsilon$-best arm is at least 
\begin{align*}
\inf_{p'\in[\mean^*-0.99\varepsilon,\mean^*-0.5\varepsilon]}\Bigg\{(1-\delta)\cdot
\bigg(\frac{\int_{\min\{p'+0.01\varepsilon,\mean^*-0.5\varepsilon\}}^{\mean^*-0.5\varepsilon} f(x) \frac{\int_{x+0.5\varepsilon}^{\mean^*} f(x)dx}{\int_{x+0.49\varepsilon}^{\mean^*} f(x) dx}dx}{\int_{p'}^{\mean^*} f(x)dx}
+\frac{\int_{\mean^*-0.49\varepsilon}^{\mean^*} f(x)dx}{\int_{p'}^{\mean^*} f(x)dx}\bigg)\Bigg\}.
\end{align*}
\end{lem}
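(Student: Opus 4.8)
The plan is to track the \textbf{king} and condition on a pivot $p'$ that lower-bounds the mean of the first arm ever to become king with mean in the near-optimal window $[\mean^*-0.99\varepsilon,\mean^*]$. First I would note that such a king appears with probability at least $1-\delta/8$: when $\arm_{i^*}$ arrives, the current king either already has mean $>\mean^*-0.5\varepsilon$ (hence lies in the window), or is at least $0.5\varepsilon$ worse, in which case Lemma \ref{king:cond1} promotes $\arm_{i^*}$. Call the first in-window king $\arm^{(1)}$ and $x:=\mean_{\arm^{(1)}}$, so $x\ge p'\ge\mean^*-0.99\varepsilon$. Two structural facts drive the rest: (a) with high probability the king's mean is non-decreasing and every replacement raises it by more than $0.49\varepsilon$ (the contrapositive of Lemma \ref{0.49lem}, made uniform over all challengers via Lemma \ref{king:notlose}); and (b) a single jump of size $\ge0.5\varepsilon$ from anywhere in the window already lands in the ``locked'' region $[\mean^*-0.49\varepsilon,\mean^*]$, which by Lemma \ref{king:cond2} is never left. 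So only a chain of at most two kings past $\arm^{(1)}$ must be tracked; at most $O(1)$ of the per-invocation $\delta/8$ failure terms are charged, and they sum to less than $\delta$ --- this gives the $(1-\delta)$ prefactor.

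Next, condition on $p'$. By exchangeability of the random arrival order together with the $n\to\infty$ limit (so that deleting finitely many arms and passing to the suffix after $\arm^{(1)}$ emerges is asymptotically negligible, consistent with the Glivenko--Cantelli remark), the conditional density of $x$ given $x\ge p'$ is $f(x)/\!\int_{p'}^{\mean^*}f(y)\,dy$. I would then split on $x$. If $x\ge\mean^*-0.49\varepsilon$: Lemma \ref{king:cond2} says $\arm^{(1)}$ is never dethroned, so the output is $\arm^{(1)}$ and $\mean^*-x\le0.49\varepsilon<\varepsilon$; integrating the conditional density over this range gives the second term. If $x\le\mean^*-0.5\varepsilon$: every arm arriving before the first arm $C$ with $\mean_C>x+0.49\varepsilon$ fails to dethrone $\arm^{(1)}$ (Lemma \ref{king:notlose}), so $\arm^{(1)}$ is still king when $C$ arrives; and if moreover $\mean_C\ge x+0.5\varepsilon$ --- which by exchangeability has conditional probability $\frac{\int_{x+0.5\varepsilon}^{\mean^*}f(y)\,dy}{\int_{x+0.49\varepsilon}^{\mean^*}f(y)\,dy}$, the inner ratio in the first term --- then Lemma \ref{king:cond1} makes $C$ king, and $\mean_C\ge x+0.5\varepsilon\ge\mean^*-0.49\varepsilon$ locks it in by Lemma \ref{king:cond2}, again yielding an $\varepsilon$-best output. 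Integrating over $x\in[\min\{p'+0.01\varepsilon,\mean^*-0.5\varepsilon\},\mean^*-0.5\varepsilon]$ gives the first term; the sliver $x\in(\mean^*-0.5\varepsilon,\mean^*-0.49\varepsilon)$ and everything below the lower limit are dropped, which only weakens the bound. Adding the two contributions and the $(1-\delta)$ factor gives the bound for each fixed $p'$.

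Finally, $p'$ is unknown but always lies in $[\mean^*-0.99\varepsilon,\mean^*]$; for $p'\ge\mean^*-0.5\varepsilon$ the first term vanishes and the bound is $(1-\delta)$ times the second term, which is non-decreasing in $p'$, hence at least its value at $p'=\mean^*-0.5\varepsilon$. So the infimum over $p'\in[\mean^*-0.99\varepsilon,\mean^*-0.5\varepsilon]$ is a valid lower bound, which is the claim. The main obstacle is making fact (a) and the exchangeability claim precise at once: showing that, conditioned on the pre-window trajectory, the first in-window king's mean really is $f$ truncated to $[p',\mean^*]$ for the appropriate random $p'$, and that the rare events in which a ``marginal'' challenger (gap in $(0.49\varepsilon,0.5\varepsilon)$) dethrones a king are absorbable into the $\delta$-budget instead of proliferating with the number of kings --- the $n\to\infty$ passage is precisely what makes the truncated-$f$ description exact.
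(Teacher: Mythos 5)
Your skeleton is the paper's proof: condition on the history before the first arm with mean in $[\mean^*-0.99\varepsilon,\mean^*]$ arrives, split on whether the first relevant arm lands in the locked zone $[\mean^*-0.49\varepsilon,\mean^*]$ or in $[p'+0.01\varepsilon,\mean^*-0.5\varepsilon)$, use Lemma \ref{king:notlose} until the first arm exceeding the current king by more than $0.49\varepsilon$ arrives, bet on that arm in fact exceeding it by $0.5\varepsilon$ (your ratio $\int_{x+0.5\varepsilon}^{\mean^*}f\big/\int_{x+0.49\varepsilon}^{\mean^*}f$ is exactly the paper's event $\mathcal{B}_1$), lock the new king via Lemma \ref{king:cond2}, charge a constant number of $\delta/8$ failures, drop the unfavorable slivers, and take the infimum over $p'$.

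The gap is the one you flag yourself, and your proposed repair is not the right one. Your structural fact (a) --- that with high probability every change of king raises its mean by more than $0.49\varepsilon$ --- does not follow from Lemmas \ref{0.49lem} and \ref{king:notlose}: those only control challengers whose gap to the king is at most $0.49\varepsilon$ or at least $0.5\varepsilon$, and say nothing about a ``marginal'' challenger with gap in $(0.49\varepsilon,0.5\varepsilon]$, which can win levels with probability bounded away from $0$, drain the budget, and precipitate a dethronement that no lemma controls; such events are \emph{not} rare and cannot be ``absorbed into the $\delta$-budget''. The paper never proves anything like (a); it conditions on the arrival order and simply forfeits every ordering in which a marginal-band arm precedes a decisive one --- that forfeiture is precisely why the bound carries the ratio factor rather than being $1-\delta$. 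Relatedly, the truncated-$f$ density is not the law of ``the first arm to become king inside the window'' (that law is contaminated by the challenge outcomes); in the paper it is the law of the mean of the first arm to \emph{arrive} with mean in $[p',\mean^*]$, where $p':=\max\{\mean^*-0.99\varepsilon,\,\mean_{king}+0.49\varepsilon\}$ is fixed by conditioning on the prefix before any window arm arrives and on the identity (and survival, one more $\delta/8$ charge) of the king at that moment --- a pure exchangeability statement --- and the $+0.01\varepsilon$ margin in the lower integration limit is exactly what guarantees the $0.5\varepsilon$ gap needed for Lemma \ref{king:cond1} to turn that arriving arm into the king. With these two corrections your argument becomes the paper's proof; as written, the central conditioning step is left open and fact (a) would not close it.
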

\begin{proof}
Let the number of arms in the input stream having their means in the range $[0,\mean^*-0.99\varepsilon)$ be $k$. For all $1\leq \ell\leq k$, 
let $S_\ell$ be the set of all possible input streams containing exactly $\ell$ out of the $k$ arms above. Let $t_\ell$ denote the sequence of first $\ell$ arms to arrive in the stream and let $t_\ell=\{c_1,c_2,\ldots,c_\ell\} \in S_\ell$. Let  $king^\ell$ denote the arm that is the king after the $\ell$-th arm has been processed by the algorithm. Let $king^\ell = \arm_i$ with probability $q_{i}$, $\forall i\in[\ell]$, where $q_{i}\geq 0$ for all $i \in [\ell]$, $\sum_{i=1}^{\ell}q_{i} = 1$. Let $X$ be a random variable such that $X=i$ if and only if the first arm in the stream which has mean in the range $[\mean^*-0.99\varepsilon,\mean^*]$ is the $i$-th arm in the stream. Let $X=\ell+1$. Let us assume that the king is $\arm_i$ (with mean $\mean_i$) just before the $(\ell+1)$-th arm arrives where $i\in[\ell]$. 
If the arms arriving in the stream at position $\ell+1$ and later have means in the range $[0,\mean_i+0.49\varepsilon]$, then from Lemma \ref{king:notlose} we know that $\arm_i$ continues to be the king with probability at least $(1-\delta/8)$. 

Let $p'$=max$\{\mean^*-0.99\varepsilon,\mean_i+0.49\varepsilon\}$. Clearly $\mean^*-0.99\varepsilon\leq p' < \mean^* - 0.5\varepsilon$. Consider the first time an arm with mean in the range $[p',\mean^*]$ arrives in the stream. Let $T_1$ be the set of arms whose biases lie in the range $[p',\mean^*]$ and $T_2$ be the set of arms whose biases lie in the range $[\mean^*-0.49\varepsilon,\mean^*]$. Let $\mathcal{A}_1$ be the event that the first arm $a_1$ from $T_1$ which arrives in the stream belongs to $T_2$. Let $\mathcal{A}_2$ be the event that the first arm $a_1$ from $T_1$ which arrives in the stream does not belong to $T_2$. If $king^\ell = \arm_i$ and the event $\mathcal{A}_1$ occurs then $a_1$ is returned as the king by our Algorithm at the end with a probability of at least $(1-\delta/8)^3$. This happens because when the arm $a_1$ arrives, the king at that time has mean less than $\mean^*-0.99\varepsilon$ with probability at least $(1-\delta/8)$, and due to Lemma \ref{king:cond1}, $a_1$ becomes the new king with probability at least $(1-\delta/8)$.  Due to Lemma \ref{king:cond2}, $a_1$ continues to remain as king with probability at least $(1-\delta/8)$. Hence if the event $\mathcal{A}_1$ occurs then $a_1$ is returned as the king by our Algorithm at the end with a probability of at least $(1-\delta/8)^3$.

Now assume that instead the event $\mathcal{A}_2$ has occurred. Let $\mathcal{B}_{a_2}$ be the event that the first arm $a_2$ from the set $T_1$ to arrive in the stream has a mean $\mean_{a_2}$. Let us assume that $\mathcal{B}_{a_2}$ has occurred and $\mean_{a_2}$ belongs to the range $[p'+0.01\varepsilon,\mean^*-0.5\varepsilon)$. Due to Lemma \ref{king:cond1}, $a_2$ becomes the king with probability at least $(1-\delta/8)$. If the means of the arms coming to stream after $a_2$ belongs to the range $[0,\mean_{a_2}+0.49\varepsilon]$, then $a_2$ continues to be the king with probability at least $(1-\delta/8)$. If the first arm $a_3$ arriving in the stream with mean in the range $[\mean_{a_2}+0.5\varepsilon,\mean^*]$ comes before the first arm $a_4$ arriving in the stream with bias in the range $(\mean_{a_2}+0.49\varepsilon,\mean_{a_2}+0.5\varepsilon]$, then $a_3$ becomes the king with probability at least $(1-\delta/8)$ and continues to remain as the king with probability at least $(1-\delta/8)$. Let us denote this event of $a_3$ coming before $a_4$ by $\mathcal{B}_1$.\\
Note that if $X=1$, then we can repeat the above analysis by considering $p' = \mean^*-0.99\varepsilon$. Let $\mathcal{C}_1$ be the event that $\varepsilon$-best arm is returned by the Algorithm and let $\mathcal{C}_2$ be the event that an arm with mean in the range $[\mean^*-0.49\varepsilon,\mean^*]$ is returned by the algorithm.

\begin{align*}
&\mathbb{P}[\mathcal{C}_1|X=\ell+1,king^\ell=c_i,t_\ell=\{c_1,\ldots,c_\ell\}]\\
&\hspace{1cm}\geq  \mathbb{P}[\mathcal{C}_2|\mathcal{A}_1,X=\ell+1,king^\ell=c_i,t_\ell=\{c_1,\ldots,c_\ell\}]\cdot \mathbb{P}[\mathcal{A}_1|X=\ell+1,king^\ell=c_i,t_\ell=\{c_1,\ldots,c_\ell\}]\\
&\hspace{1.5cm}+ \sum_{a_2:p_{a_2}\in [p'+0.01\varepsilon,p-0.5\varepsilon)}\Big(\mathbb{P}[\mathcal{C}_2|\mathcal{B}_1,\mathcal{B}_{a_2},X=\ell+1,king^\ell=c_i,t_\ell=\{c_1,\ldots,c_\ell\}]\\
&\hspace{1.5cm}\cdot \mathbb{P}[\mathcal{B}_1|\mathcal{B}_{a_2},X=\ell+1,king^\ell=c_i,t_\ell=\{c_1,\ldots,c_\ell\}] \cdot \mathbb{P}[\mathcal{B}_{a_2}|X=\ell+1,king^\ell=c_i,t_\ell=\{c_1,\ldots,c_\ell\}]\Big)\\
&\hspace{1cm} \gtrapprox (1-\delta/8)^3\cdot \frac{\int_{p-0.49\varepsilon}^{p} f(x)dx}{\int_{p'}^{p} f(x)dx}
+\int_{\min\{p'+0.01\varepsilon,p-0.5\varepsilon\}}^{p-0.5\varepsilon} (1-\delta/8)^5 \cdot \Bigg(\frac{\int_{x+0.5\varepsilon}^{p} f(x)dx}{\int_{x+0.49\varepsilon}^{p} f(x) dx}\cdot \frac{f(x)dx}{\int_{p'}^{p} f(x)dx}\Bigg)\\
&\hspace{1cm}= (1-\delta)\cdot\Bigg(\frac{\int_{\min\{p'+0.01\varepsilon,p-0.5\varepsilon\}}^{p-0.5\varepsilon} f(x) \frac{\int_{x+0.5\varepsilon}^{p} f(x)dx}{\int_{x+0.49\varepsilon}^{p} f(x) dx}dx}{\int_{p'}^{p} f(x)dx}+\frac{\int_{p-0.49\varepsilon}^{p} f(x)dx}{\int_{p'}^{p} f(x)dx}\Bigg).\\
\end{align*}
Similarly, if $X=1$ then we have the following:
\begin{align*}
&\mathbb{P}[\mathcal{C}_1|X=1]
\gtrapprox (1-\delta)\cdot\Bigg(\frac{\int_{p-0.98\varepsilon}^{p-0.5\varepsilon} f(x) \frac{\int_{x+0.5\varepsilon}^{p} f(x)dx}{\int_{x+0.49\varepsilon}^{p} f(x) dx}dx}{\int_{p-0.99\varepsilon}^{p} f(x)dx}
+\frac{\int_{p-0.49\varepsilon}^{p} f(x)dx}{\int_{p-0.99\varepsilon}^{p} f(x)dx}\Bigg).\\
\end{align*}
\begin{align*}
&\mathbb{P}[\mathcal{C}_1|X=\ell+1,t_\ell =\{c_1,\ldots,c_\ell\}]\\ 
&\hspace{1cm}= \sum_{i=1}^{\ell}\mathbb{P}[king^\ell=c_i|X=\ell+1,t_\ell=\{c_1,\ldots,c_\ell\}]\cdot \mathbb{P}[\mathcal{C}_1|X=\ell+1,king^\ell=c_i,t_\ell=\{c_1,\ldots,c_\ell\}]\\
&\hspace{1cm}=\sum_{i=1}^{\ell}q_{c_i}\cdot \mathbb{P}[\mathcal{C}_1|X=\ell+1,king^\ell=c_i,t_\ell=\{c_1,\ldots,c_\ell\}]\\
&\hspace{1cm}\geq \min_{i\in[\ell]} \mathbb{P}[\mathcal{C}_1|X=\ell+1,king^\ell=c_i,t_\ell=\{c_1,\ldots,c_\ell\}].\\
\end{align*}
We have, 
\begin{align*}
\mathbb{P}[\mathcal{C}_1|X=\ell+1] &= \sum_{(c_i)_{i\in[\ell]} \thicksim \mathcal{S}_\ell}\mathbb{P}[t_\ell=\{c_1,\ldots,c_\ell\}|X=\ell+1]\cdot \mathbb{P}[\mathcal{C}_1|X=\ell+1,t_\ell=\{c_1,\ldots,c_\ell\}]\\
&\geq \min_{(c_i)_{i\in[\ell]}\thicksim \mathcal{S}_\ell}\mathbb{P}[\mathcal{C}_1|X=\ell+1,t_\ell=\{c_1,\ldots,c_\ell\}].\\
\end{align*}
\begin{align*}
\mathbb{P}[\mathcal{C}_1] &= \sum _{i\in[k+1]}\mathbb{P}[X=i]\cdot \mathbb{P}[\mathcal{C}_1|X=i]\\
& \geq \min_{i\in[k+1]}\mathbb{P}[\mathcal{C}_1|X=i]\\
 &\geq \min\Big\{\mathbb{P}[\mathcal{C}_1|X=1],
 \min_{(c_i)_{i\in[\ell]}\thicksim \mathcal{S}_\ell}\min_{i\in[\ell]} \mathbb{P}[\mathcal{C}_1|X=\ell+1,king^\ell=c_i,t_\ell=\{c_1,\ldots,c_\ell\}]\Big\}\\
&\gtrapprox \inf_{p'\in[p-0.99\varepsilon,p-0.5\varepsilon]}\Bigg\{(1-\delta)\cdot\Bigg(\frac{\int_{\min\{p'+0.01\varepsilon,p-0.5\varepsilon\}}^{p-0.5\varepsilon} f(x) \frac{\int_{x+0.5\varepsilon}^{p} f(x)dx}{\int_{x+0.49\varepsilon}^{p} f(x) dx}dx}{\int_{p'}^{p} f(x)dx}+\frac{\int_{p-0.49\varepsilon}^{p} f(x)dx}{\int_{p'}^{p} f(x)dx}\Bigg)\Bigg\}. 
\end{align*}
\end{proof}
\subsection{Performance under various distributions}
The following distributions are truncated distributions and the support is (0,1]. Note that the following calculations are made assuming $\varepsilon=\frac{1}{10}$. Lower bound on the probability that Algorithm \ref{algo:new_algo} returns an $\varepsilon$-best arm, for various truncated distributions like Normal, lognormal, exponential, beta, gamma, Weibull, and uniform distribution is at least $0.9(1-\delta)$. Note that as $\varepsilon$ tends to 0, the distributions mentioned earlier behave similar to uniform distribution on the range $[1-\varepsilon,1]$. So the lower bound of $\mathbb{P}$[Algorithm \ref{algo:new_algo} returns an $\varepsilon$-best arm] tends to $0.927(1-\delta)$ which is the lower bound on this probability for the uniform distribution and it does not change with $\varepsilon$.

\begin{table}[ht!]
	\begin{center} 
		\begin{tabular}{ |c|c|c| }
			\hline
			\multirow{2}{*}{Distribution}& \multirow{2}{*}{$f(x)$} & Probability that Algo. 4\\
			& &returns an $\varepsilon$-best arm \\ 
			\hline
			Uniform & 1 & $\geq 0.927(1-\delta)$\\ 
			\hline
			Normal & $\frac{e^{-x^2}}{\sqrt{\pi}}$ & $\geq 0.924(1-\delta)$ \\ 
			\hline
			Gamma & $\frac{x^{-0.5}}{\Gamma(0.5)}\cdot e^{-x}$&$\geq 0.924(1-\delta)$\\
			\hline
			Beta & $\frac{x^{9}\cdot\Gamma(11)}{\Gamma(10)}$&$\geq 0.942(1-\delta)$\\
			\hline
			Exponential & $2e^{-2x}$ &$\geq 0.923(1-\delta)$\\
			\hline
			\multirow{2}{*}{Weibull}&$x<0\ :\ 0$&\\
			&$x \geq 0:\ 2xe^{-x^{2}}$&{$\geq 0.925(1-\delta)$}\\
			\hline
			\multirow{2}{*}{Lognormal}&$\frac{1}{x\cdot\sqrt{3\pi}}\cdot$&\\
			&$e^{-(\ln(x))^2/3}$&{$\geq 0.925(1-\delta)$}\\
			\hline
			
		\end{tabular}
	\end{center}
\end{table}

\section{Experiments}\label{exp:dist123}
We now 
provide detailed experimental evaluation of Algorithm \ref{algo:new_algo} and show that it returns an $\varepsilon$-best arm with high confidence even when we reduce the number of samples $(s_\ell)$ per arm at each level $\ell$ by a factor of 40000.
We ran the algorithm on $R=100$ different instances. For each instance, the means of each of the $n=10^5$ arms were sampled from a distribution $\mathcal{D}$ with support $(0,1]$, mean $=\mu$, and variance $=\sigma^2$. Note that if $\mathcal{D}$ is a truncated distribution then $\mu$ and $\sigma^2$ denote the mean and variance of the non-truncated version of $\mathcal{D}$, denoted $\mathcal{D}^\prime$. Also we set $C=117, \varepsilon={1}/{10}, \delta={1}/{10}$. For $i \in [n]$, let $\mean_i$ be the mean obtained for $\arm_i$. The reward distribution of $\arm_i$ is then Bernoulli$(\mean_i)$. 

We next provide the details of the distributions corresponding to the figures in Figure \ref{fig:experiments in appendix}. Note that we consider the truncated version, $\mathcal{D}$, of the following distributions, $\mathcal{D}^\prime$, supported on $(0,1]$.
\begin{enumerate}
	\item Figure (\ref{fig2a}): $\mathcal{D}^\prime = \text{Beta}(\alpha = 10, \beta = 1)$
	\item Figure (\ref{fig2b}): $\mathcal{D}^\prime = \text{Exp}(\lambda = 2)$ 
	\item Figure (\ref{fig2c}): $\mathcal{D}^\prime = \text{Gamma}(k = 0.5,\theta = 1)$
	\item Figure (\ref{fig2d}): $\mathcal{D}^\prime = \mathcal{N}(\mu=1/2,\sigma^2=1/10)$
	\item Figure (\ref{fig2e}): $\mathcal{D}^\prime = \mathcal{N}(\mu=1/2,\sigma^2=1/5)$
	\item Figure (\ref{fig2f}): $\mathcal{D}^\prime = \mathcal{N}(\mu=1/2,\sigma^2=1/100)$
	\item Figure (\ref{fig2g}): $\mathcal{D}^\prime = \mathcal{N}(\mu=1/2,\sigma^2=1/20)$
	\item Figure (\ref{fig2h}): $\mathcal{D}^\prime = \text{lognormal}(\mu=0,\sigma^2=3/2)$
	\item Figure (\ref{fig2i}): $\mathcal{D}^\prime = \text{Weibull}(\lambda = 1, k = 2)$
\end{enumerate}
In all these cases, we almost always return an arm with mean within at most 0.05 ($<1/10=\varepsilon$) from the mean of the best-arm.
\begin{figure}
	\centering
	\begin{subfigure}[b]{0.30\columnwidth}
		\centering
		\includegraphics[scale=0.3]{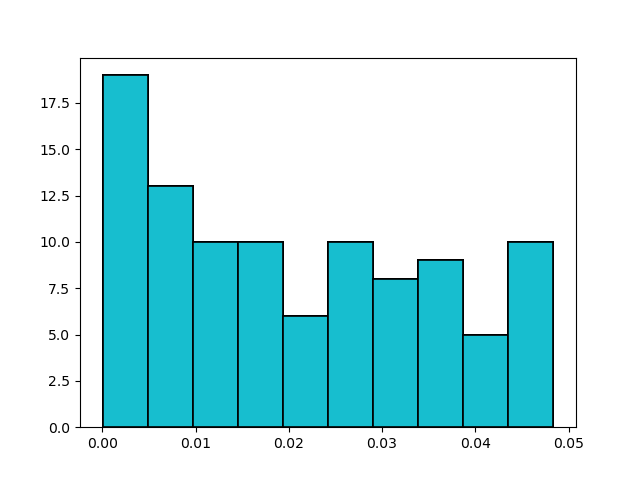}
		\caption[RandomInstance]%
		{{\small Instance 1}}    
		\label{fig2a}
	\end{subfigure}
	\hfill
	\begin{subfigure}[b]{0.30\columnwidth}  
		\centering 
		\includegraphics[scale=0.3]{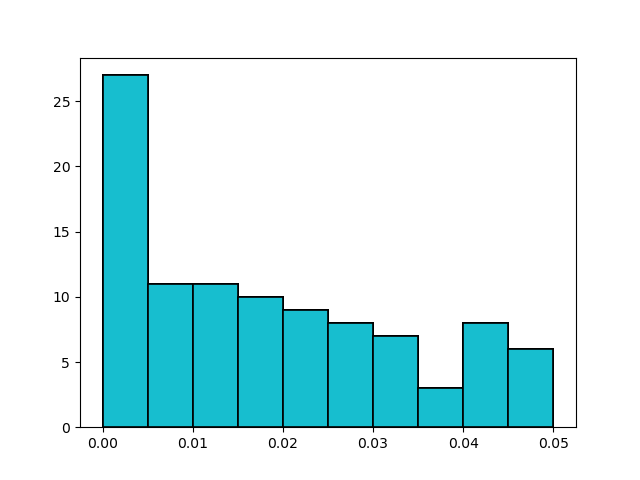}
		\caption[]%
		{{\small Instance 2}}    
		\label{fig2b}
	\end{subfigure}
	\hfill
	\begin{subfigure}[b]{0.30\columnwidth}   
		\centering 
		\includegraphics[scale=0.3]{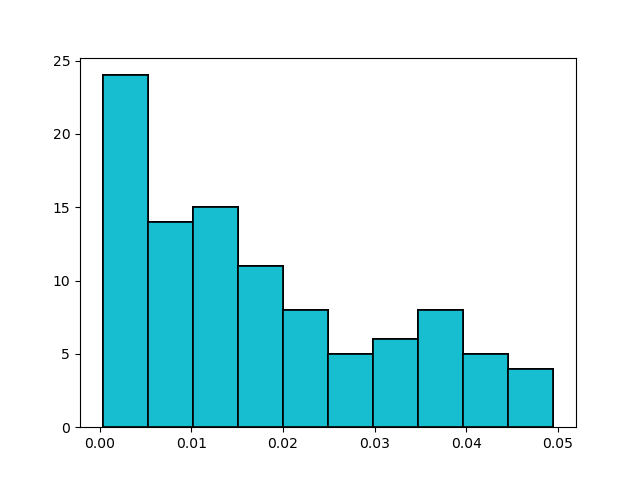}
		\caption[]%
		{{\small Instance 3}}    
		\label{fig2c}
	\end{subfigure}
	\hfill
	\begin{subfigure}[b]{0.30\columnwidth}   
		\centering 
		\includegraphics[scale=0.3]{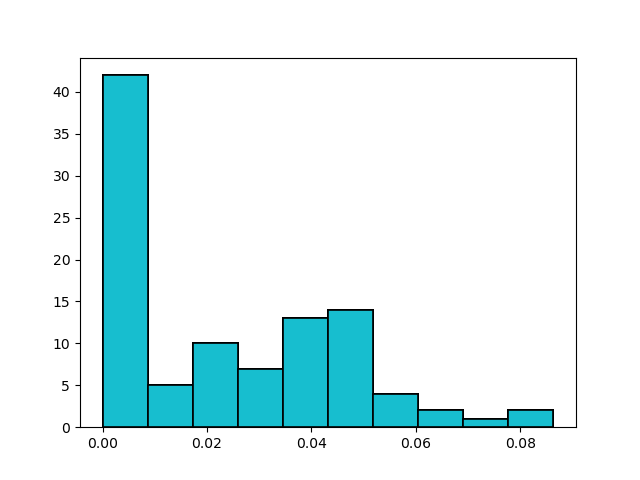}
		\caption[]%
		{{\small Instance 4}}    
		\label{fig2d}
	\end{subfigure}
	\hfill
	\begin{subfigure}[b]{0.30\columnwidth}   
		\centering 
		\includegraphics[scale=0.3]{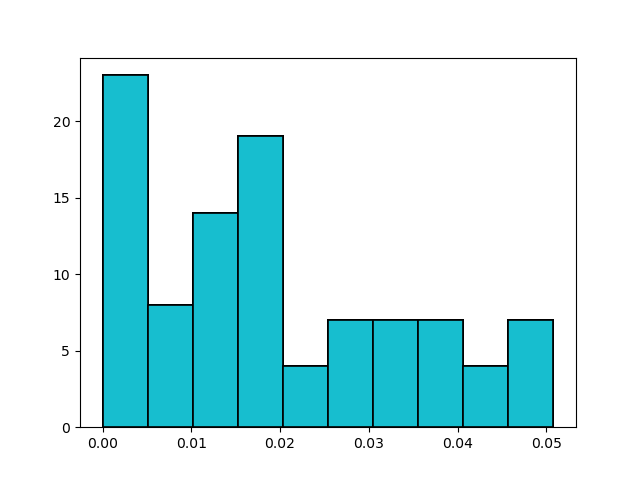}
		\caption[]%
		{{\small Instance 5}}    
		\label{fig2e}
	\end{subfigure}
	\hfill
	\begin{subfigure}[b]{0.30\columnwidth}   
		\centering 
		\includegraphics[scale=0.3]{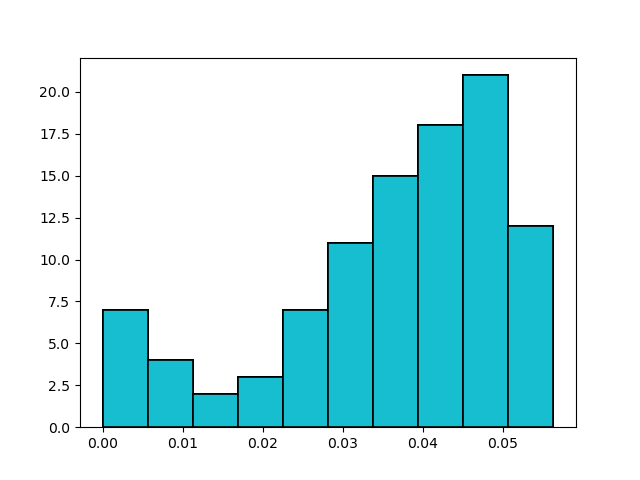}
		\caption[]%
		{{\small Instance 6}}    
		\label{fig2f}
	\end{subfigure}
	\hfill
	\begin{subfigure}[b]{0.30\columnwidth}   
		\centering 
		\includegraphics[scale=0.3]{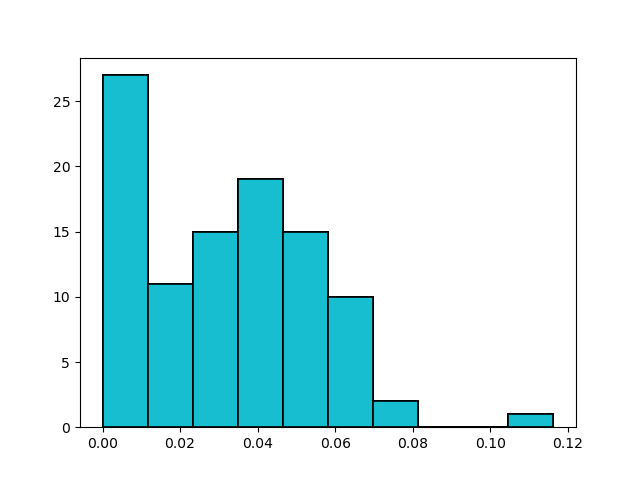}
		\caption[]%
		{{\small Instance 7}}    
		\label{fig2g}
	\end{subfigure}
	\hfill
	\begin{subfigure}[b]{0.30\columnwidth}   
		\centering 
		\includegraphics[scale=0.3]{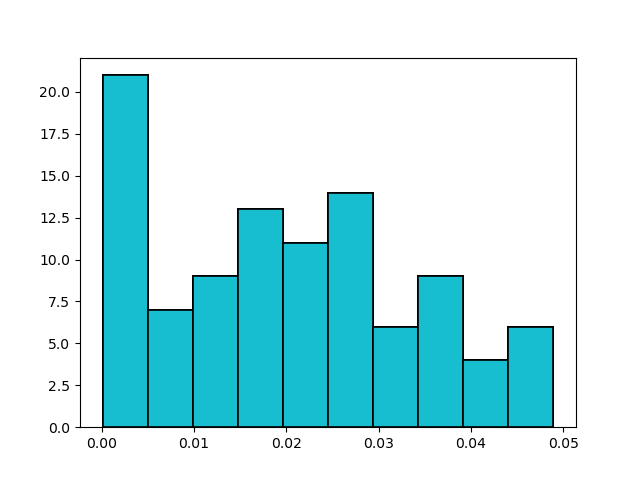}
		\caption[]%
		{{\small Instance 8}}    
		\label{fig2h}
	\end{subfigure}
	\hfill
	\begin{subfigure}[b]{0.30\columnwidth}   
		\centering 
		\includegraphics[scale=0.3]{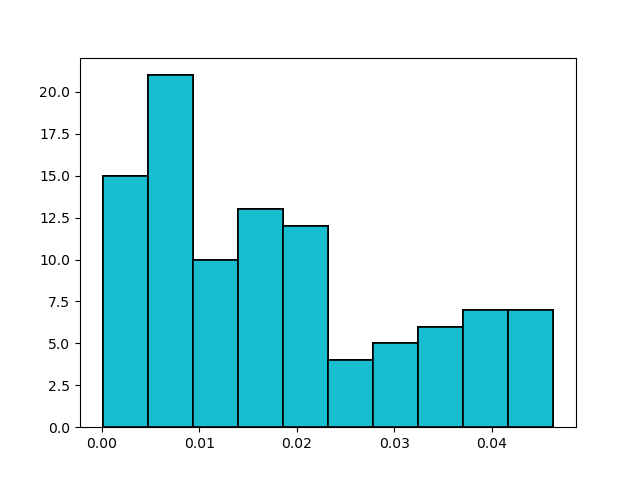}
		\caption[]%
		{{\small Instance 9}}    
		\label{fig2i}
	\end{subfigure}
	
	\caption[]
	{\small \textbf{X-axis}: Gap between the means of the best arm and arm returned by Algorithm \ref{algo:new_algo}, \textbf{Y-axis}: Count of such arms.} 
	\label{fig:experiments in appendix}
\end{figure}
\end{document}